
\documentclass{article}

\usepackage{microtype}
\usepackage{graphicx}
\usepackage{subcaption}
\usepackage{booktabs} 






\usepackage{amsmath}
\usepackage{amssymb}
\usepackage{mathtools}
\usepackage{amsthm}
\usepackage{dsfont}

\usepackage{booktabs}
\usepackage{multirow}
\usepackage{siunitx}
\usepackage[table,xcdraw]{xcolor}
\usepackage{graphicx}
\graphicspath{{figure/}}
\usepackage{subcaption}
\usepackage{wrapfig}

\usepackage{algorithmic}

\usepackage{url}
\usepackage{minitoc}
\usepackage{underscore}
\usepackage{aliascnt}

\usepackage{hyperref}
\usepackage[capitalize,noabbrev]{cleveref}
\usepackage[accepted]{icml2026}

\usepackage{amsmath,amsfonts,bm}





\def\eqref#1{equation~\ref{#1}}









\def\1{\bm{1}}










\DeclareMathAlphabet{\mathsfit}{\encodingdefault}{\sfdefault}{m}{sl}
\SetMathAlphabet{\mathsfit}{bold}{\encodingdefault}{\sfdefault}{bx}{n}













\theoremstyle{plain}
\newtheorem{theorem}{Theorem}[section]
\crefname{theorem}{Theorem}{Theorems}

\newaliascnt{proposition}{theorem}
\newtheorem{proposition}[proposition]{Proposition}
\aliascntresetthe{proposition}
\crefname{proposition}{Proposition}{Propositions}

\newaliascnt{lemma}{theorem}
\newtheorem{lemma}[lemma]{Lemma}
\aliascntresetthe{lemma}
\crefname{lemma}{Lemma}{Lemmas}

\newaliascnt{corollary}{theorem}

\aliascntresetthe{corollary}
\crefname{corollary}{Corollary}{Corollaries}

\theoremstyle{definition}
\newaliascnt{definition}{theorem}

\aliascntresetthe{definition}
\crefname{definition}{Definition}{Definitions}

\newaliascnt{assumption}{theorem}

\aliascntresetthe{assumption}
\crefname{assumption}{Assumption}{Assumptions}

\theoremstyle{remark}
\newaliascnt{remark}{theorem}

\aliascntresetthe{remark}
\crefname{remark}{Remark}{Remarks}

\usepackage{etoolbox} 

\crefname{appendix}{Appendix}{Appendices}
\Crefname{appendix}{Appendix}{Appendices}

\pretocmd{\appendix}{%
  \crefalias{section}{appendix}%
  \crefalias{subsection}{appendix}%
  \crefalias{subsubsection}{appendix}%
}{}{}





\newcommand{\method}{Provable Training Data Identification}
\newcommand{\abbr}{PTDI}

\newcommand{\metricfull}{false identification rate}
\newcommand{\metricabbr}{FIR}

\newcommand{\fdpfull}{false identification proportion}
\newcommand{\fdpabbr}{FIP}
\icmltitlerunning{Provable Training Data Identification for Large Language Models}
\begin{document}

\twocolumn[
  \icmltitle{Provable Training Data Identification for Large Language Models}
    


  \icmlsetsymbol{equal}{*}

\begin{icmlauthorlist}
  \icmlauthor{Zhenlong Liu}{sustech,sii}
  \icmlauthor{Hao Zeng}{sustech}
  \icmlauthor{Weiran Huang}{sii,sjtu}
  \icmlauthor{Hongxin Wei}{sustech}
\end{icmlauthorlist}

\icmlaffiliation{sustech}{Department of Statistics and Data Science, Southern University of Science and Technology}
\icmlaffiliation{sii}{Shanghai Innovation Institute}
\icmlaffiliation{sjtu}{School of Computer Science, Shanghai Jiao Tong University}

\icmlcorrespondingauthor{Hongxin Wei}{weihx@sustech.edu.cn}

  \icmlkeywords{Machine Learning, ICML}

  \vskip 0.3in
]



\printAffiliationsAndNotice{}  

\begin{abstract}
Identifying training data of large-scale models is critical for copyright litigation, privacy auditing, and ensuring fair evaluation. 
However, existing works typically treat this task as an instance-wise identification without controlling the error rate of the identified set, which cannot provide statistically reliable evidence.
In this work, we formalize training data identification as a set-level inference problem and propose \method{} (\textbf{\abbr}), a distribution-free approach that enables provable and strict \metricfull{} control.
Specifically, our method computes conformal p-values for each data point using a set of known unseen data and then develops a novel Jackknife-corrected Beta boundary (JKBB) estimator to estimate the training-data proportion of the test set, which allows us to scale these p-values.
By applying the Benjamini--Hochberg (BH) procedure to the scaled p-values, we select a subset of data points with provable and strict false identification control. 
Extensive experiments across various models and datasets demonstrate that \abbr{} achieves higher power than prior methods while strictly controlling the \metricabbr{}. 
Our implementation code is available at \url{https://github.com/zhenlong-liu/Provable_Training_Data_Identification}
\end{abstract}

\section{Introduction}
The extensive deployment of machine learning models has driven an ongoing demand for large-scale datasets, which has raised significant legal challenges, including copyright disputes \citep{bartz2024, disney2025}, data privacy concerns \citep{GDPR2016misc, CCPA2018report}, and issues of data contamination from evaluation benchmarks \citep{sainz-etal-2023-nlp, balloccu-etal-2024-leak}.
These concerns raise the importance of identifying a specific, well-defined set of data allegedly used in training.
For instance, Strike 3 alleges that Meta infringed on at least 2,396 of its copyrighted films in its lawsuit \citep{strike2025}, a claim with potential statutory damages exceeding \$350 million. To resolve such high-stakes disputes, claims must be supported by credible evidence that strictly controls the risk of false positives.
This underscores the need for methods that provide rigorous statistical guarantees for identifying training data.


To this end, prior studies \citep{shi2024detecting, zhan2024mia, zhang2025finetuning} have developed various methods to detect training data from large language models (LLMs) and vision-language models (VLMs).
These methods typically compute a detection score (e.g., perplexity or entropy) for a given sample and infer whether it was used in training via level-set estimation, without providing any theoretical guarantee.
Moreover, due to the scale and complexity of LLMs/VLMs, instance-wise inference is statistically intractable for providing rigorous evidence for the identified training data \cite{zhang2025position}.
In high-stakes settings such as legal disputes or privacy auditing, this lack of error control limits their practice as credible evidence. 
This motivates us to design a provable method for training data identification that provides rigorous control over the false identification rate (FIR) of the selected set.

In this work, we formally define the problem of training data identification by shifting the focus from instance-wise classification to set-level identification, thereby enabling provable error control.
For this problem, we introduce \method{} (\textbf{\abbr}), a distribution-free method for training-data identification with provable \metricabbr{} control. 
Our approach first computes detection scores for each data point and then constructs valid p-values via conformal inference, assuming the calibration set and test non-members are from the same distribution.
To improve power, we further estimate the training-data proportion in the test set via a Jackknife-corrected Beta boundary (JKBB) estimator, and use it to rescale the p-values before applying the Benjamini--Hochberg procedure \citep{benjamini1995controlling,benjamini2001control}. 
We provide rigorous theoretical guarantees that this data-dependent scaling preserves valid \metricabbr{} control, while substantially increasing the power.
The resulting method is compatible with both black-box outputs and white-box gradient-based scores, and is complementary to existing detection methods, making it a practical tool for training data identification. 

We empirically validate our method through extensive experiments across diverse settings, including pre-training and fine-tuning paradigms for LLMs and VLMs, evaluated on multiple benchmark datasets (e.g., WikiMIA~\cite{shi2024detecting}, ArXivTection~\cite{duarte2024decop}, VL-MIA/Flickr and VL-MIA/DALL-E~\cite{li2024membership}). 
Across all settings, our method reliably identifies training data while maintaining error control, providing strong empirical support for our theoretical guarantees. For instance, on WikiMIA with Pythia-1.4B \citep{biderman2023pythia} at a target \metricabbr{} of 0.05, our method achieves an empirical \metricabbr{} of 0.0494, whereas the approach of \citet{hu2025a} yields 0.1311.
We further demonstrate that our p-value scaling procedure improves power over the vanilla conformal approach.
Specifically, on WikiMIA with GPT-NeoX-20B \citep{black-etal-2022-gpt} at a target \metricabbr{} of 0.5, our method improves power from 0.5177 to 0.8618 using the MIN-K\% detection score \citep{shi2024detecting}. In summary, \abbr{} provides a practical and statistically reliable tool for training data identification in high-stakes scenarios. 



We summarize our contributions as follows:
\begin{enumerate}
    \item We formulate training-data identification as a set-level inference problem and propose \textbf{\abbr{}}, a distribution-free method that achieves rigorous \metricabbr{} control. By leveraging conformal p-values with a novel JKBB estimator based on non-member calibration data, our method applies to arbitrary detection scores. Extensive experiments demonstrate our method achieves consistent and reliable error control.

    \item  We establish theoretical guarantees for \abbr{} by showing consistency of the JKBB estimator at the boundary, which supports the validity of the scaled p-value procedure and ensures \metricabbr{} control.

    \item We extend our method to scenarios where a small set of member data is available. We propose an adjusted moment estimator that leverages this information to provide a consistent estimate of data usage proportion, further enhancing identification power while maintaining the theoretical guarantee.
    
    

    
\end{enumerate}


\section{Background}
\paragraph{Training data detection.} Given a data point $X$ and a target model $\theta$ trained on dataset $\mathcal{D}_{\text{train}}$, training data detection aims to detect whether $X$ is a part of the training set $\mathcal{D_{\text{train}}}$. This problem is an instance of the membership inference attacks (MIAs)~\citep{shokri2017membership}, but generally applied to the LLM/VLM scenarios \citep{carlini2021extracting, zhang-etal-2024-pretraining, shi2024detecting, li2024membership}. This task is typically formulated as a binary classification problem, where the predicted label $\widehat{M} \in \{0,1\}$ indicates whether $X$ is predicted as a training sample ($\widehat{M}=1$) or not ($\widehat{M}=0$). Formally, this prediction is made through level-set estimation:
\begin{equation} \label{eq:infer}
    \widehat{M} = \mathds{1} \{ T(X;\theta) \leqslant \tau \},
\end{equation}
where $T(X; \theta)$ denotes the detection score (e.g., perplexity) calculated from the model $\theta$, and $\tau$ is a threshold determined by a validation set. By convention, a lower detection score $T$ suggests $X$ is more likely to be trained by the target model.

 To provide a concrete understanding of the detection score $T(X;\theta)$, we now introduce a widely-used example. For LLMs, where a data point $X = \{x_1, \dots, x_L\}$ is a text sequence, a common detection score is perplexity 
\citep{li2023estimating}:
\begin{equation} \label{eq:ppl}
    \text{Perplexity}(X; \theta) = \exp[ - \sum_{i=1}^{L} \log p_\theta(x_i \mid x_{<i})],
\end{equation}
where $x_{<i}=(x_1, \dots, x_{i-1})$ and $p_\theta(x_i \mid x_{<i})$ denotes the conditional probability of token $x_i$ given its preceding tokens. A lower perplexity suggests the sequence is more familiar to the model, which is consistent with the sequence being drawn from the model’s training distribution. In the \cref{sec:setup}, we will further consider a variety of detection scores, including Zlib \cite{carlini2021extracting}, M-Entropy \cite{song2021systematic}, MIN-K\% \cite{shi2024detecting} for LLMs, and MaxR\'{e}nyi-K\% \cite{li2024membership} for VLMs.


The above-mentioned training data detection method can only provide a metric for instance-wise classification, failing to provide rigorous guarantees needed in real-world scenarios. Furthermore, Zhang et al. (\citeyear{zhang2025finetuning}) argue that proving whether a model was trained on a specific data point is intractable due to limited access to the training process and prohibitive training costs. This motivates us to develop a method for identifying training data with provable evidence.

\section{Our proposed method}

\subsection{Problem formulation}
\label{sec:problem_statement}
To address the limitation, we shift our focus from instance-wise decisions to set-level inference.
In many practical applications, the objective is not merely to classify single data points but to identify a \textbf{subset of members} from a larger collection. 
For instance, in data contamination research, identifying and removing sets of contaminated data points is crucial for fair model evaluation \citep{dong-etal-2024-generalization, zhu-etal-2024-clean, zhao2024mmlu, eval-harness}. 
Similarly, in copyright litigation, claimants must provide a specific list of infringed works, where the ability to produce a credible set of evidence can have significant financial implications 
Therefore, providing a credible set containing training data is crucial for the training data discovery. 
This naturally leads to a multiple-hypothesis testing formulation, where each candidate sample corresponds to a hypothesis about its membership status, and guarantees must be provided at the level of the selected set rather than individual predictions.

 In this work, we formally define the problem of provable \textbf{training data identification}, where the objective is to construct a selection set from the test data that contains a provable proportion of true member samples. Suppose we have access to a target model $\theta$, a calibration set $\mathcal{D}_{\text{cal}}$ of size $n$  and a test set $\mathcal{D}_{\text{test}} = \{X_{n+j}\}_{j=1}^{m}$ consisting of candidate training samples.
 Here, $M_i \in \{0, 1\}$ denotes the true membership label, where $M_i = 1$ indicates that $X_i$ was used to train $\theta$. Then we formulate the problem within the framework of multiple hypothesis testing:
\begin{equation} \label{eq:multi1}
    H_{0,j}: M_{n+j} = 0, \quad j = 1, \dots, m,
\end{equation}
 Our goal is to select a subset of indices $\mathcal{S} \subseteq \{1,\dots,m \}$ from $\mathcal{D}_{\text{test}}$  such that the \metricfull{} (\metricabbr{}) is controlled at a user-specified level $\alpha \in (0,1)$:
\begin{equation}
    \label{eq:def_fdr}
    \mathrm{\metricabbr{}} =  \mathbb{E} \left[\frac{\sum_{j = 1}^{m} \mathds{1}\{M_{n+j} =0, j \in \mathcal{S}\}}{\max(|\mathcal{S}|, 1)}\right] \leqslant \alpha,
\end{equation}
where the quantity inside the expectation is the \fdpfull{} (\fdpabbr{}),
representing the fraction of false identifications in the selected set $\mathcal{S}$. At this guarantee, we also desire  $\mathcal{S}$ containing true training data points as much as possible, which is quantified by power:
\begin{equation}
    \mathrm{Power} = \mathbb{E}\left[ \frac{ \sum_{j=1}^{m} \mathds{1} \{ j \in \mathcal{S}, M_{n+j} =1 \} }{\max (1, \sum_{j=1}^{m} \mathds{1} \{M_{n+j} =1\} ) } \right].
\end{equation}
It is worth noting that this formulation of training data identification is different from traditional membership inference. The latter focuses on instance-wise classification, and providing theoretical guarantees for this task against large models is often intractable \citep{zhang2025position}. In contrast, our work shifts the focus to a set-level identification, thereby enabling rigorous statistical error control in practice. A detailed discussion is provided in \Cref{sec:mia_dilemma}.

In this paper, we mainly discuss the scenario that the auditor is only able to source data that are confirmed \textbf{non-members} of the training set. The calibration set $\mathcal{D}_{\text{cal}}=\{(X_i, M_i)\}_{i=1}^{n}$ is constructed such that $M_i = 0$ for all $i$.  
This is a widely used assumption \citep{ye2022enhanced, shi2024detecting,zhang2025finetuning} since it can be satisfied by using data generated after the model's training cutoff date (e.g., recent news articles or photos) or by leveraging private, proprietary data not publicly accessible for web scraping (e.g., internal corporate documents or unreleased creative works). We proceed by constructing valid p-values for these hypotheses using the calibration set.

\subsection{\method}
To achieve the training–data identification objective in \Cref{sec:problem_statement}, we derive p-values that are valid under each null hypothesis. For notational convenience, let $T_i = T(X_i; \theta)$ for $i \in \{1, \dots, n+m\}$. We next construct the conformal p-values \citep{VovkNG03, vovk2005algorithmic} for each test point as:
\begin{equation} 
\label{eq:pvalue}
p_j = \frac{1 + \sum_{i=1}^n \mathds{1}\{T_i \leqslant T_{n+j}\}}{n+1}, \ \text{for } j = 1, \dots, m. 
\end{equation}
Conceptually, a smaller score $T_{n+j}$ indicates that $X_{n+j}$ is more likely a training member, resulting in a smaller p-value. To collectively test hypotheses for all test instances with controlled \metricabbr{}, we employ the Benjamini-Hochberg (BH) procedure \citep{benjamini1995controlling}. However, the standard BH procedure is conservative as its theoretical \metricabbr{} bound scales with the proportion of true null hypotheses \citep{storey2002direct}. To improve the power, we introduce scaled p-values, which adjust for the estimated proportion of training data in the target set. The scaled p-value is defined as:
\begin{equation} \label{eq:scaled-p-value}
    \tilde{p}_j =  (1-\hat{\pi}_{\text{test}}) p_j , \ \text{for } j = 1, \dots, m. 
\end{equation}
where  $\hat{\pi}_{\text{test}}$ is an estimate of $\pi_{\text{test}}$, the proportion of training data in the test set (i.e., $\pi_{\text{test}} =\Pr(M_{n+j} =1)$). This estimate is obtained via a data usage proportion estimator $\mathcal{E}$ such that $\hat{\pi}_{\text{test}} = \mathcal{E}(\mathcal{D}_{\text{cal}}, \mathcal{D}_{\text{test}})$. We defer the implementation details of this estimator to \Cref{sec:estimate_data_usage}.

\begin{algorithm}[t!]
\caption{\method{}(\abbr)}
\label{alg:ctds_formal}
\begin{algorithmic}[1]
\REQUIRE Target model $\theta$, calibration data $\mathcal{D}_{\text{cal}}$, test data $\mathcal{D}_{\text{test}}$, \metricabbr{} target $\alpha \in (0,1)$, detection score function $T(\cdot)$, data usage proportion estimator $\mathcal{E}$. 

\STATE Compute detection scores $T_i \leftarrow T(X_i; \theta)$ for all $X_i \in \mathcal{D}_{\text{cal}} \cup \mathcal{D}_{\text{test}}$. 
\STATE Construct p-values $p_j$ as \Cref{eq:pvalue} for $j =1,\dots,m.$
\STATE Obtain the data usage proportion estimate $\hat{\pi}_{\text{test}} \leftarrow \mathcal{E}(\mathcal{D}_{\text{cal}}, \mathcal{D}_{\text{test}})$ 

\STATE Compute scaled p-values: $\tilde{p}_j \leftarrow (1-\hat{\pi}_{\text{test}}) p_j$ for $j = 1, \dots, m$. \label{alg:line:scale_p_value}
\STATE Sort the scaled p-values: $\tilde{p}_{(1)} \leqslant \tilde{p}_{(2)} \leqslant  \dots \leqslant  \tilde{p}_{(m)}$.
\STATE Find $k^* \leftarrow \max \{ k \mid \tilde{p}_{(k)} \leqslant  \frac{k}{m}\alpha \}$.

\IF{$k^*$ exists}
    \STATE \textbf{return} $\mathcal{S} = \{j \mid \tilde{p}_j \leqslant  \frac{k^*}{m}\alpha \}$
\ELSE
    \STATE \textbf{return} $\mathcal{S} = \emptyset$.
\ENDIF
\end{algorithmic}
\end{algorithm}

With these scaled p-values, we then run the BH procedure to obtain the set of identified training data. Specifically, let $\tilde{p}_{(1)} \leqslant \tilde{p}_{(2)} \leqslant \dots \leqslant \tilde{p}_{(m)}$ denote the sorted scaled p-values, the final set is:
\begin{equation}
    \label{eq:bh}
    \mathcal{S} = \{j \mid \tilde{p}_j \leqslant \frac{k^*}{m}\alpha \}, \ \text{where }  k^* =\max \{ k  \mid \tilde{p}_{(k)} \leqslant \frac{k}{m}\alpha \}.
\end{equation}
This procedure determines a data-dependent significance threshold by identifying the largest p-value $\tilde{p}_{(k^*)}$, and then identifies all data points with scaled p-values below this adaptive threshold as significant. The full procedure is detailed in \Cref{alg:ctds_formal}. To complete this procedure, we next introduce a practical method for estimating $\pi_{\mathrm{test}}$.

\subsection{Estimate data usage proportion}
\label{sec:estimate_data_usage}

In this subsection, we detail a boundary density estimation implementation of the data usage proportion estimator $\mathcal{E}$ required by our main procedure in \Cref{alg:ctds_formal}. 
The resulting estimate $\hat{\pi}_{\text{bdy}}$ will be used as $\hat{\pi}_{\text{test}}$ to scale the p-values. 

Recall that the p-values $\{p_j\}_{j=1}^m$ in \Cref{eq:pvalue} are computed by comparing test scores against calibration scores from $\mathcal{D}_{\text{cal}}$. This conformal framework leads to a distribution-free property: regardless of the underlying distribution of the raw scores $T$ (which may vary across different samples or domains), the p-values under the null hypothesis ($M=0$) are guaranteed to be super-uniformly distributed on $[0,1]$. 
This allows us to map heterogeneous detection scores into a unified p-value space for collective analysis. 
Let $f_{\text{test}}(p)$ denote the marginal density of these p-values across the entire test set. Since each test point is either a non-member ($M_{n+j}=0$) or a member ($M_{n+j}=1$), $f_{\text{test}}$ admits a two-component mixture decomposition:
\begin{equation}
\label{eq:pvalue_mixture}
f_{\text{test}}(p) \;=\; (1-\pi_{\text{test}})\, f(p \mid M=0)\;+\;\pi_{\text{test}}\, f(p \mid M=1),
\end{equation}
where $\pi_{\text{test}}=\Pr(M_{n+j}=1)$ is the unknown training data proportion in the test set.

A key observation is that under the null hypothesis, conformal p-values are approximately uniform on $[0,1]$, hence $f(p \mid M=0)=1$. 
Moreover, for a broad class of alternatives, the density of member p-values is suppressed near the upper boundary $p=1$, i.e., $f(p \mid M=1)$ is small in a neighborhood of $1$.
Consequently, the boundary value of the mixture density identifies the non-member proportion:
\begin{equation}
\label{eq:boundary_identity}
f_{\text{test}}(1) \;\gtrsim\; 1-\pi_{\text{test}}.
\end{equation}
Therefore, estimating $\pi_{\text{test}}$ reduces to estimating the p-value density at the boundary $p=1$.

\paragraph{Jackknife-corrected Beta boundary estimator.}
To estimate the data usage proportion, we rely on accurately estimating the boundary density $f_{\text{test}}(1)$.
However, the widely used Storey's  estimator \cite{storey2002direct} estimates this boundary density using an average over a fixed tail interval.
Since valid detection scores typically yield a p-value density with a downward trend near the boundary, such averaging inherently leads to an inflated estimate of the null proportion \citep{DBLP:journals/jmlr/Neuvial2013asymptotic}, rendering the subsequent detection procedure \textbf{overly conservative}.
To address this and recover the statistical power lost to this estimation bias, we propose using a boundary-adaptive kernel approach that explicitly adjusts for local curvature.

Specifically, we adopt Beta kernel density estimation \citep{chen1999beta}, whose kernels are naturally supported on $[0,1]$. 
Let $b>0$ denote a bandwidth parameter, the Beta kernel evaluated at the boundary $p=1$ admits the closed form
\begin{equation}
\label{eq:beta_kernel_boundary}
K_{1,b}(t) = \left(\frac{1}{b}+1\right)t^{1/b}, \qquad t\in[0,1].
\end{equation}
Given the test p-values $\{p_j\}_{j=1}^m$, a naive boundary density estimator is obtained by averaging $K_{1,b}(p_j)$. 
However, boundary kernel estimators typically suffer from a first-order bias in the bandwidth $b$. For a sufficiently smooth density $f_{\text{test}}$, the expectation of the boundary Beta kernel estimator admits the expansion:
\begin{equation}
\label{eq:bias_expansion}
\mathbb{E}\!\left[\widehat{f}_{b}(1)\right]
= f_{\text{test}}(1) + c_1 b + c_2 b^2 + O(b^3),
\end{equation}
where $c_1$ and $c_2$ depend on local derivatives of $f_{\text{test}}$ at the boundary.
To remove the leading $O(b)$ bias term, we employ a dual-kernel jackknife technique \citep{schucany1977improvement}. 
Specifically, we form a linear combination of two boundary estimators with bandwidths $b$ and $\gamma b$ ($\gamma>0$):
\begin{equation}
\label{eq:dual_kernel_estimator}
\widehat{f}_{\text{jk}}(1)
= w_0\,\widehat{f}_{b}(1) + w_1\,\widehat{f}_{\gamma b}(1),
\end{equation}
where the weights are chosen to satisfy the constraints $w_0+w_1=1$ and $w_0+\gamma w_1=0$, yielding
\begin{equation}
\label{eq:jk_weights}
w_0 = \frac{\gamma}{\gamma-1},\qquad 
w_1 = -\frac{1}{\gamma-1}.
\end{equation}

We then define the final estimator $\hat{\pi}_{\text{bdy}}$ as the Jackknife-corrected Beta Boundary (JKBB) estimator:
\begin{equation}
\label{eq:pi_hat_jk}
\hat{\pi}_{\text{bdy}}
\;=\;
1-\widehat{f}_{\text{jk}}(1),
\end{equation}
which is used as $\hat{\pi}_{\text{test}}$ in \Cref{eq:scaled-p-value} and \Cref{alg:ctds_formal}. The detailed implementation regarding the choice for the bandwidth parameter $b$ and hyperparameter $\gamma$ is provided in the \Cref{app:detail_jkbb}. Equipped with the boundary density estimator $\hat{\pi}_{\mathrm{bdy}}$,
we now establish the asymptotic \metricabbr{} control of the resulting plug-in procedure:

\begin{theorem} \label{theorem:control_fdr} Suppose the covariate of calibration set $\{X_i\}_{i=1}^{n}$ and the test set $\{ X_{n+j}\}_{j=1}^{m}$ are i.i.d. Then for any $\alpha \in (0,1)$, the selected set $\mathcal{S}$ obtained by \Cref{alg:ctds_formal} satisfies $\text{\metricabbr{}} \leqslant \alpha$. That is:
\begin{equation*}
 \limsup_{m\to\infty}\mathrm{\metricabbr{}}_m \leqslant \alpha.
\end{equation*}
\end{theorem}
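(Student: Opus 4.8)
The plan is to view \Cref{alg:ctds_formal} as a Storey-style \emph{adaptive} Benjamini--Hochberg (BH) procedure applied to conformal p-values, and to run the corresponding finite-sample FDR analysis, with \Cref{prop:conservative} supplying the single non-trivial inequality. First I would fix notation: let $\mathcal{H}_0 = \{ j \in \{1,\dots,m\} : M_{n+j}=0 \}$, $m_0 = |\mathcal{H}_0|$, $V = \sum_{j\in\mathcal{H}_0}\mathds{1}\{j\in\mathcal{S}\}$ and $R = |\mathcal{S}|$, so that $\mathrm{FDR}=\mathbb{E}[V/\max(R,1)]$. If $1-\hat{\pi}_{\text{sub}}\le 0$ the algorithm returns $\mathcal{S}=\emptyset$ and there is nothing to prove, so assume $1-\hat{\pi}_{\text{sub}}\in(0,1]$. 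Since multiplying every p-value by the common positive constant $1-\hat{\pi}_{\text{sub}}$ preserves their order, the step-up rule of \Cref{eq:bh} on $\{\tilde p_j\}$ at level $\alpha$ is identical to the step-up rule on the raw conformal p-values $\{p_j\}$ at the \emph{data-dependent} level $\alpha/(1-\hat{\pi}_{\text{sub}})$. Writing $R(t)=\sum_{j=1}^m\mathds{1}\{p_j\le t\}$ and $V(t)=\sum_{j\in\mathcal{H}_0}\mathds{1}\{p_j\le t\}$, this means $\mathcal{S}=\{j : p_j\le t^{*}\}$ where $t^{*}$ is the largest element of $\{p_1,\dots,p_m\}$ with $m(1-\hat{\pi}_{\text{sub}})\,t\le\alpha\,R(t)$ (and $t^{*}=0$ if no such element exists), with $R=R(t^{*})$ and $V=V(t^{*})$.

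Second, the BH algebra. On $\{R\ge 1\}$ we have $t^{*}>0$ and, by definition of $t^{*}$, $m(1-\hat{\pi}_{\text{sub}})\,t^{*}\le\alpha\,R(t^{*})$, so
\[
\frac{V}{\max(R,1)} = \frac{V(t^{*})}{R(t^{*})} = \frac{m(1-\hat{\pi}_{\text{sub}})\,t^{*}}{R(t^{*})}\cdot\frac{V(t^{*})}{m(1-\hat{\pi}_{\text{sub}})\,t^{*}} \le \alpha\cdot\frac{V(t^{*})}{m(1-\hat{\pi}_{\text{sub}})\,t^{*}} .
\]
Taking expectations, it therefore suffices to prove that $\mathbb{E}\big[\mathds{1}\{R\ge 1\}\, V(t^{*})/(m(1-\hat{\pi}_{\text{sub}})\,t^{*})\big]\le 1$.

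Third, the core estimate. The raw conformal p-values are super-uniform on $\mathcal{H}_0$ and carry the positive dependence (PRDS) needed for BH-type arguments: conditioning on the test labels and on the unordered multiset of non-member scores $\{T_i:i\le n\}\cup\{T_{n+j}:j\in\mathcal{H}_0\}$, each null score is exchangeable with the $n$ calibration scores, which makes the rescaled process $t\mapsto V(t)/t$ a backward supermartingale (as $t$ decreases from $1$) with conditional mean at most $m_0$ at $t=1$, and makes the step-up value $t^{*}$ a stopping time for the associated filtration. The key structural observation is that the scaling factor is itself a Storey estimator of $\pi_0:=1-\pi_{\text{test}}$ built from the \emph{large}, hence likely null, p-values: since $\mathcal{R}=(\tau,\infty)$ collects the largest detection scores, $\sum_{j}\mathds{1}\{T_{n+j}\in\mathcal{R}\}$ equals, up to the controlled member contribution, the number of null p-values exceeding the threshold induced by $\tau$, so $1-\hat{\pi}_{\text{sub}}$ is monotone in each null p-value in exactly the direction needed for the optional-stopping / leave-one-out exchange to go through. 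Carrying out that exchange at $t^{*}$ then yields
\[
\mathbb{E}\!\left[\,\mathds{1}\{R\ge1\}\,\frac{V(t^{*})}{m(1-\hat{\pi}_{\text{sub}})\,t^{*}}\,\right] \le \mathbb{E}\!\left[\frac{1-\pi_{\text{test}}}{1-\hat{\pi}_{\text{sub}}}\right] \le 1 ,
\]
where the last inequality is precisely \Cref{prop:conservative}. Combining the three steps gives $\mathrm{FDR}\le\alpha$.

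The main obstacle is the coupling between $\hat{\pi}_{\text{sub}}$ and the very p-values it rescales: because $\hat{\pi}_{\text{sub}}$ is essentially a function of the null p-values, one cannot simply condition on it and quote the fixed-level BH bound of \Cref{lemma:classic_fdr}; the proof must verify that $1-\hat{\pi}_{\text{sub}}$ depends on each null p-value monotonically in the direction that keeps FDR control valid, so that the backward-supermartingale structure of $V(t)/t$ survives passage to the random threshold $\alpha/(1-\hat{\pi}_{\text{sub}})$ and the optional-stopping step is legitimate. A secondary technical point is that $\mathcal{R}$ need not be exactly member-free, so member scores can contaminate the count defining $\hat{\pi}_{\text{sub}}$; I would handle this by invoking \Cref{prop:conservative} as a black box rather than reopening the estimator's analysis inside the FDR proof. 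A slightly weaker but simpler fallback conditions on the label vector, applies \Cref{lemma:classic_fdr} to the raw p-values at level $\alpha/(1-\hat{\pi}_{\text{sub}})$ to obtain a conditional bound $(m_0/m)\,\alpha/(1-\hat{\pi}_{\text{sub}})$, and then takes expectations; but this merely relocates the same coupling, now as the need to bound $\mathbb{E}[(m_0/m)/(1-\hat{\pi}_{\text{sub}})]$, which again reduces to an inequality of the form established in \Cref{prop:conservative}.
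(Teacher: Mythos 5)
Your primary route is genuinely different from the paper's. The paper's proof (\Cref{appendix:proof:control_fdr}) conditions on the realized value of $\hat{\pi}_{\text{sub}}$, asserts that conditional on that value \Cref{prop:unscaled_fdr_control} applies to the raw p-values at level $\alpha/(1-\hat{\pi}_{\text{sub}})$ to obtain $\mathbb{E}[\mathrm{FDP}\mid\hat{\pi}_{\text{sub}}]\le\alpha(1-\pi_{\text{test}})/(1-\hat{\pi}_{\text{sub}})$, and then finishes by \Cref{prop:conservative}. You instead set up a Storey--Taylor--Siegmund optional-stopping analysis of $V(t)/t$. You also correctly flag that the paper's conditioning step is not automatically legitimate: $\hat{\pi}_{\text{sub}}$ is a function of the same test and calibration scores that generate the p-values, so conditioning on it perturbs their law, and \Cref{prop:unscaled_fdr_control} (an unconditional statement relying on exchangeability and PRDS) cannot simply be quoted after that conditioning. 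The paper does not address this, and your ``main obstacle'' paragraph names exactly this gap; that observation is the sharpest part of your write-up.

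That said, the centerpiece of your own route is asserted rather than proved. The inequality $\mathbb{E}\bigl[\mathds{1}\{R\ge1\}\,V(t^{*})/(m(1-\hat{\pi}_{\text{sub}})t^{*})\bigr]\le\mathbb{E}\bigl[(1-\pi_{\text{test}})/(1-\hat{\pi}_{\text{sub}})\bigr]$ is precisely where all the work lives, and ``carrying out that exchange at $t^{*}$'' hides the entire argument. You would need to (i) verify that the discrete, exchangeable conformal null p-values make $V(t)/t$ a backward supermartingale under the relevant conditioning, (ii) check that $t^{*}$ is a stopping time for a filtration in which $\hat{\pi}_{\text{sub}}$ is measurable, and (iii) establish the claimed monotone dependence of $1-\hat{\pi}_{\text{sub}}$ on each null p-value --- which is complicated both by member scores possibly landing in $\mathcal{R}$ and by $\mathcal{R}$ itself being calibration-data-dependent. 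None of these is routine for the subtraction estimator, which is not the Storey estimator and mixes calibration and test counts in a ratio. So while your route is the rigorous-in-principle one, as written it replaces the paper's unverified conditional-BH step with an unverified optional-stopping step; and your fallback reduces to the paper's argument and inherits the very coupling issue you identified.
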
 

The corresponding proof is presented in \Cref{appendix:proof:control_fdr}. For reference, we also provide the evaluation of the JKBB estimator in \Cref{appendix:jkbb_estimate_result}. In the following section, we empirically evaluate the behavior of the proposed procedure and verify its \metricabbr{} control in practice.

\begin{figure*}[t]
    \centering
        \begin{minipage}{\textwidth}
        \centering
        \begin{subfigure}[b]{0.32\textwidth}
            \centering
            \includegraphics[width=\textwidth]{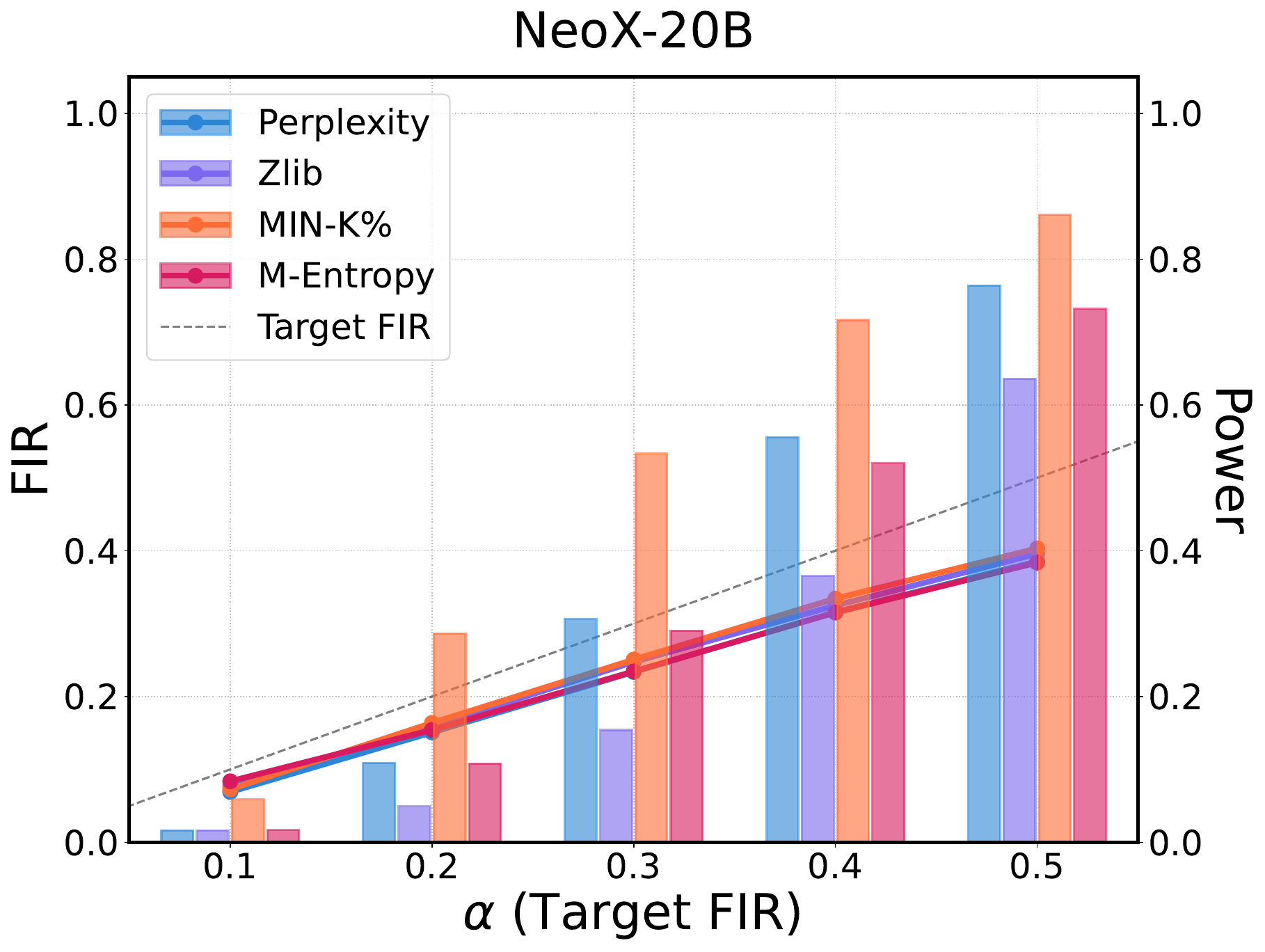}
            \label{fig:gpt_wiki}
        \end{subfigure}
        \hfill
        \begin{subfigure}[b]{0.32\textwidth}
            \centering
            \includegraphics[width=\textwidth]{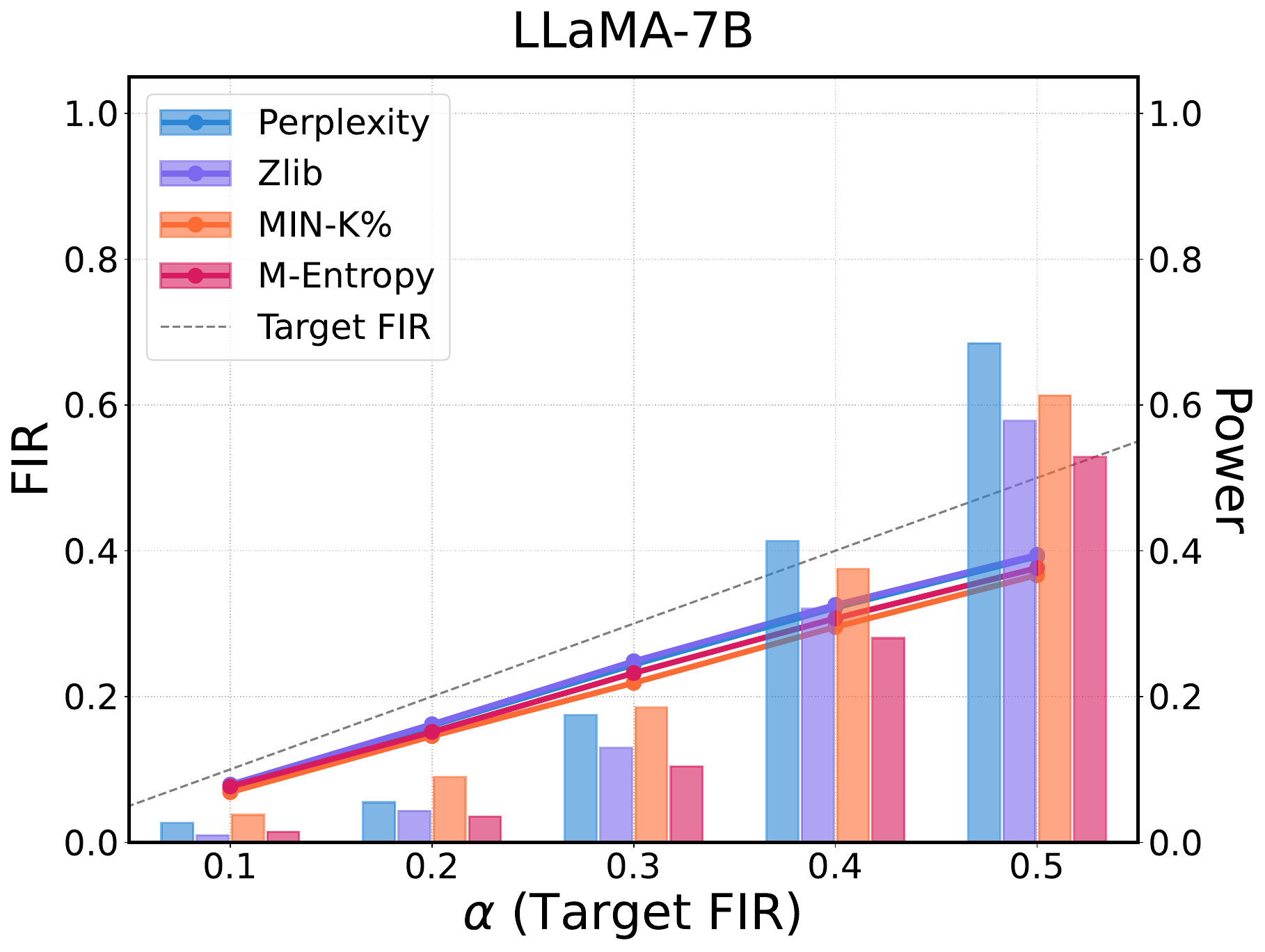}
            \label{fig:llama_wiki}
        \end{subfigure}
        \hfill
        \begin{subfigure}[b]{0.32\textwidth}
            \centering
            \includegraphics[width=\textwidth]{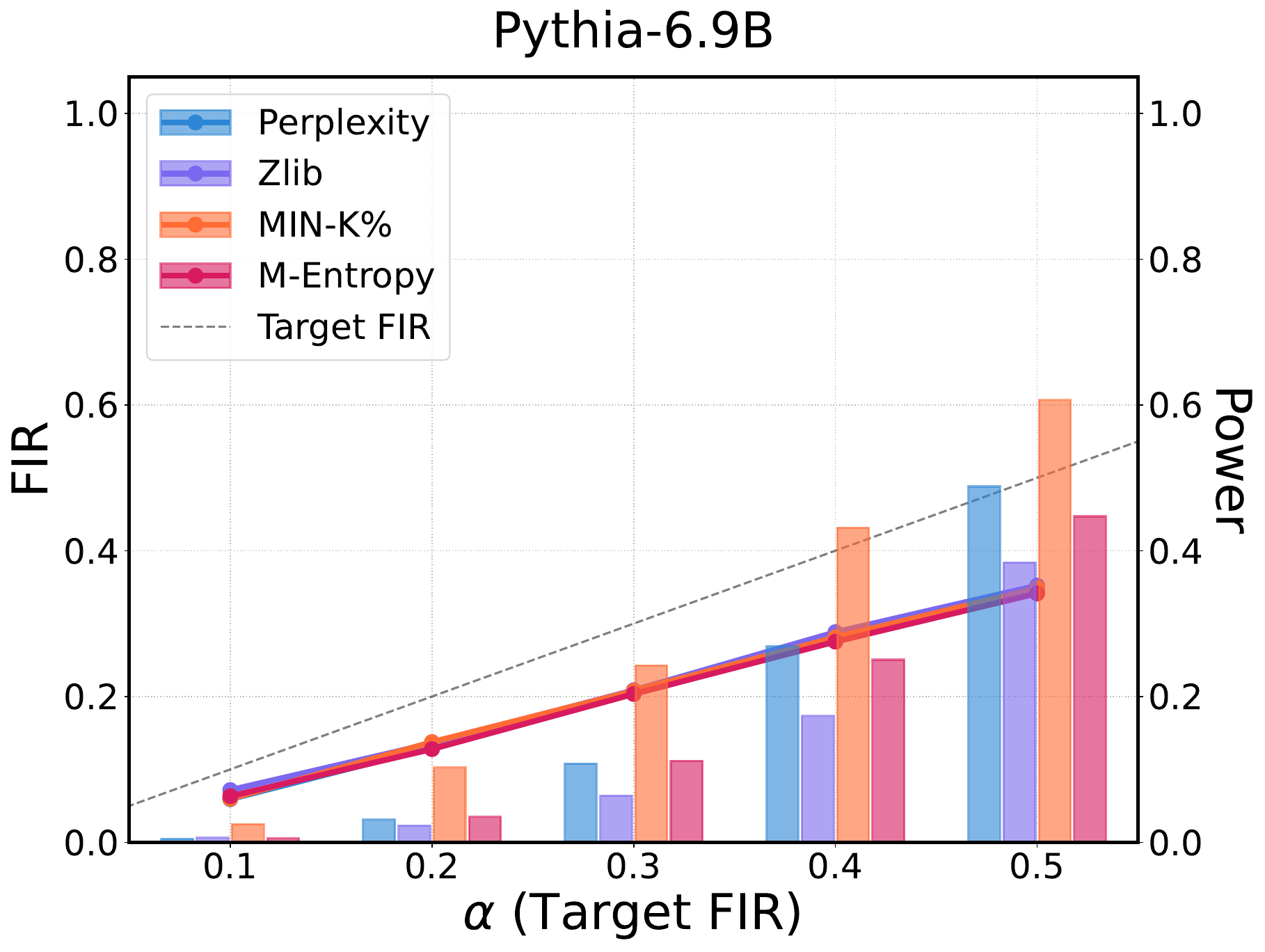}
            \label{fig:pythia_wiki}
        \end{subfigure}
        \vspace{-10pt}
        \subcaption{WikiMIA}
        \label{fig:group_wiki}
    \end{minipage}
    \vspace{\baselineskip} %
    \begin{minipage}{\textwidth}
        \centering
        \begin{subfigure}[b]{0.32\textwidth}
            \centering
            \includegraphics[width=\textwidth]{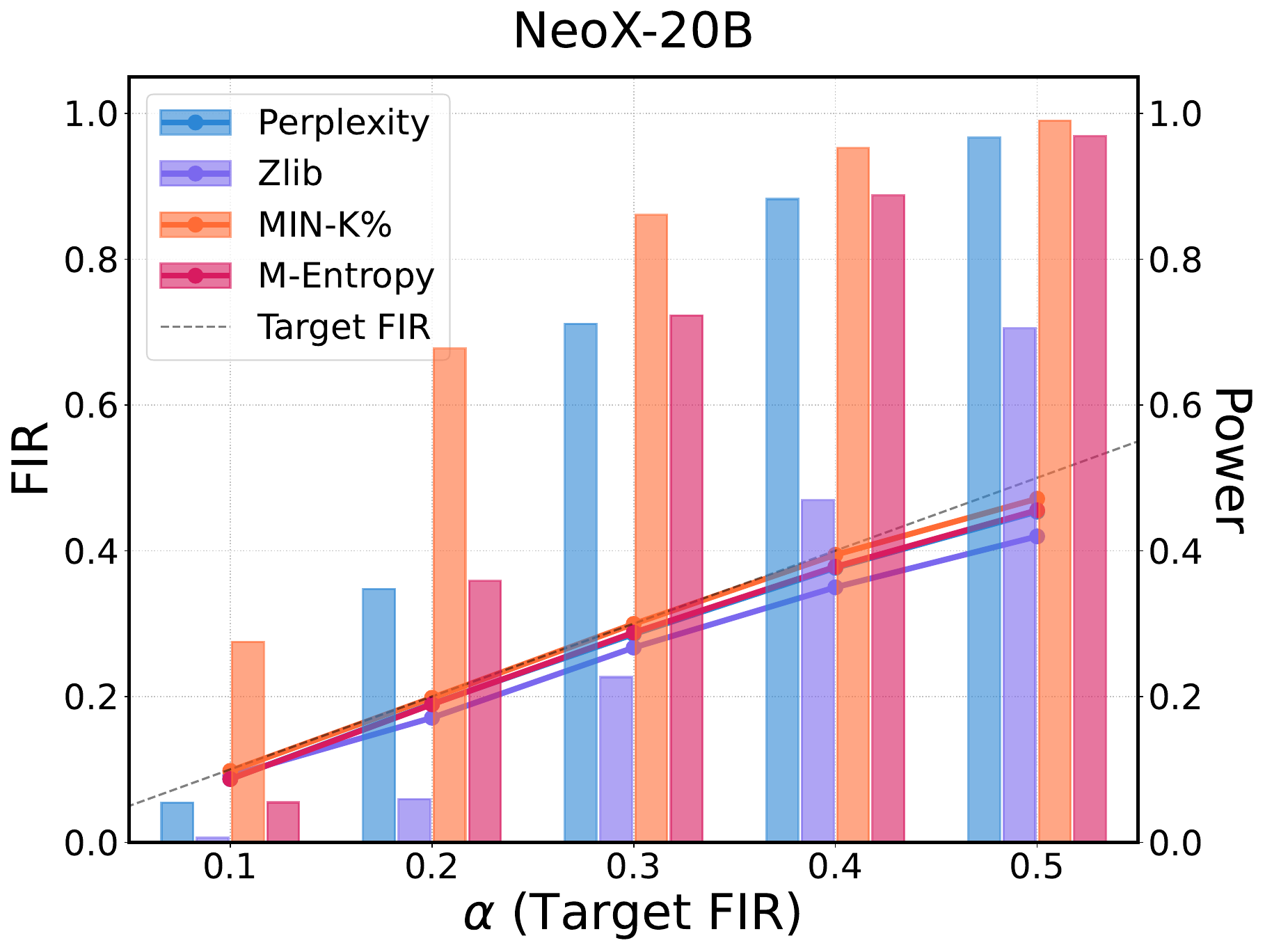}
            \label{fig:gpt_arxiv}
        \end{subfigure}
        \hfill 
        \begin{subfigure}[b]{0.32\textwidth}
            \centering
            \includegraphics[width=\textwidth]{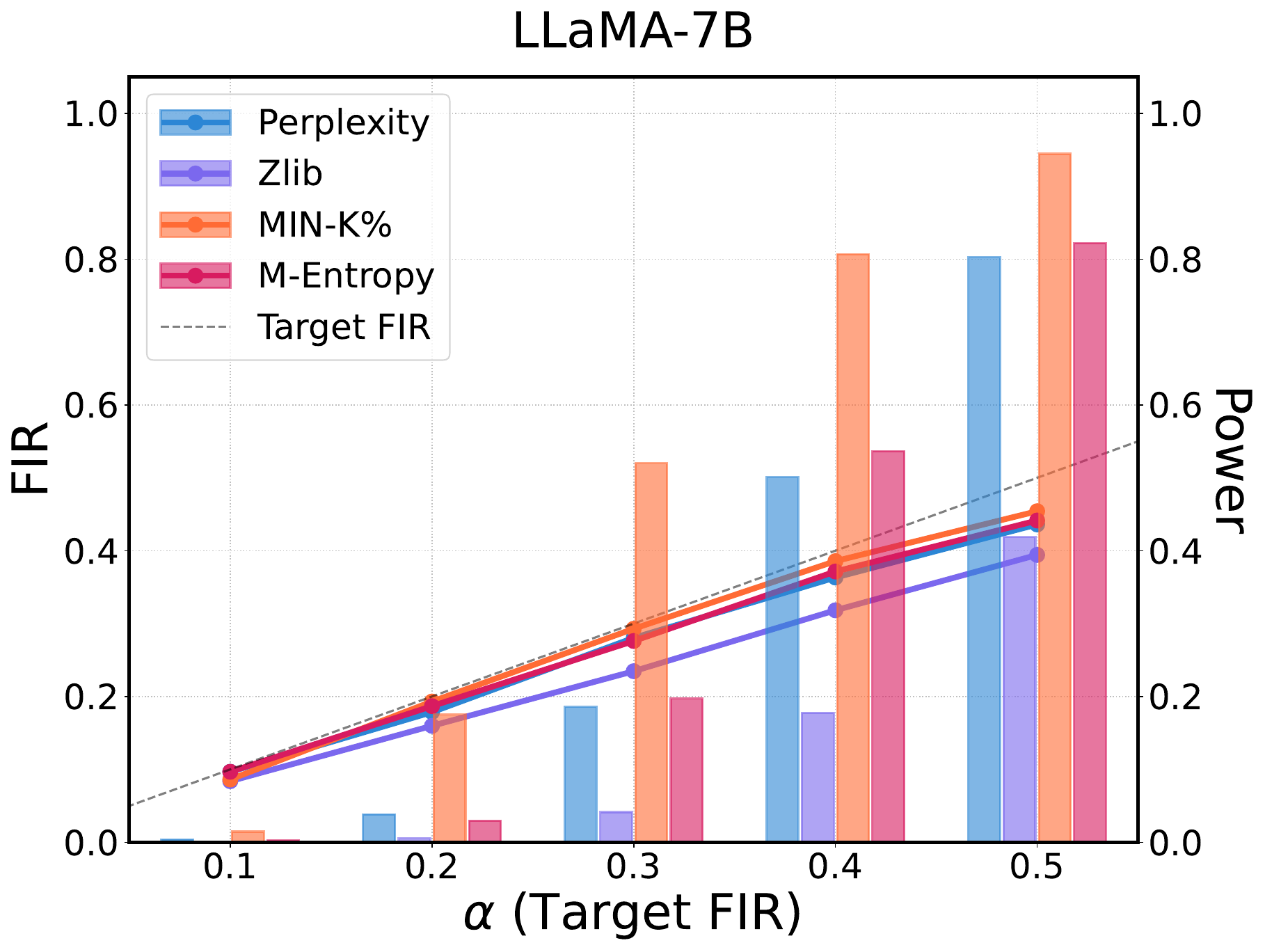}
            \label{fig:llama_arxiv}
        \end{subfigure}
        \hfill
        \begin{subfigure}[b]{0.32\textwidth}
            \centering
            \includegraphics[width=\textwidth]{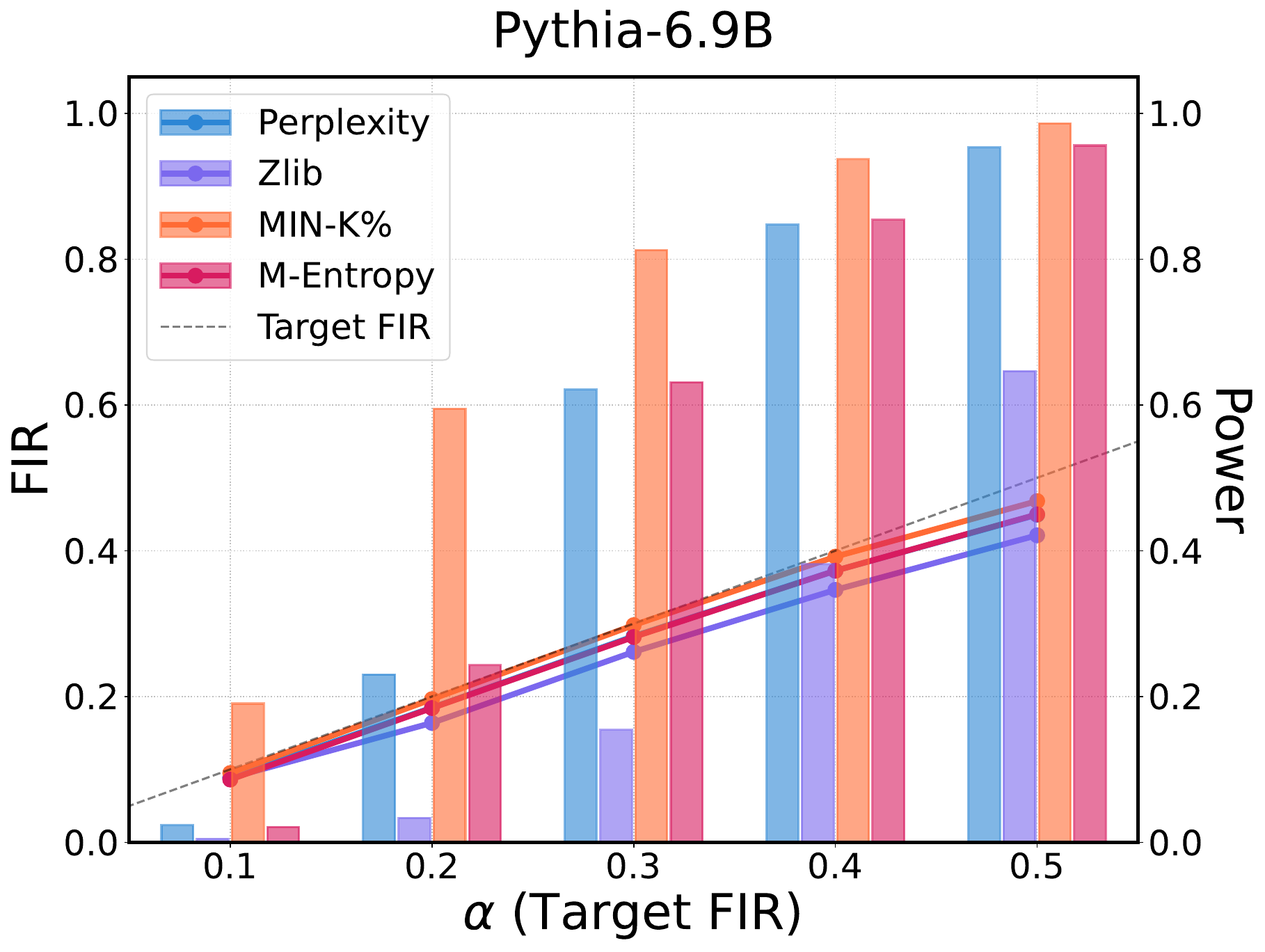}
            \label{fig:pythia_arxiv}
        \end{subfigure}
        \vspace{-10pt}
        \subcaption{ArXivTection}
        \label{fig:group_arxiv}
    \end{minipage}
    \caption{ \metricabbr{} (solid lines) and Power (bars) achieved by our method when applied to various detection scores across a range of levels $\alpha$ (target \metricabbr{}). Each subplot corresponds to a specific model–dataset pair, and the dashed diagonal line indicates the target $\alpha$. }
    \label{fig:llm_fdr_power}
\end{figure*}

\section{Experimental results}
\subsection{Setup}
\label{sec:setup}
\paragraph{Models} Our experiments cover a wide range of open-source models. For LLMs, we evaluate GPT-2~\citep{Radford2019LanguageMA}, GPT-Neo~\citep{gao2020pile}, GPT-NeoX-20B~\citep{black-etal-2022-gpt}, LLaMA-7B~\citep{hu2023llama}, and Pythia (1.4B and 6.9B variants)~\citep{biderman2023pythia}. For VLMs, we use LLaVA-1.5~\citep{liu2023visual} and MiniGPT-4~\citep{zhu2024minigpt}. 

\paragraph{Datasets.} We employ six common benchmark datasets for evaluation. For LLM pre-training, we use the WikiMIA~\citep{shi2024detecting} and ArxivTection~\citep{duarte2024decop} datasets. For fine-tuning LLMs, we utilize XSum~\citep{narayan-etal-2018-dont} and BBC Real Time~\citep{li2024latesteval}. In the vision-language domain, following previous work~\citep{li2024membership}, we use the VL-MIA/Flickr and VL-MIA/DALL-E datasets. The details for our experiment are presented in \Cref{appendix:exp_detail}.

\paragraph{Detection Scores.} We employ a diverse set of detection scores to evaluate the versatility of our method. For LLMs, we utilize Perplexity~\citep{li2023estimating}, ratio of perplexity to zlib compression entropy (Zlib)~\citep{carlini2021extracting}, the modified entropy (M-Entropy)~\citep{song2021systematic}, and MIN-K\%~\citep{shi2024detecting}, which focuses on tokens with minimum probabilities. For VLMs, we follow \citet{li2024membership} and use the MaxR\'{e}nyi-K\% score. 

\subsection{Main results}
\paragraph{Our method provides reliable error control with various detection scores.} Our method is designed to be score-agnostic and can be readily combined with a wide range of existing training data detection methods. 
For LLMs, we evaluate our approach using several widely adopted scores, including Perplexity, Zlib, M-Entropy, and MIN-K\%. 
As shown in \Cref{fig:llm_fdr_power}, our method consistently and strictly controls the \metricabbr{} across all settings. 
This demonstrates that \abbr{} is complementary to existing detection scores, enabling them to support training data identification with rigorous error control.
Additional results for VLMs are provided in \Cref{sec:appendix:llava}.


\paragraph{Our method achieves valid error control compared to the knockoff inference-based training data detector (KTD).} We adopt the experimental setting of KTD \citep{hu2025a}, evaluating on GPT-2~\citep{Radford2019LanguageMA}, GPT-Neo~\citep{gao2020pile}, and Pythia-1.4B~\citep{biderman2023pythia} across the WikiMIA, XSum, and BBC Real Time datasets.
To ensure a fair comparison under a white-box assumption, we set our method's detection score $T(X)$ to be the knockoff statistic from KTD. This statistic is calculated as the difference between the $L_2$ norm of the model's gradient for an input $X$ and the average $L_2$ norm of the gradients for its synthetic knockoff samples.
As demonstrated in \Cref{fig:fdr_comparison}, our method consistently maintains the target \metricabbr{} across all settings, whereas KTD fails to control the \metricabbr{} on WikiMIA and XSum for certain values of $\alpha$.
For a complete analysis, we compare the power on BBC Real Time under conditions where KTD successfully controls the \metricabbr{} ($\alpha \geqslant 0.1$), with results shown in \Cref{fig:power_bar_wikimia_gpt2}.
The comparison reveals that our method achieves superior power on GPT-2.
In summary, our method not only guarantees strict \metricabbr{} control but also demonstrates superior power. 

\begin{figure}[t]   
    \centering
    \includegraphics[width=0.95\columnwidth]{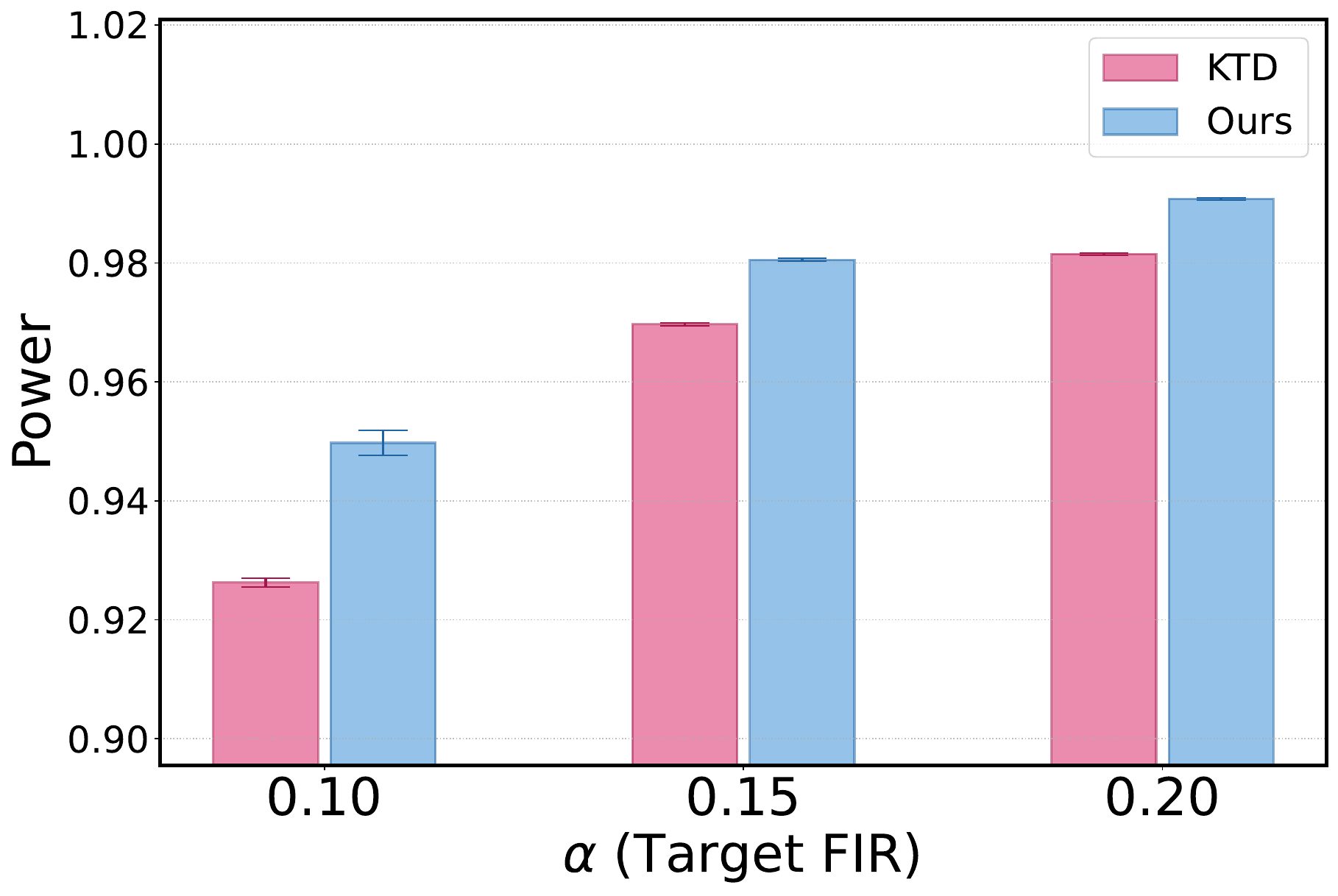}
    \caption{Power comparison with KTD on BBC (GPT-2). Error bars represent 95\% confidence intervals }
    \label{fig:power_bar_wikimia_gpt2}
\end{figure}

\begin{table*}[t]
\centering
\caption{Average power (\%) comparison across different methods, LLMs, and detection scores at target \metricabbr{} levels $\alpha$. 
Results with higher power are highlighted in \textbf{bold}, and \textit{\textbf{Ours}} corresponds to the proposed method based on scaled p-values with the JKBB estimator. The $\pm$ values denote standard errors.}
\label{tab:abla_power_comparison}
\vspace{5pt}
    \renewcommand\arraystretch{1.2}
    \resizebox{1.00\textwidth}{!}{
\begin{tabular}{ll  ccc ccc ccc}
\toprule
\multirow{2}{*}{\textbf{Model}} & \multirow{2}{*}{\textbf{Method}} & \multicolumn{3}{c}{$\boldsymbol{\alpha=0.1}$} & \multicolumn{3}{c}{$\boldsymbol{\alpha=0.2}$} & \multicolumn{3}{c}{$\boldsymbol{\alpha=0.3}$} \\ 
\cmidrule(lr){3-5} \cmidrule(lr){6-8} \cmidrule(lr){9-11}
 &  & \textbf{Perplexity} & \textbf{MIN-K\%} & \textbf{M-Entropy} & \textbf{Perplexity} & \textbf{MIN-K\%} & \textbf{M-Entropy} & \textbf{Perplexity} & \textbf{MIN-K\%} & \textbf{M-Entropy} \\
\midrule
\multirow{5}{*}{\textbf{NeoX-20B}} & Vanilla & 0.46 $\pm$ 0.03 & 1.69 $\pm$ 0.11 & 0.48 $\pm$ 0.04 & 2.37 $\pm$ 0.10 & 7.69 $\pm$ 0.24 & 2.25 $\pm$ 0.09 & 6.77 $\pm$ 0.22 & 20.79 $\pm$ 0.35 & 7.31 $\pm$ 0.24 \\
 & Storey-BH & 1.07 $\pm$ 0.06 & 3.54 $\pm$ 0.17 & 0.88 $\pm$ 0.05 & 6.00 $\pm$ 0.22 & 19.65 $\pm$ 0.38 & 5.24 $\pm$ 0.21 & 18.75 $\pm$ 0.39 & 42.81 $\pm$ 0.38 & 17.95 $\pm$ 0.34 \\
 & BKY & 0.47 $\pm$ 0.03 & 1.76 $\pm$ 0.12 & 0.49 $\pm$ 0.04 & 2.46 $\pm$ 0.11 & 8.61 $\pm$ 0.28 & 2.40 $\pm$ 0.10 & 7.69 $\pm$ 0.26 & 27.25 $\pm$ 0.47 & 8.50 $\pm$ 0.29 \\
 & Quantile-BH & 0.47 $\pm$ 0.04 & 1.73 $\pm$ 0.11 & 0.49 $\pm$ 0.04 & 2.39 $\pm$ 0.10 & 7.86 $\pm$ 0.25 & 2.30 $\pm$ 0.10 & 6.91 $\pm$ 0.22 & 21.17 $\pm$ 0.36 & 7.46 $\pm$ 0.25 \\
\cmidrule(lr){2-11}
\rowcolor{gray!20}  & \textit{\textbf{Ours}} & \textbf{1.65 $\pm$ 0.09} & \textbf{5.90 $\pm$ 0.26} & \textbf{1.67 $\pm$ 0.09} & \textbf{10.88 $\pm$ 0.38} & \textbf{28.67 $\pm$ 0.52} & \textbf{10.78 $\pm$ 0.38} & \textbf{30.61 $\pm$ 0.69} & \textbf{53.31 $\pm$ 0.49} & \textbf{29.05 $\pm$ 0.65} \\
\midrule
\multirow{5}{*}{\textbf{LLaMA-7B}} & Vanilla & 0.90 $\pm$ 0.04 & 1.54 $\pm$ 0.06 & 0.46 $\pm$ 0.03 & 3.15 $\pm$ 0.04 & 4.82 $\pm$ 0.08 & 1.90 $\pm$ 0.04 & 3.86 $\pm$ 0.05 & 7.86 $\pm$ 0.14 & 2.72 $\pm$ 0.04 \\
 & Storey-BH & 1.94 $\pm$ 0.05 & 3.16 $\pm$ 0.08 & 0.88 $\pm$ 0.04 & 3.61 $\pm$ 0.05 & 6.61 $\pm$ 0.12 & 2.39 $\pm$ 0.04 & 5.36 $\pm$ 0.13 & 12.31 $\pm$ 0.19 & 3.49 $\pm$ 0.09 \\
 & BKY & 0.92 $\pm$ 0.04 & 1.58 $\pm$ 0.07 & 0.47 $\pm$ 0.03 & 3.18 $\pm$ 0.04 & 4.96 $\pm$ 0.09 & 1.91 $\pm$ 0.04 & 3.92 $\pm$ 0.05 & 8.47 $\pm$ 0.16 & 2.76 $\pm$ 0.04 \\
 & Quantile-BH & 0.93 $\pm$ 0.04 & 1.60 $\pm$ 0.06 & 0.47 $\pm$ 0.03 & 3.16 $\pm$ 0.04 & 4.88 $\pm$ 0.08 & 1.92 $\pm$ 0.04 & 3.88 $\pm$ 0.05 & 8.04 $\pm$ 0.14 & 2.73 $\pm$ 0.04 \\
\cmidrule(lr){2-11}
\rowcolor{gray!20}  & \textit{\textbf{Ours}} & \textbf{2.63 $\pm$ 0.05} & \textbf{3.76 $\pm$ 0.08} & \textbf{1.44 $\pm$ 0.05} & \textbf{5.48 $\pm$ 0.24} & \textbf{8.93 $\pm$ 0.22} & \textbf{3.55 $\pm$ 0.15} & \textbf{17.43 $\pm$ 0.74} & \textbf{18.47 $\pm$ 0.48} & \textbf{10.39 $\pm$ 0.60} \\
\midrule
\multirow{5}{*}{\textbf{Pythia-6.9B}} & Vanilla & 0.21 $\pm$ 0.02 & 0.88 $\pm$ 0.06 & 0.22 $\pm$ 0.03 & 1.04 $\pm$ 0.06 & 4.60 $\pm$ 0.13 & 1.11 $\pm$ 0.08 & 2.90 $\pm$ 0.12 & 10.05 $\pm$ 0.21 & 3.67 $\pm$ 0.15 \\
 & Storey-BH & 0.41 $\pm$ 0.04 & 1.95 $\pm$ 0.09 & 0.40 $\pm$ 0.04 & 2.04 $\pm$ 0.10 & 7.34 $\pm$ 0.18 & 2.22 $\pm$ 0.12 & 6.28 $\pm$ 0.21 & 18.52 $\pm$ 0.32 & 6.84 $\pm$ 0.20 \\
 & BKY & 0.21 $\pm$ 0.02 & 0.90 $\pm$ 0.06 & 0.22 $\pm$ 0.03 & 1.06 $\pm$ 0.06 & 4.82 $\pm$ 0.14 & 1.15 $\pm$ 0.09 & 3.03 $\pm$ 0.13 & 11.75 $\pm$ 0.27 & 3.88 $\pm$ 0.16 \\
 & Quantile-BH & 0.22 $\pm$ 0.02 & 0.93 $\pm$ 0.06 & 0.22 $\pm$ 0.03 & 1.05 $\pm$ 0.06 & 4.67 $\pm$ 0.13 & 1.12 $\pm$ 0.08 & 2.92 $\pm$ 0.12 & 10.25 $\pm$ 0.22 & 3.70 $\pm$ 0.15 \\
\cmidrule(lr){2-11}
\rowcolor{gray!20}  & \textit{\textbf{Ours}} & \textbf{0.47 $\pm$ 0.04} & \textbf{2.49 $\pm$ 0.11} & \textbf{0.55 $\pm$ 0.05} & \textbf{3.13 $\pm$ 0.14} & \textbf{10.28 $\pm$ 0.27} & \textbf{3.49 $\pm$ 0.17} & \textbf{10.78 $\pm$ 0.38} & \textbf{24.24 $\pm$ 0.44} & \textbf{11.18 $\pm$ 0.38} \\
\bottomrule
\end{tabular}
}
\end{table*}

\begin{figure*}[t]
    \centering
    \begin{subfigure}[b]{0.32\textwidth}
        \centering
        \includegraphics[width=\textwidth]{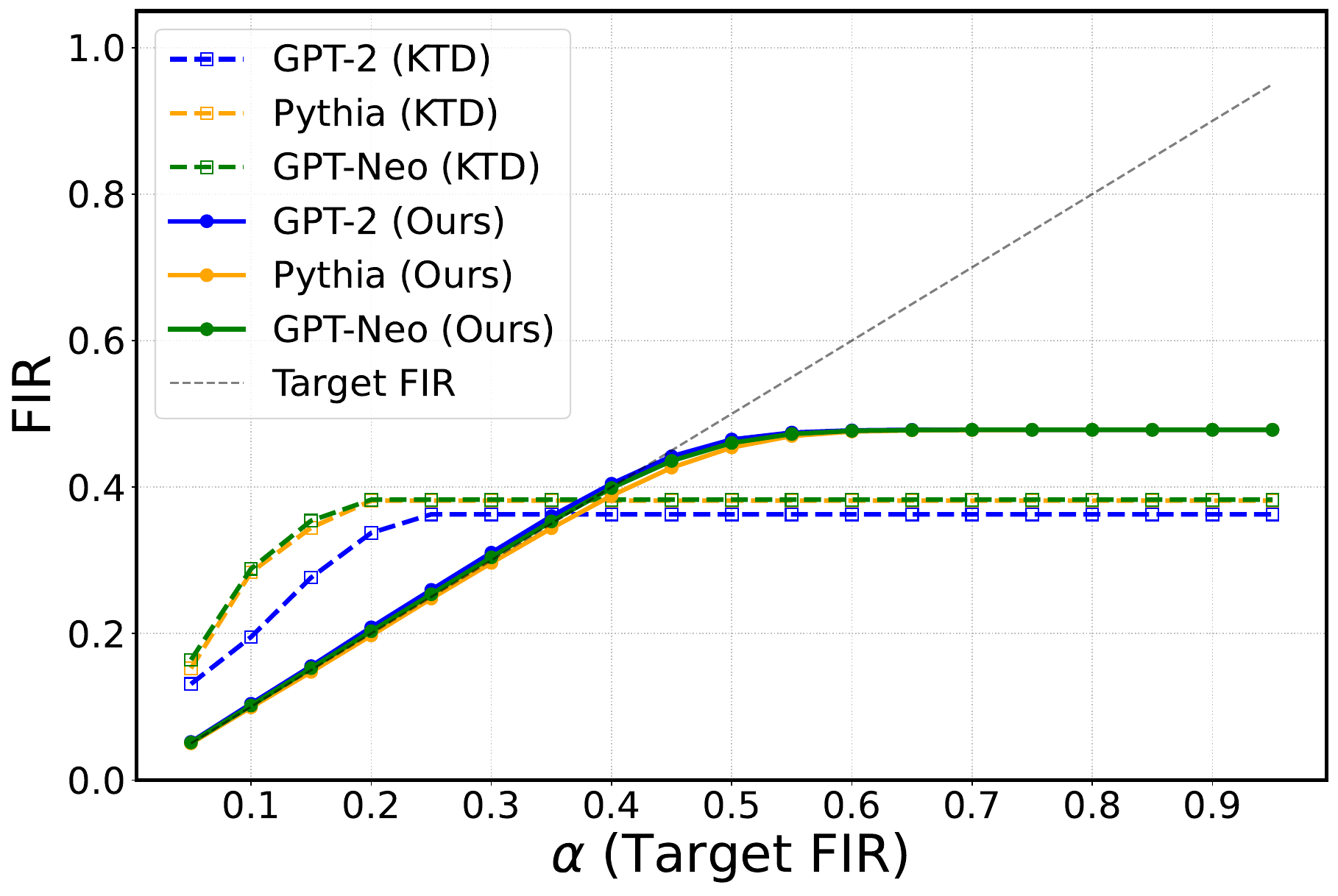}
        \caption{WikiMIA}
    \end{subfigure}
    \hfill
    \begin{subfigure}[b]{0.32\textwidth}
        \centering
        \includegraphics[width=\textwidth]{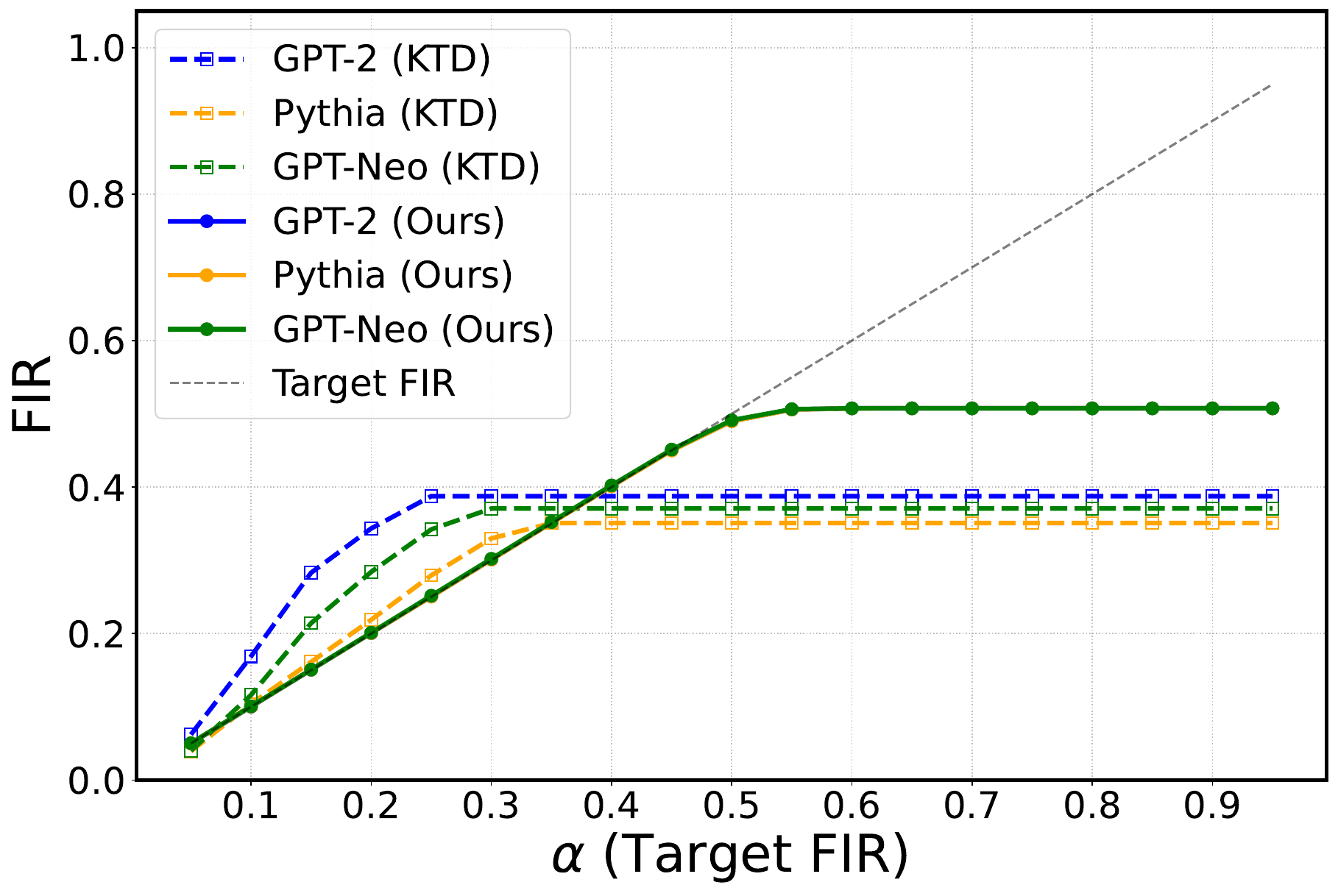}
        \caption{XSum}
        \label{fig:fdr_comparison_xsum}
    \end{subfigure}
    \hfill
    \begin{subfigure}[b]{0.32\textwidth}
        \centering
        \includegraphics[width=\textwidth]{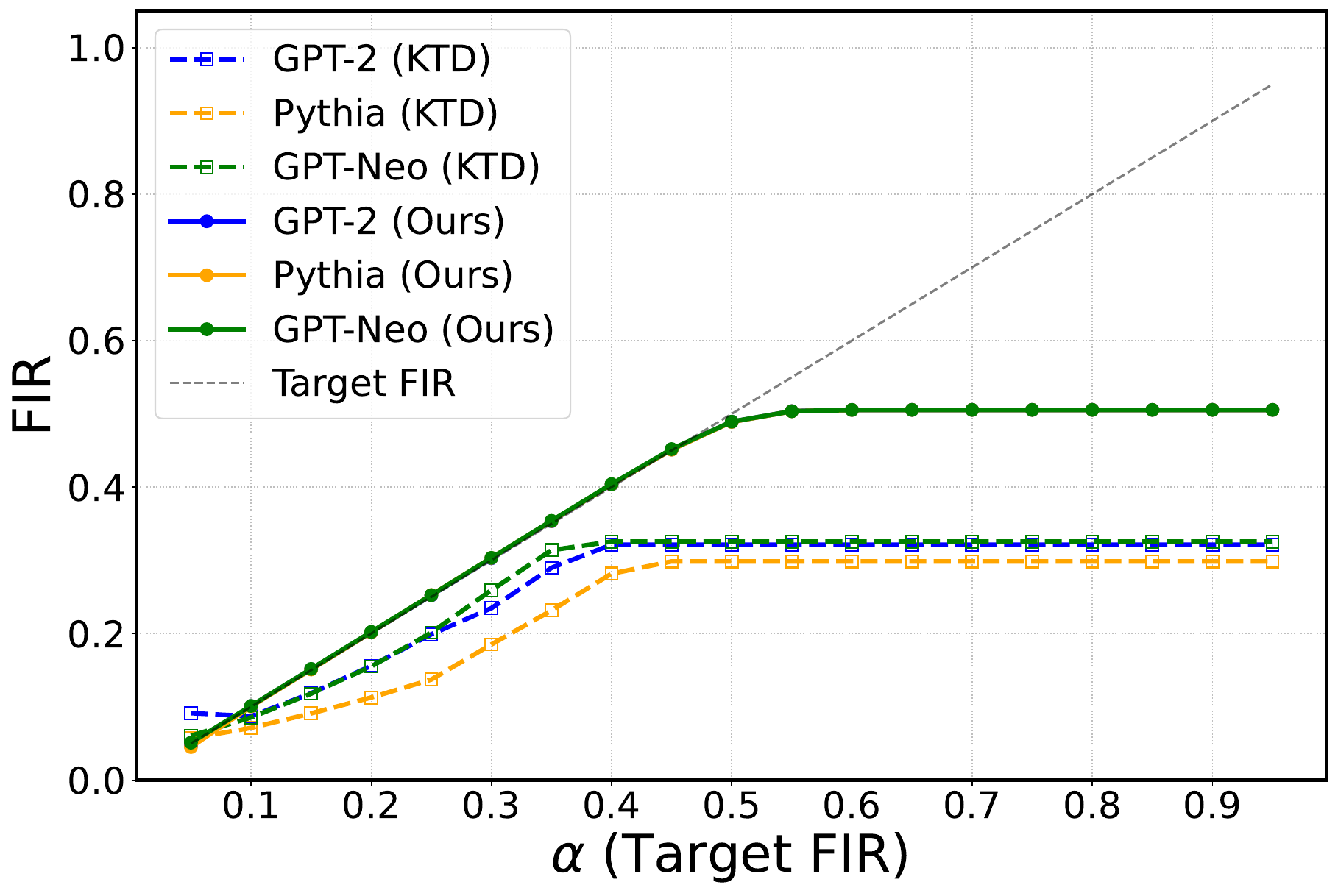}
        \caption{BBC}
    \end{subfigure}
    \caption{Comparison of \metricabbr{} control between our method and KTD across a range of levels $\alpha$ on three datasets }
    \label{fig:fdr_comparison}
\end{figure*}

\begin{figure*}[t]
    \centering
    \begin{subfigure}[b]{0.32\textwidth}
        \centering
        \includegraphics[width=\textwidth]
        {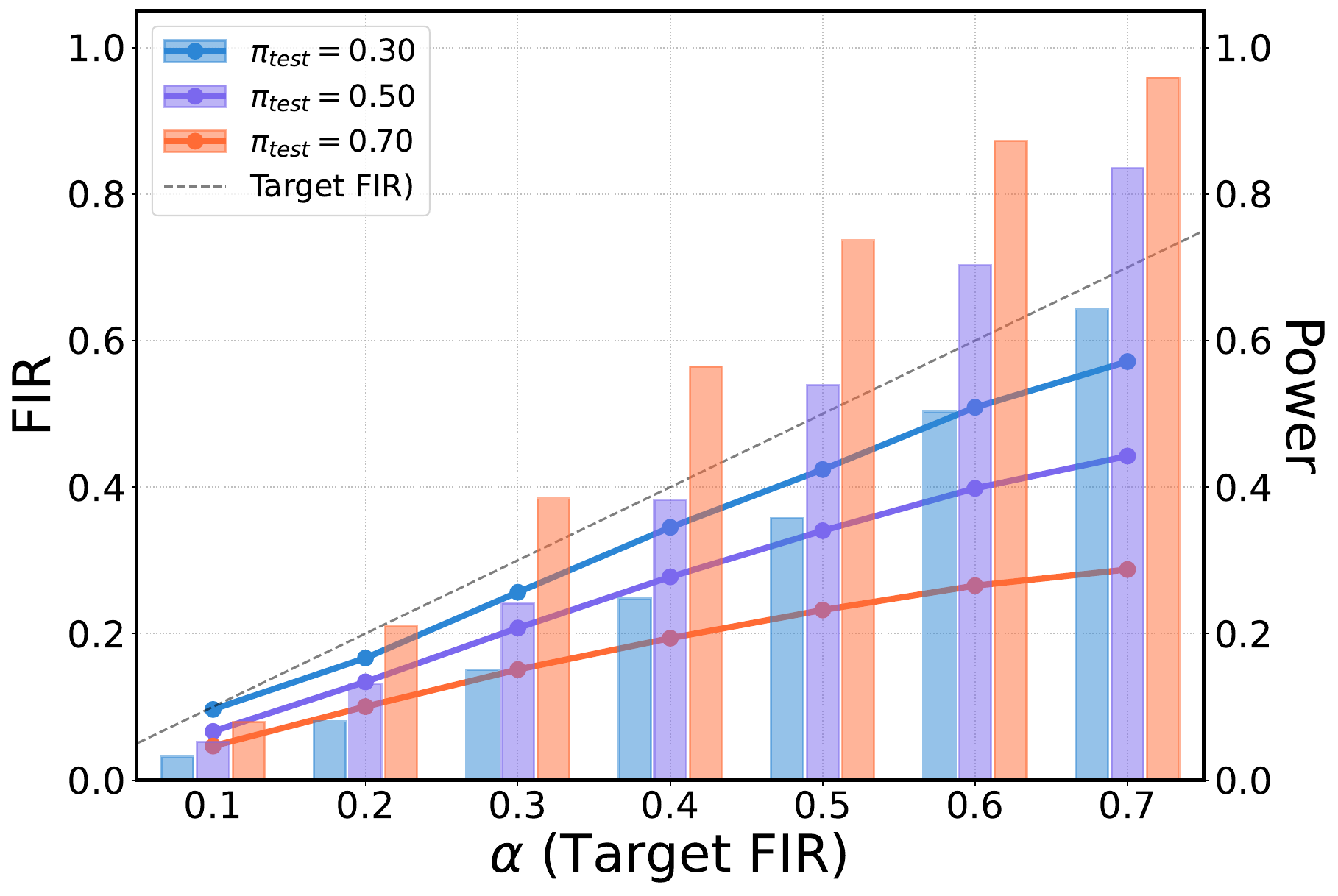}
        \caption{Image Embedding}
    \end{subfigure}
    \hfill
    \begin{subfigure}[b]{0.32\textwidth}
        \centering
        \includegraphics[width=\textwidth]{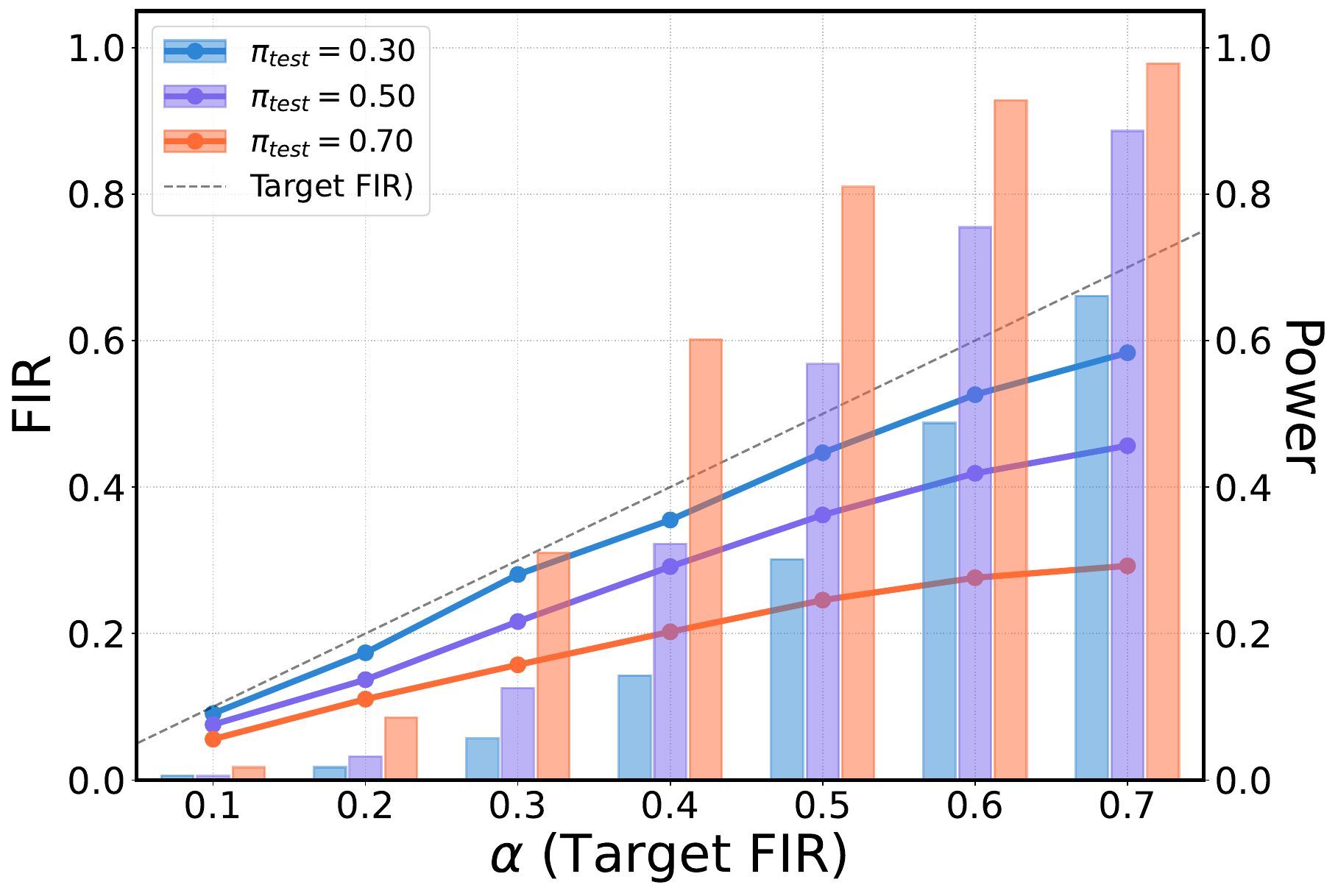}
        \caption{Generated Description}
    \end{subfigure}
    \hfill
    \begin{subfigure}[b]{0.32\textwidth}
        \centering
        \includegraphics[width=\textwidth]{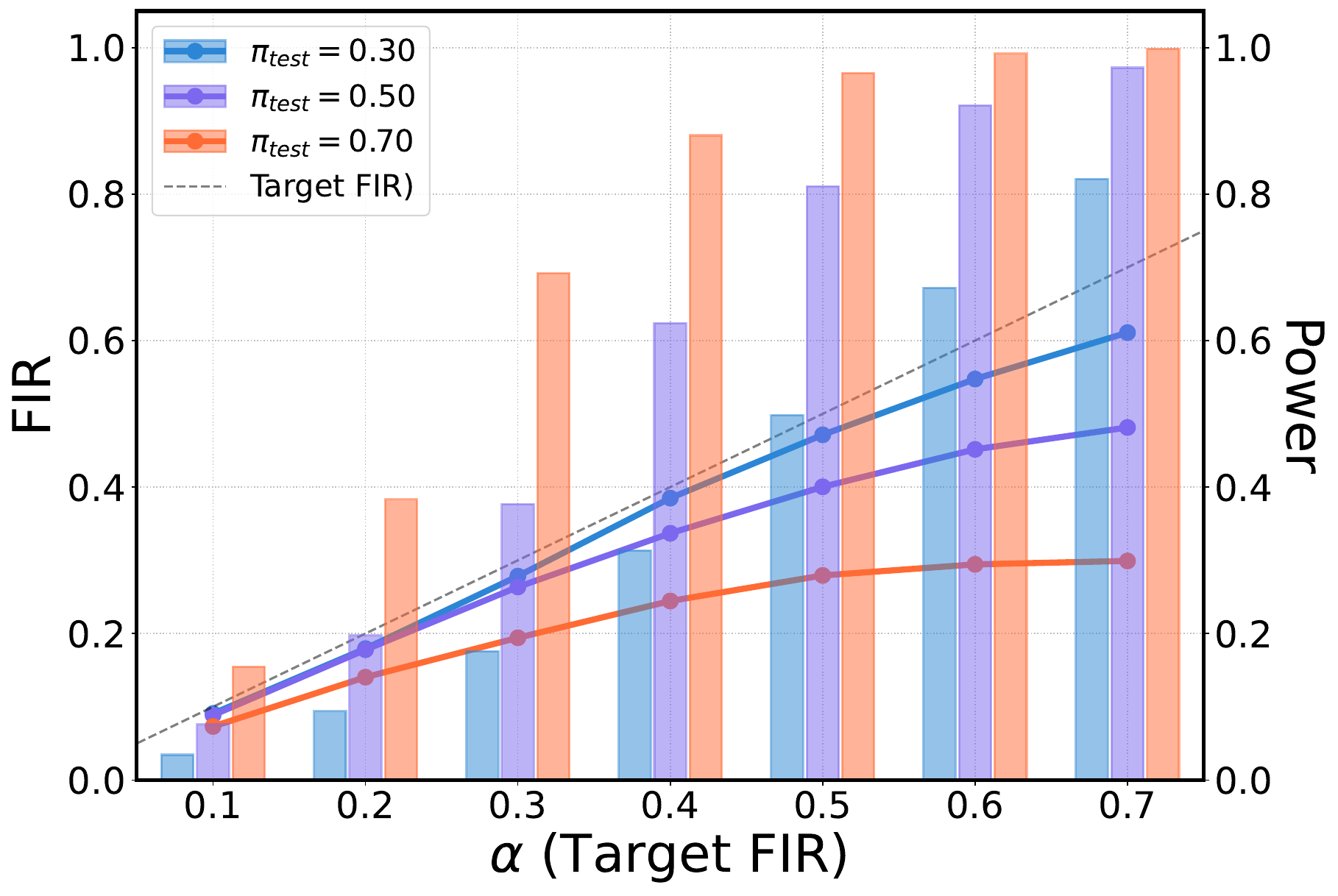}
        \caption{Instruction \& Description}
    \end{subfigure}
    \caption{\metricabbr{} (solid lines) and power (bars) achieved by our method on  with the VL-MIA/Flickr dataset, evaluated across various data usage proportions of the test set $\pi_{\text{test}}$ and target \metricabbr{} levels $\alpha$. All results are based on the MaxR\'{e}nyi-K\% score calculated from three different input components: (a) the image embedding, (b) the generated description, and (c) the instruction combined with the description.}
    \label{fig:fdr_power_comparison_minigpt4_flickr}
\end{figure*}

\begin{figure*}[t]
    \centering
    \begin{subfigure}[b]{0.245\textwidth}
        \centering
        \includegraphics[width=\textwidth]{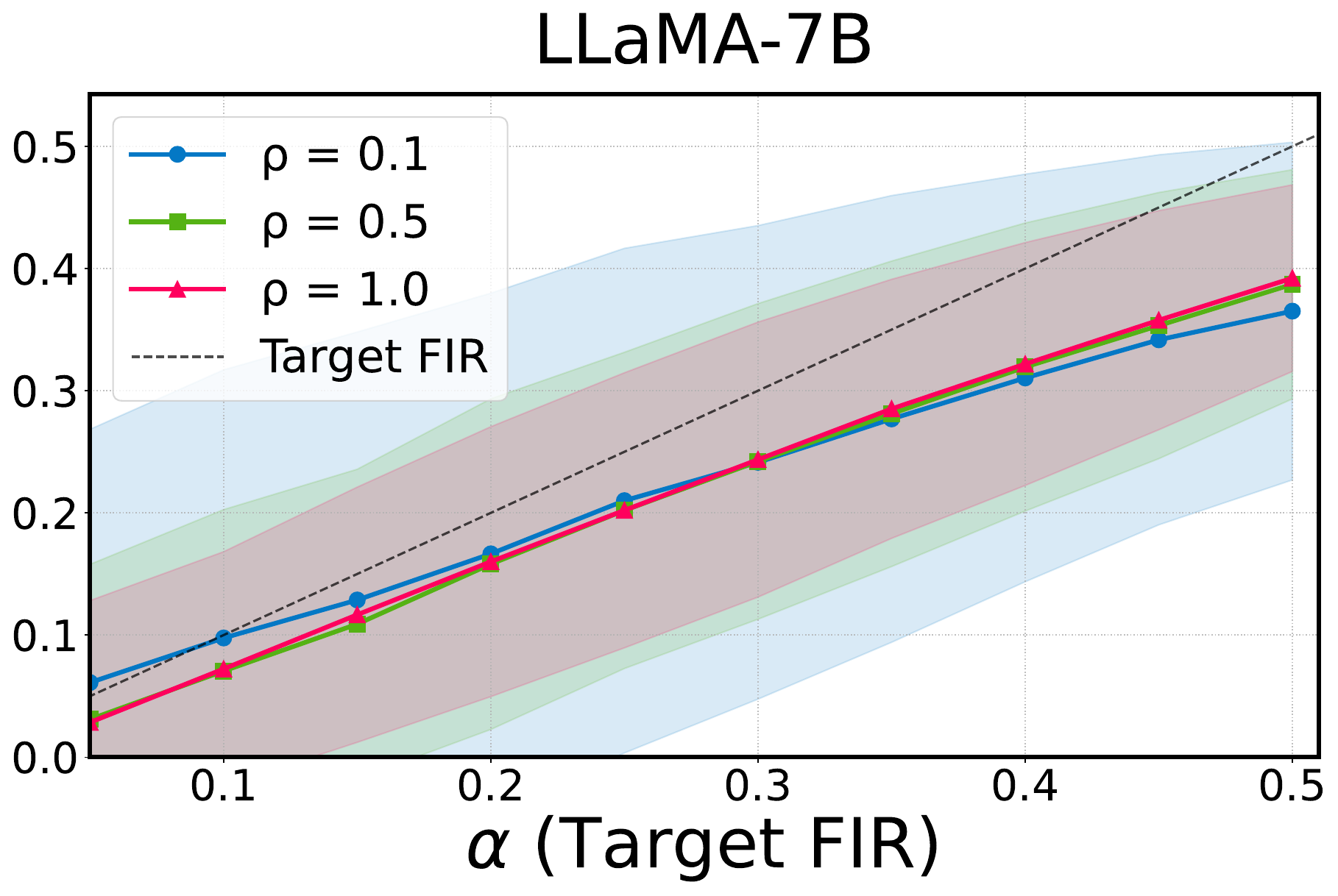}
        \caption{Perplexity}
    \end{subfigure}
    \hfill
    \begin{subfigure}[b]{0.245\textwidth}
        \centering
        \includegraphics[width=\textwidth]{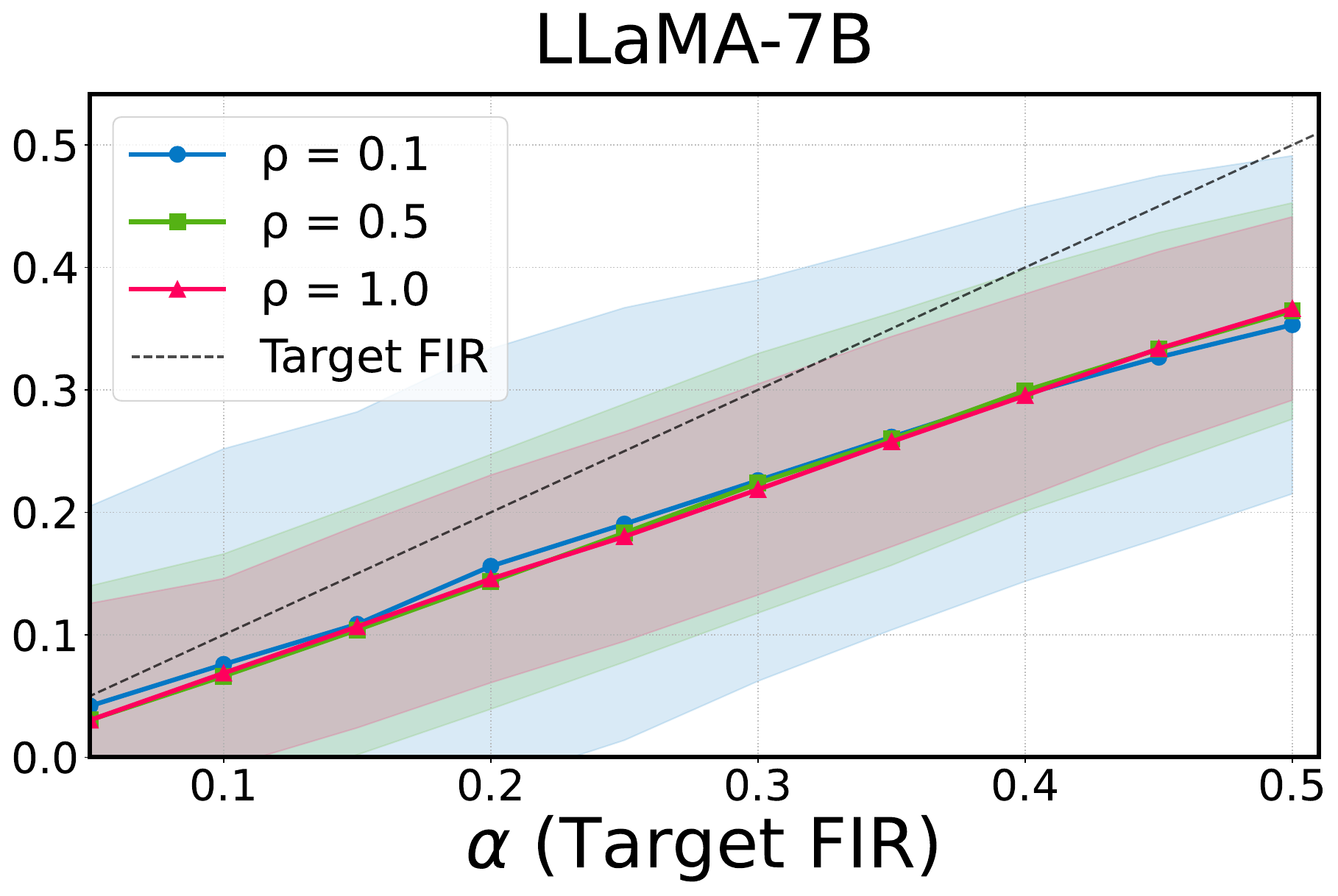}
        \caption{MIN-K\%}
    \end{subfigure}
    \hfill
    \begin{subfigure}[b]{0.245\textwidth}
        \centering
        \includegraphics[width=\textwidth]{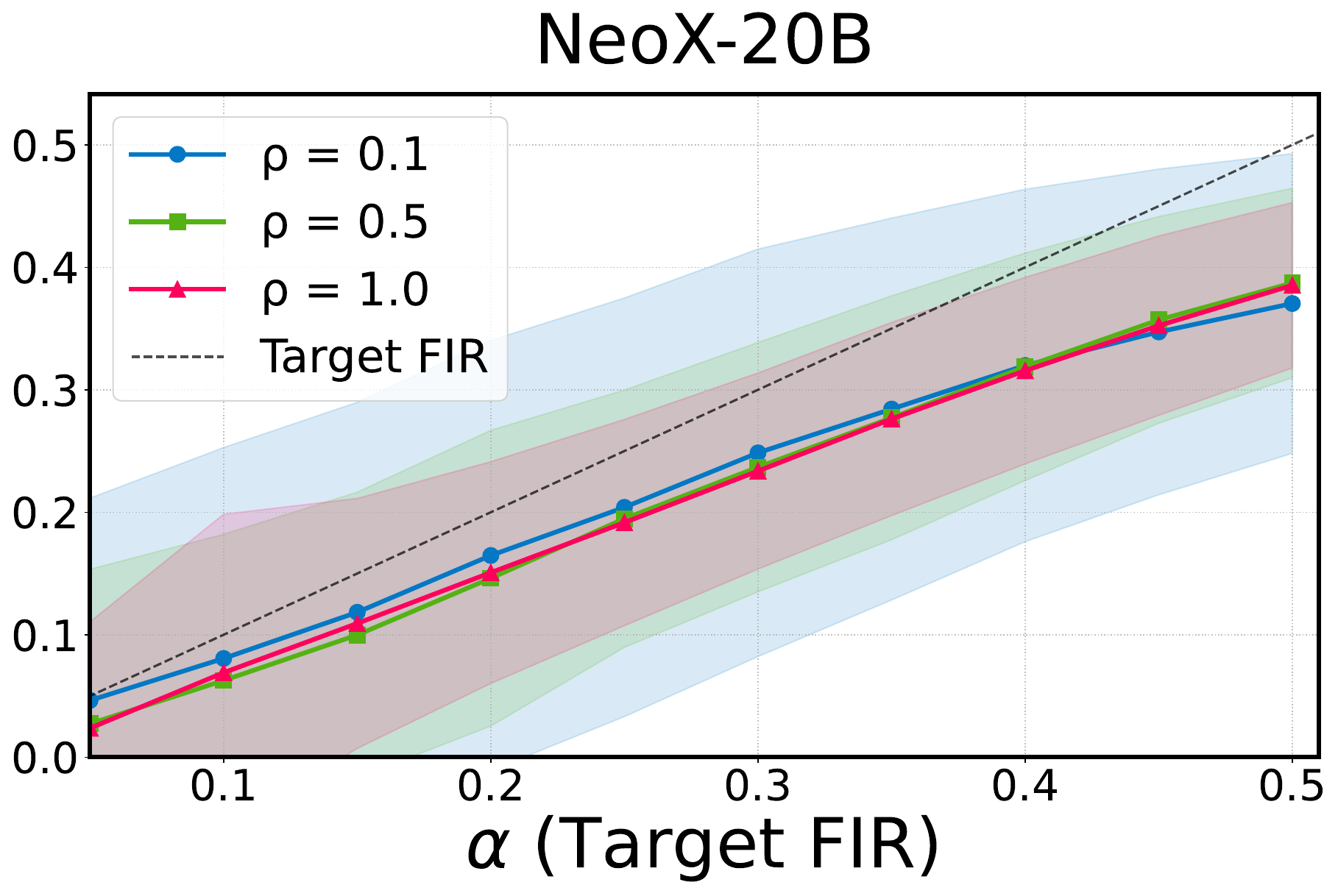}
        \caption{Perplexity}
    \end{subfigure}
    \hfill
    \begin{subfigure}[b]{0.245\textwidth}
        \centering
        \includegraphics[width=\textwidth]{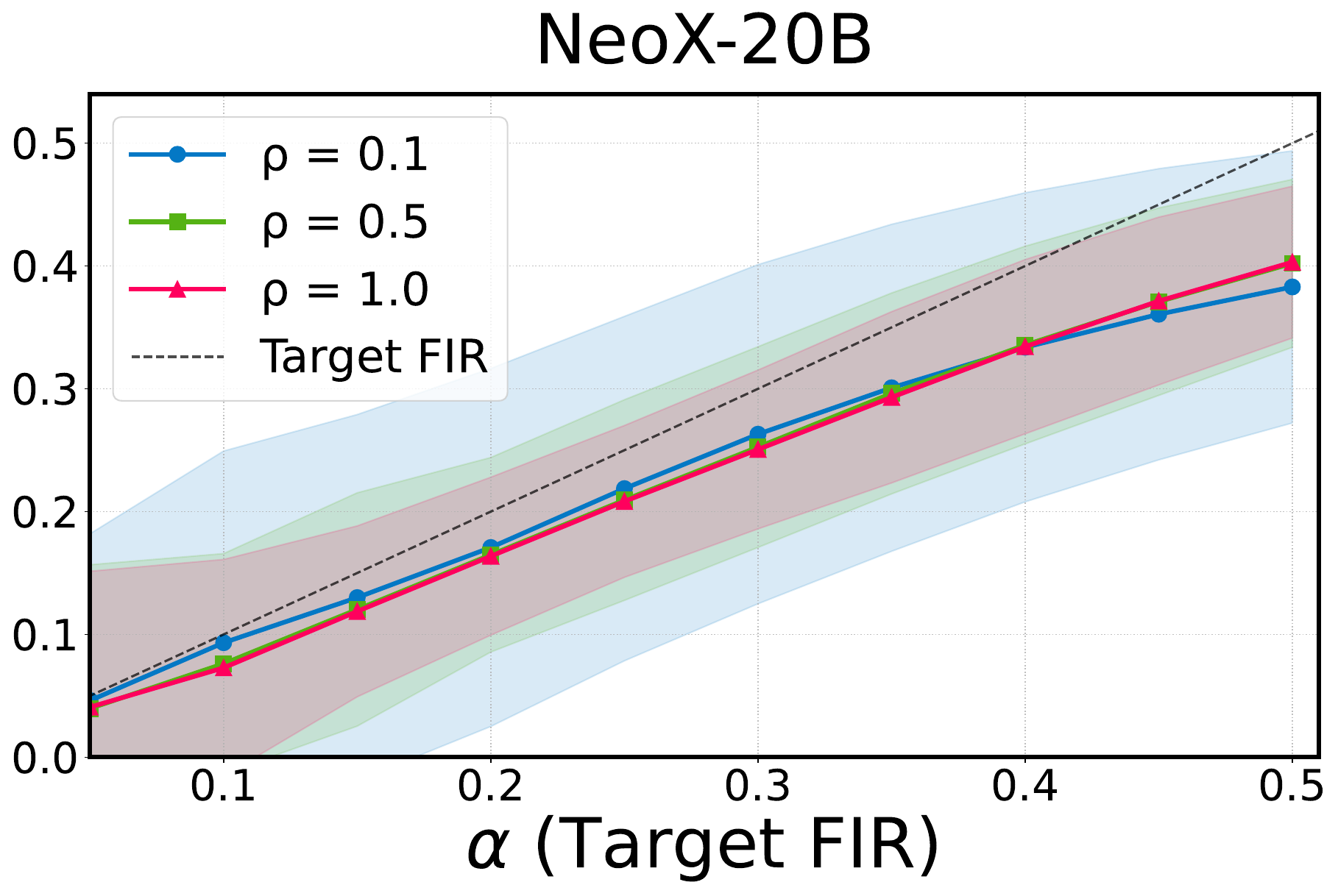}
        \caption{MIN-K\%}
    \end{subfigure}
    
    \caption{\metricabbr{} curve achieved by our method under varying calibration set sizes. 
        The parameter $\rho=n/m$ represents the ratio of the calibration set size ($n$) to the test set size ($m$). 
        Shaded regions correspond to the mean ± one standard deviation.  }
    \label{fig:abla:dl_fdr_vary_rho}
\end{figure*}

\subsection{Ablation Study}

\paragraph{Effect of selection procedures on power. }  To analyze the effect of different selection procedures, we compare our method with several alternative approaches, including BH-Storey \cite{storey2002direct}, BKY \cite{benjamini2006adaptive}, Quantile-BH \cite{benjamini2000adaptive}, and the standard BH procedure applied to unscaled p-values (Vanilla). 
We conduct experiments on the WikiMIA benchmark using three LLMs and three detection scores (Perplexity, MIN-K\%, and M-Entropy) at multiple target \metricabbr{} levels.  \Cref{tab:abla_power_comparison} presents that our method consistently achieves higher power across models, detection scores, and \metricabbr{} targets. 
For example, under NeoX-20B with the MIN-K\% score at $\alpha=0.1$, our method improves power from 0.46\% to 1.65\%, a 258\% relative gain over the vanilla baseline. Implementation details of the baselines are provided in \Cref{appendix:baselines}, and further discussion of related methods is in \Cref{appendix:related}.

\paragraph{Robustness of error control to variations in $\pi_{\mathrm{test}}$.}
We assess the robustness of our method by evaluating its performance varying the proportion of training members in the test data, $\pi_{\text{test}}$. This analysis utilizes the MiniGPT-4 vision-language model \citep{zhu2024minigpt} and the VL-MIA/Flickr dataset \citep{li2024membership}. The detection score $T(X)$ is based on the MaxR\'{e}nyi-K\%  \citep{li2024membership}, configured with hyperparameters $K=100$ and $\gamma=0.5$. The results presented in \Cref{fig:fdr_power_comparison_minigpt4_flickr} demonstrate that the achieved \metricabbr{} is consistently bounded by the nominal level $\alpha$ across all tested values of $\pi_{\text{test}}$, thereby validating the effectiveness of our approach.


\paragraph{Impact of calibration set size. } We investigate the effect of calibration set size on our method by evaluating \metricabbr{} and power on ArXivTection with different models. Specifically, let $\rho = n/m $ denote the ratio of the calibration set size to the test set size. We vary $\rho$ in \{0.1, 0.5, 1\}.  \Cref{fig:abla:dl_fdr_vary_rho} presents that our method effectively controls \metricabbr{} across all tested $\rho$ values. In addition, increasing the calibration set size reduces the variance of \fdpabbr{}, resulting in more stable training data identification.

\begin{figure*}[t]
    \centering
    \begin{subfigure}[b]{0.32\textwidth}
        \centering
        \includegraphics[width=\textwidth]{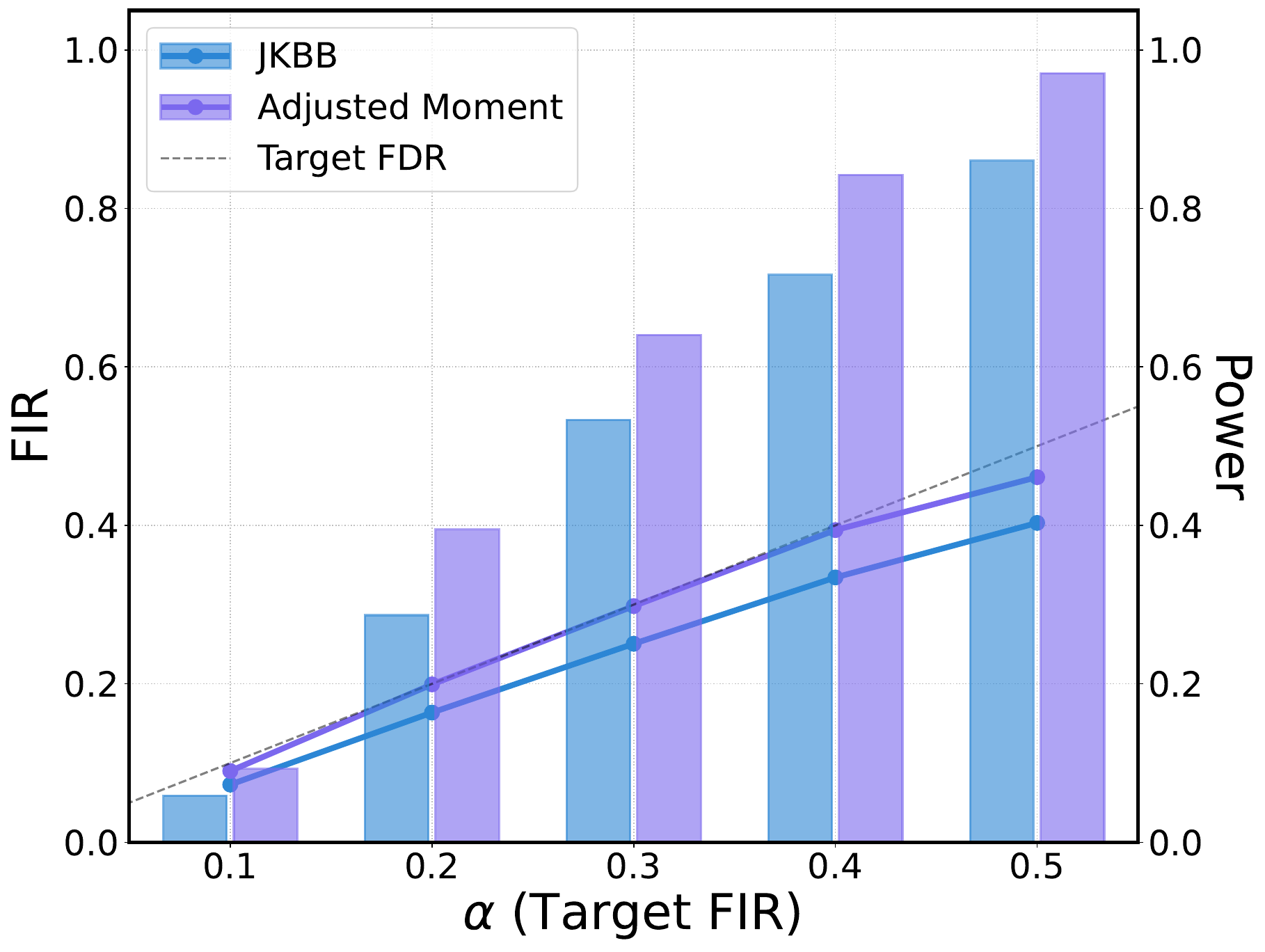}
        \caption{NeoX-20B}
    \end{subfigure}
    \hfill
    \begin{subfigure}[b]{0.32\textwidth}
        \centering
        \includegraphics[width=\textwidth]{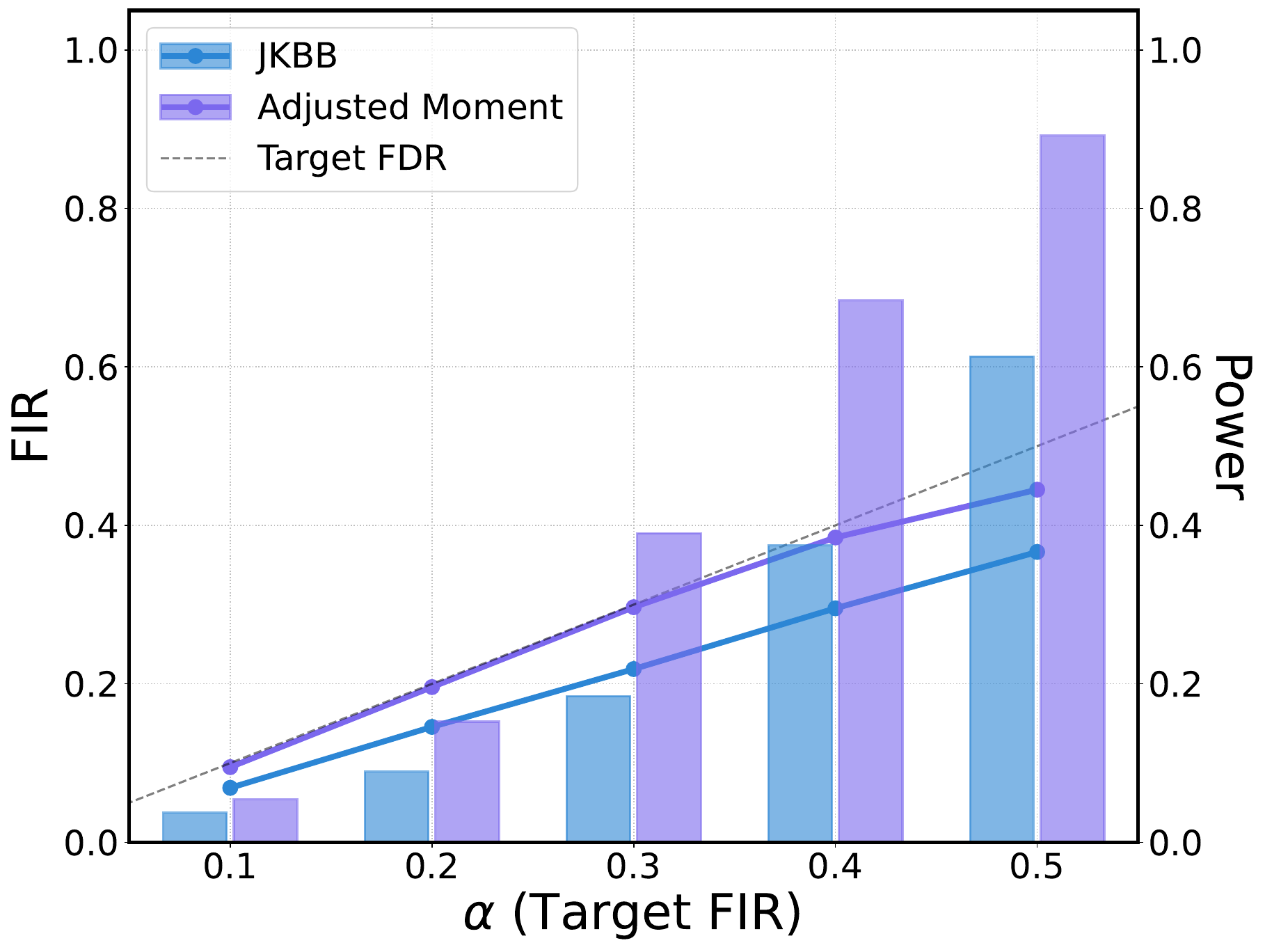}
        \caption{LLaMA-7B}
    \end{subfigure}
    \hfill
    \begin{subfigure}[b]{0.32\textwidth}
        \centering
        \includegraphics[width=\textwidth]{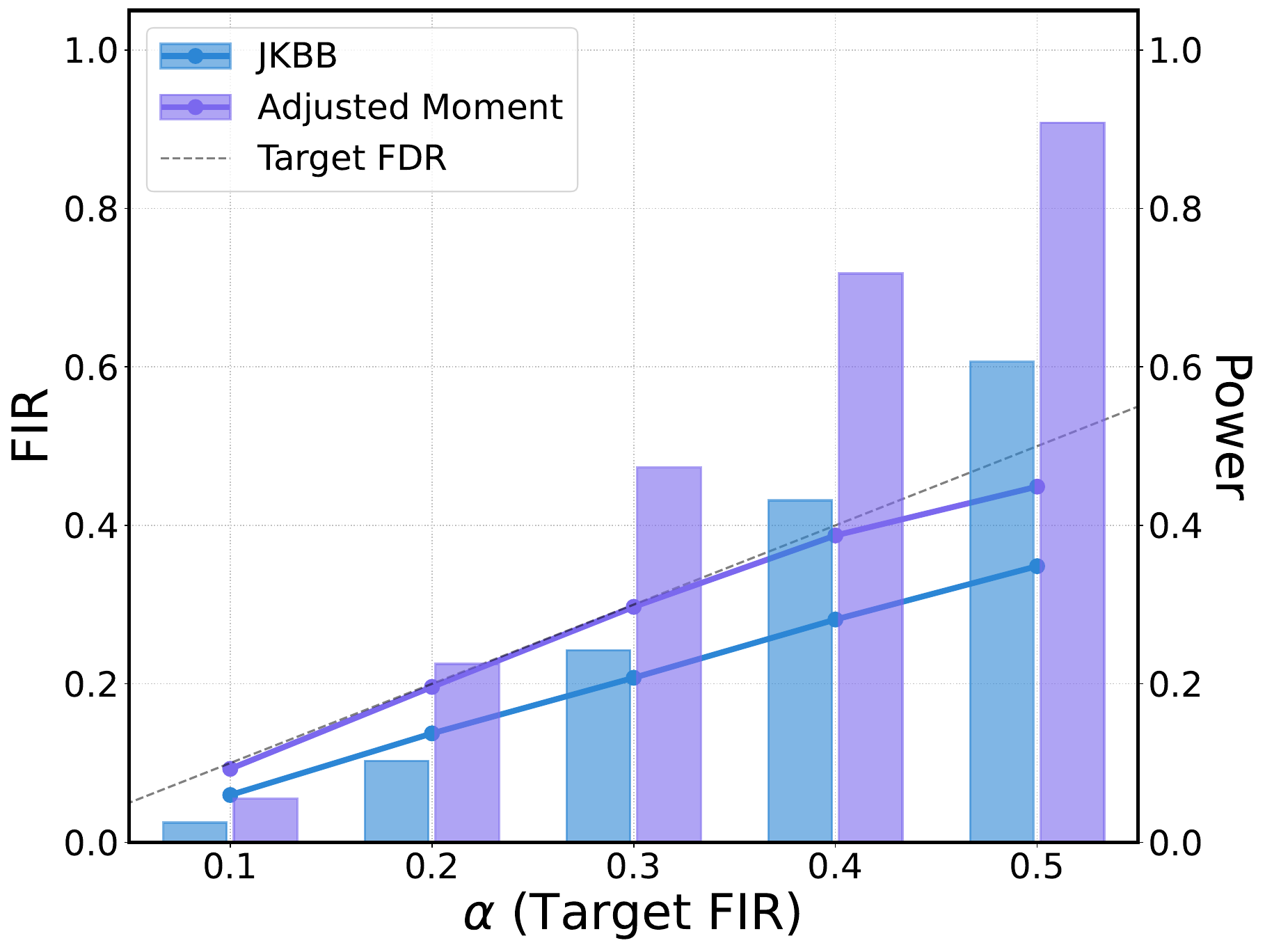}
        \caption{Pythia-6.9B}
    \end{subfigure}
    \caption{Performance of the JKBB and adjusted moment estimators on the WikiMIA dataset. Each plot shows the realized \metricabbr{} (solid lines) and statistical power (bars) for a given model.}
    \label{fig:moment}
    \vspace{-8pt}
\end{figure*}

\section{Discussion}
\paragraph{Leveraging confirmed training data to improve power.} In some auditing scenarios, a calibration set containing a mix of confirmed \textbf{members} and \textbf{non-members} is available, though with an arbitrary membership proportion. Such a set can be constructed by sampling from canonical public datasets known to be part of the model's training corpus (e.g., the Pile \citep{gao2020pile}) or by identifying instances of verbatim memorization. We argue that the information from known members can significantly enhance power. Accordingly, we propose a corresponding method based on the method of moments to estimate the data usage proportion.

For convenience, we define $\pi_0 = 1 - \pi_{\text{test}}$. The raw estimator by moment for $\pi_0$ is:
$$\hat{\pi}_{0, \text{raw}} = \frac{\hat{\mu}_1 - \hat{\mu}_{\text{test}}}{\hat{\mu}_1 - \hat{\mu}_0}$$
where $\hat{\mu}_0$, $\hat{\mu}_1$, and $\hat{\mu}_{\text{test}}$ are  means of the detection scores derived from the non-member calibration set $\mathcal{D}_{\text{cal}}^0$, member calibration set $\mathcal{D}_{\text{cal}}^1$, and $\mathcal{D}_{\text{test}}$, respectively. 
To mitigate the positive bias in $1/\hat{\pi}_{0, \text{raw}}$ caused by Jensen's inequality, we introduce a bias-corrected estimator for the reciprocal $\theta_{1/\pi_0} = 1/\pi_0$, by subtracting an estimate of the leading bias term:
\begin{equation}
\label{eq:adjusted_moment_estimate}
        \hat{\theta}_{1/\pi_0} = \frac{1}{\hat{\pi}_{0, \text{raw}}} - \frac{\widehat{\text{Var}}(\hat{\pi}_{0, \text{raw}})}{\hat{\pi}_{0, \text{raw}}^3}
\end{equation}
To implement this, we approximate the variance $\text{Var}(\hat{\pi}_{0, \text{raw}})$ using the Delta method:
\begin{equation}
\label{eq:adjusted_moment_estimate_var}
\begin{aligned}
\widehat{\mathrm{Var}}(\hat{\pi}_{0,\mathrm{raw}})
&= \frac{1}{(\hat{\mu}_1 - \hat{\mu}_0)^2}
\Bigg[
    \hat{\pi}_{0,\mathrm{raw}}^2 \frac{\hat{\sigma}_0^2}{n_0}
   \\ &+ (1-\hat{\pi}_{0,\mathrm{raw}})^2 \frac{\hat{\sigma}_1^2}{n_1} 
+ \frac{\hat{\sigma}_{\mathrm{test}}^2}{m}
\Bigg].
\end{aligned}
\end{equation}
where $n_0=|\mathcal{D}_{\text{cal}}^0|$, $n_1=|\mathcal{D}_{\text{cal}}^1|$, and $m=|\mathcal{D}_{\text{test}}|$. The terms $\hat{\sigma}_0^2, \hat{\sigma}_1^2, \hat{\sigma}_{\text{test}}^2$ are the corresponding sample variances. The final estimator used in our algorithm is the adjusted moment estimator  $\hat{\pi}_{\text{mom}} = 1 - 1/ \hat{\theta}_{1/\pi_0}$.

Crucially, for this extension to be reliable in legal or safety-critical contexts, it must maintain the rigorous error control of our original method. We now establish the theoretical guarantees, showing that $\hat{\pi}_{\text{mom}}$ yields a consistent estimate of $\pi_{\mathrm{test}}$ and thereby preserves strict \metricabbr{} control.
\begin{proposition}
\label{prop:consistency}
 Assume the detection scores for member, non-member, and test distributions have finite first and second moments. As the sample sizes of the calibration and test sets $n_0, n_1, m \to \infty$, the estimator $\hat{\pi}_{\text{mom}}$ is a consistent estimator for $\pi_{\text{test}}$. That is:
$$ \hat{\pi}_{\text{mom}} \xrightarrow{p} \pi_{\text{test}},$$
where $\xrightarrow{p}$ denotes convergence in probability.
\end{proposition}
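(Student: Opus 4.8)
\textbf{Proof proposal for Proposition~\ref{prop:consistency}.}

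The plan is to establish consistency by propagating the law of large numbers through the chain of continuous maps that define $\hat{\pi}_{\text{mom}}$. First I would observe that, under the finite-second-moment assumption, the sample means $\hat{\mu}_0, \hat{\mu}_1, \hat{\mu}_{\text{test}}$ converge in probability to their population counterparts $\mu_0, \mu_1, \mu_{\text{test}}$ by the weak law of large numbers, and likewise the sample variances $\hat{\sigma}_0^2, \hat{\sigma}_1^2, \hat{\sigma}_{\text{test}}^2$ converge to $\sigma_0^2, \sigma_1^2, \sigma_{\text{test}}^2$. The key structural identity is that the population version of the raw estimator satisfies $\frac{\mu_1 - \mu_{\text{test}}}{\mu_1 - \mu_0} = \pi_0$: indeed $\mu_{\text{test}} = \pi_{\text{test}}\mu_1 + (1-\pi_{\text{test}})\mu_0$ from the mixture decomposition of the test score distribution, so $\mu_1 - \mu_{\text{test}} = (1-\pi_{\text{test}})(\mu_1-\mu_0) = \pi_0(\mu_1-\mu_0)$. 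This requires the identifiability condition $\mu_1 \neq \mu_0$ (the member and non-member score means differ), which I would state explicitly as an implicit assumption — if they coincided, the detection score would carry no first-moment signal and the estimator would be undefined in the limit.

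Next I would apply the continuous mapping theorem: since $(\hat{\mu}_0,\hat{\mu}_1,\hat{\mu}_{\text{test}}) \xrightarrow{p} (\mu_0,\mu_1,\mu_{\text{test}})$ jointly and the map $(a,b,c)\mapsto \frac{b-c}{b-a}$ is continuous in a neighborhood of the limit point (as $\mu_1-\mu_0\neq 0$), we get $\hat{\pi}_{0,\text{raw}} \xrightarrow{p} \pi_0$. Then I would show the correction term $\widehat{\text{Var}}(\hat{\pi}_{0,\text{raw}})/\hat{\pi}_{0,\text{raw}}^3 \xrightarrow{p} 0$: examining \Cref{eq:adjusted_moment_estimate_var}, each of the three summands inside the bracket carries a factor $1/n_0$, $1/n_1$, or $1/m$ multiplying a quantity that converges in probability to a finite constant (a bounded function of the consistent estimators $\hat{\pi}_{0,\text{raw}}, \hat{\sigma}_\cdot^2$ and the nonvanishing denominator $(\hat{\mu}_1-\hat{\mu}_0)^2$), so the whole bracket is $o_p(1)$; dividing by $\hat{\pi}_{0,\text{raw}}^3 \xrightarrow{p} \pi_0^3 \neq 0$ preserves this, assuming $\pi_0 \neq 0$ (i.e.\ $\pi_{\text{test}} < 1$). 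Combining, $\hat{\theta}_{1/\pi_0} = 1/\hat{\pi}_{0,\text{raw}} - o_p(1) \xrightarrow{p} 1/\pi_0$ by the continuous mapping theorem and Slutsky, and one final application of continuous mapping gives $\hat{\pi}_{\text{mom}} = 1 - 1/\hat{\theta}_{1/\pi_0} \xrightarrow{p} 1 - \pi_0 = \pi_{\text{test}}$.

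The main obstacle — really the only subtle point — is handling the boundary and degeneracy cases cleanly: the argument implicitly needs $\mu_1 \neq \mu_0$ for the denominators to be bounded away from zero in the limit, and $\pi_0 \in (0,1)$ strictly for the reciprocal $1/\hat{\theta}_{1/\pi_0}$ to be well-behaved. I would make these regularity conditions explicit at the start of the proof (they are natural: a useful detection score separates the means, and $\pi_{\text{test}} \in (0,1)$ is the non-degenerate regime of interest). A secondary bookkeeping point is that the three sample sizes may go to infinity at different rates; this causes no difficulty because each variance term vanishes individually regardless of the relative rates, so I would simply let $\min(n_0,n_1,m)\to\infty$ and note that every $1/n_0,1/n_1,1/m$ factor tends to zero. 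Everything else is routine composition of the weak law of large numbers with Slutsky's theorem and the continuous mapping theorem.
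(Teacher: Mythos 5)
Your proposal is correct and follows essentially the same route as the paper's proof: weak law of large numbers for the sample moments, continuous mapping theorem to get $\hat{\pi}_{0,\text{raw}} \xrightarrow{p} \pi_0$, vanishing of the variance-based correction term via the $1/n_0, 1/n_1, 1/m$ factors plus Slutsky, and a final continuous-mapping step. You are somewhat more explicit than the paper about the regularity conditions ($\mu_1 \neq \mu_0$ and $\pi_0 > 0$, the latter only implicit in the paper's use of $\pi_0^3 \neq 0$), which is a welcome tightening but not a different argument.
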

 This proposition shows that the adjusted moment estimator converges to the $\pi_{\text{test}}$ as the data size is sufficiently large. We proceed by establishing the following  theorem:
\begin{theorem}
\label{theorem:mom_asymptotic_fdr}
Under the conditions of Proposition \ref{prop:consistency}, the \abbr{} procedure using the adjusted moment estimator $\hat{\pi}_{\text{mom}}$ achieves asymptotic \metricabbr{} control. Specifically,:
$$ \limsup_{n, m \to \infty} \mathrm{\metricabbr{}} \leqslant \alpha. $$
\end{theorem}
This theorem shows that the adjusted moment estimator strictly controls \metricabbr{}. The corresponding proofs are provided in the \Cref{appendix:consistency} and \Cref{appendix:proof_mom_fdr}. The details about the derivation of  $\hat{\pi}_{\text{mom}}$ is provided in \Cref{appendix:moment_estimator}. 

The empirical results in \Cref{fig:moment} align with the theoretical analysis, showing that the adjusted moment estimator can further improve power than the JKBB estimator while maintaining valid \metricabbr{} control.

\section{Related Work}
\paragraph{Training Data Detection in Large-Scale Models.}
Identifying training data within large-scale models is a critical task with significant real-world implications, including ensuring fair model evaluation and providing credible evidence in copyright litigation. A primary concern in academic research is data contamination \citep{li2024awesome}, where benchmark data leaks into the training set, leading to untrustworthy evaluation results \citep{DBLP:conf/acl/Magar022, zhou2023don, li2024c2leva}. To address these issues, numerous studies have developed heuristic detection scores \citep{mattern-etal-2023-membership, xie-etal-2024-recall, zhang-etal-2024-pretraining, raoof2025infilling, yi2026membership}. These include metrics like perplexity \citep{carlini2022membership}, MIN-k\% \citep{shi2024detecting}, MIN-k\%++ \citep{zhang2025mink} for LLMs, and MaxRényi-K\% for VLMs \citep{li2024membership}. However, these methods treat the task as a binary classification problem for individual points and lack the theoretical guarantees. \textit{Additional discussion of MIA is deferred in \Cref{appendix:related}. }

Seeking to add statistical rigor, another line of work provides theoretical guarantees. For instance, \citet{dekoninck2024constat} uses multiple reference models to construct valid statistical tests, and \citet{oren2023proving} leverages exchangeability for statistical inference. A key limitation, however, is that their guarantees apply only to dataset-level hypotheses—for example, determining if an entire dataset as a whole is contaminated. They are not designed for the fine-grained task of selecting a credible subset of individual data points. This is insufficient for practical applications, such as a copyright holder providing a specific list of infringed works, or an evaluator removing specific contaminated examples from a benchmark, a function supported by toolkits like lm-evaluation-harness~\citep{eval-harness}. In this paper, we provide a method that determines a subset from a given dataset with provable error guarantees.

\paragraph{Conformal inference.}

Conformal inference provides distribution-free uncertainty quantification by exploiting exchangeability \citep{VovkNG03,vovk2005algorithmic}. 
A central object in this framework is the conformal p-value, which compares the nonconformity score of a test point with those of calibration samples and is valid under the null hypothesis without requiring parametric assumptions. 
Classical conformal prediction (CP) uses these conformal p-values to construct a prediction set for a \textit{single} test samples with marginal coverage guarantees \citep{lei2014distribution, Angelopoulos2021AGI, huangconformal, xi2024does, liu2025cadapter, huang2025torchcp}. 
By contrast, a recent line of work uses conformal p-values for selection and multiple testing, where the goal is to select a subset of instances while controlling a set-level error criterion such as the false discovery rate; this paradigm is often referred to as conformal selection \citep{bates2023testing, jin2023selection, bai2025multivariate, jin2026model, hao2026multicondition}. 
Following this latter direction, we use conformal p-values to cast training data identification as a multiple-testing problem and control the false identification rate of the selected set.
\textit{A more detailed discussion on controlling the false identification rate is provided in \Cref{appendix:related}.}

\paragraph{Estimating Data Usage Proportion.} A key component of our method's ability to improve power is the estimation of the data usage proportion. This problem was recently formalized as Dataset Usage Cardinality Inference (DUCI) by \citet{tong2025how}.  However, this approach requires training reference models to estimate the necessary statistics, rendering it unsuitable for pre-training data detection in large-scale models where the training process is opaque and prohibitively expensive. In contrast, our proposed JKBB estimator is significantly more practical, as it only requires access to the target model, a set of confirmed non-member data, and the test set, making it a more versatile tool for real-world auditing scenarios.



\section{Conclusion}

In this paper, we formalize training-data identification as a set-level inference problem and introduce \abbr{}, a method that provides rigorous statistical guarantees for the identified set.
Our method leverages conformal p-values and the BH procedure to obtain distribution-free guarantees using only non-training calibration data. 
To improve power, we introduce a novel JKBB estimator for the data usage proportion. Leveraging this estimator, we apply a p-value scaling strategy that enhances power while maintaining theoretical guarantees.
When a small set of confirmed training data is available, we introduce an adjusted moment estimator to further improve the power. 
Extensive experiments show that \abbr{} consistently outperforms existing methods in power while maintaining strict \metricabbr{} control across diverse detection scores. Our method enables reliable training data identification in high-stakes applications, such as data copyright litigation, where strict false identification control is required.


\paragraph{Limitation and further work.} Though our method provides rigorous theoretical guarantees to control the false identification, it requires a calibration set of unseen data that is distributionally similar to the test set. A substantial divergence between the calibration and test data may compromise the validity of these guarantees. To address this issue, handling distribution shift by weighted conformal p-values may be a promising direction for future work.

\paragraph{Acknowledgements}
This research is supported by Guangdong Basic and Applied Basic Research Foundation (Grant No. 2026A1515011367). This project is also supported by the Jiangsu Provincial Key Discipline Construction Project (Statistics) and the open project of the Joint Lab for Statistics and Finance (Grant No. 2025JLSF101). This research is supported by the SUSTech-NUS Joint Research Program. 
Weiran Huang is supported by the National Natural Science Foundation of China (No. 62406192), Shanghai Municipal Special Program for Basic Research on General AI Foundation Models (Grant No. 2025SHZDZX025G03), Tencent WeChat Rhino-Bird Focused Research Program, and Kuaishou Technology. We gratefully acknowledge the support of the Center for Computational Science and Engineering at the Southern University of Science and Technology for our research.



\section*{Impact Statement}
This paper presents work whose goal is to advance the field of Machine Learning. There are many potential societal consequences of our work, none of which we feel must be specifically highlighted here.





\bibliography{main}

@misc{bartz2024,
  author = {{Bartz et al.}},
  title = {{Bartz et al. v. Anthropic PBC}},
  year = {2024},
  note = {N.D. Cal. Case No. 3:24-cv-05417-WHA}
}

@misc{disney2025,
  author = {{Disney Enterprises, Inc.}},
  title = {{Disney Enterprises, Inc. v. Midjourney, Inc.}},
  year = {2025},
  note = {C.D. Cal. Case No. 2:25-cv-05275}
}

@misc{strike2025,
  author       = {{Strike 3 Holdings, LLC and Counterlife Media, LLC}},
  title        = {Complaint for Copyright Infringement},
  howpublished = {United States District Court for the Northern District of California},
  year         = {2025},
  month        = {July},
  day          = {23},
  note         = {Case No. 4:25-cv-06213-KAW}
}

@inproceedings{shokri2017membership,
  title={Membership inference attacks against machine learning models},
  author={Shokri, Reza and Stronati, Marco and Song, Congzheng and Shmatikov, Vitaly},
  booktitle={IEEE Symposium on Security and Privacy},
  pages={3--18},
  year={2017},
  organization={IEEE}
}

@inproceedings{
tong2025how,
title={How much of my dataset did you use? Quantitative Data Usage Inference in Machine Learning},
author={Yao Tong and Jiayuan Ye and Sajjad Zarifzadeh and Reza Shokri},
booktitle={The Thirteenth International Conference on Learning Representations},
year={2025},
url={https://openreview.net/forum?id=EUSkm2sVJ6}
}

@misc{GDPR2016misc,
  author       = {{European Parliament} and {Council of the European Union}},
  title        = {Regulation (EU) 2016/679 (General Data Protection Regulation)},
  howpublished = {Official Journal of the European Union, L 119/1},
  year         = {2016},
  month        = {apr},
  day          = {27},
  note         = {Full title: Regulation (EU) 2016/679 of the European Parliament and of the Council of 27 April 2016 on the protection of natural persons with regard to the processing of personal data and on the free movement of such data, and repealing Directive 95/46/EC},
  url          = {https://eur-lex.europa.eu/legal-content/EN/TXT/?uri=CELEX%3A32016R0679}
}

@techreport{CCPA2018report,
  author       = {{California State Assembly}},
  title        = {The California Consumer Privacy Act of 2018 (CCPA)},
  institution  = {California Legislature},
  year         = {2018},
  number       = {AB-375},
  note         = {Codified in Cal. Civ. Code § 1798.100-1798.198},
  url          = {https://oag.ca.gov/privacy/ccpa}
}

@inproceedings{zhang2025position,
  title={Position: Membership Inference Attacks Cannot Prove That a Model was Trained on Your Data},
  author={Zhang, Jie and Das, Debeshee and Kamath, Gautam and Tram{\`e}r, Florian},
  booktitle={IEEE Conference on Secure and Trustworthy Machine Learning},
  pages={333--345},
  year={2025},
  organization={IEEE}
}

@article{li2023estimating,
  title={Estimating contamination via perplexity: Quantifying memorisation in language model evaluation},
  author={Li, Yucheng},
  journal={arXiv preprint arXiv:2309.10677},
  year={2023}
}

@inproceedings{
shi2024detecting,
title={Detecting Pretraining Data from Large Language Models},
author={Weijia Shi and Anirudh Ajith and Mengzhou Xia and Yangsibo Huang and Daogao Liu and Terra Blevins and Danqi Chen and Luke Zettlemoyer},
booktitle={International Conference on Learning Representations},
year={2024},
url={https://openreview.net/forum?id=zWqr3MQuNs}
}

@inproceedings{li2024membership,
  title={Membership inference attacks against large vision-language models},
  author={Li, Zhan and Wu, Yongtao and Chen, Yihang and Tonin, Francesco and Abad Rocamora, Elias and Cevher, Volkan},
  booktitle={Advances in Neural Information Processing Systems},
  volume={37},
  pages={98645--98674},
  year={2024}
}

@inproceedings{carlini2022membership,
  title={Membership inference attacks from first principles},
  author={Carlini, Nicholas and Chien, Steve and Nasr, Milad and Song, Shuang and Terzis, Andreas and Tramer, Florian},
  booktitle={IEEE Symposium on Security and Privacy (SP)},
  pages={1897--1914},
  year={2022},
  organization={IEEE}
}

@inproceedings{yeom2018privacy,
  title={Privacy risk in machine learning: Analyzing the connection to overfitting},
  author={Yeom, Samuel and Giacomelli, Irene and Fredrikson, Matt and Jha, Somesh},
  booktitle={ {IEEE} Computer Security Foundations Symposium},
  pages={268--282},
  year={2018},
  organization={IEEE}
}

@inproceedings{salem2018ml,
  author       = {Ahmed Salem and
                  Yang Zhang and
                  Mathias Humbert and
                  Pascal Berrang and
                  Mario Fritz and
                  Michael Backes},
  title        = {ML-Leaks: Model and Data Independent Membership Inference Attacks  
                  and Defenses on Machine Learning Models},
  booktitle    = {{Network and Distributed
System Security (NDSS) Symposium}},
  publisher    = {The Internet Society},
  year         = {2019}
}

@inproceedings{ye2022enhanced,
  title={Enhanced membership inference attacks against machine learning models},
  author={Ye, Jiayuan and Maddi, Aadyaa and Murakonda, Sasi Kumar and Bindschaedler, Vincent and Shokri, Reza},
  booktitle={ACM SIGSAC Conference on Computer and Communications Security},
  pages={3093--3106},
  year={2022}
}

@inproceedings{bertran2024scalable,
  title={Scalable membership inference attacks via quantile regression},
  author={Bertran, Martin and Tang, Shuai and Roth, Aaron and Kearns, Michael and Morgenstern, Jamie H and Wu, Steven Z},
  booktitle={Advances in Neural Information Processing Systems},
  volume={36},
  year={2024}
}

@inproceedings{zhan2024mia,
  author = {Li*, Zhan and Wu*, Yongtao and Chen*, Yihang and Tonin, Francesco and Abad Rocamora, Elias and Cevher, Volkan},
  title = {Membership Inference Attacks against Large Vision-Language Models},
  booktitle = {Advances in Neural Information Processing Systems},
  year = {2024}
}

@inproceedings{zarifzadeh2024low,  
title={Low-Cost High-Power Membership Inference Attacks},   
author={Zarifzadeh, Sajjad and Liu, Philippe and Shokri, Reza},  
booktitle={International Conference on Machine Learning},  
year={2024} }

@inproceedings{
fu2024membership,
title={Membership Inference Attacks against Fine-tuned Large Language Models via Self-prompt Calibration},
author={Wenjie Fu and Huandong Wang and Chen Gao and Guanghua Liu and Yong Li and Tao Jiang},
booktitle={Advances in Neural Information Processing Systems},
year={2024},
}

@inproceedings{carlini2021extracting,
  title={Extracting training data from large language models},
  author={Carlini, Nicholas and Tramer, Florian and Wallace, Eric and Jagielski, Matthew and Herbert-Voss, Ariel and Lee, Katherine and Roberts, Adam and Brown, Tom and Song, Dawn and Erlingsson, Ulfar and others},
  booktitle={USENIX Security Symposium},
  pages={2633--2650},
  year={2021}
}

@inproceedings{
liu2026how,
title={How does Bayesian Sampling help Membership Inference Attacks?},
author={Zhenlong Liu and Wenyu Jiang and Feng Zhou and Hongxin Wei},
booktitle={Forty-third International Conference on Machine Learning},
year={2026},
url={https://openreview.net/forum?id=rSytT800kl}
}

@inproceedings{
zhang2025finetuning,
title={Fine-tuning can Help Detect Pretraining Data from Large Language Models},
author={Hengxiang Zhang and Songxin Zhang and Bingyi Jing and Hongxin Wei},
booktitle={International Conference on Learning Representations},
year={2025},
url={https://openreview.net/forum?id=X8dzvdkQwO}
}

@article{jin2023selection,
  title={Selection by prediction with conformal p-values},
  author={Jin, Ying and Cand{\`e}s, Emmanuel J},
  journal={Journal of Machine Learning Research},
  volume={24},
  number={244},
  pages={1--41},
  year={2023}
}

@article{benjamini1995controlling,
  title={Controlling the false discovery rate: a practical and powerful approach to multiple testing},
  author={Benjamini, Yoav and Hochberg, Yosef},
  journal={Journal of the Royal Statistical Society},
  volume={57},
  number={1},
  pages={289--300},
  year={1995},
  publisher={Wiley Online Library}
}

@article{benjamini2001control,
  title={The control of the false discovery rate in multiple testing under dependency},
  author={Benjamini, Yoav and Yekutieli, Daniel},
  journal={Annals of Statistics},
  pages={1165--1188},
  year={2001},
  publisher={JSTOR}
}

@inproceedings{oren2023proving,
  title={Proving test set contamination in black-box language models},
  author={Oren, Yonatan and Meister, Nicole and Chatterji, Niladri S and Ladhak, Faisal and Hashimoto, Tatsunori},
  booktitle={International Conference on Learning Representations},
  year={2023}
}

@inproceedings{
hu2025a,
title={A Statistical Approach for Controlled Training Data Detection},
author={Zirui Hu and Yingjie Wang and Zheng Zhang and Hong Chen and Dacheng Tao},
booktitle={International Conference on Learning Representations},
year={2025},
url={https://openreview.net/forum?id=XAN8G0rvoB}
}

@inproceedings{
zhu2024minigpt,
title={Mini{GPT}-4: Enhancing Vision-Language Understanding with Advanced Large Language Models},
author={Deyao Zhu and Jun Chen and Xiaoqian Shen and Xiang Li and Mohamed Elhoseiny},
booktitle={International Conference on Learning Representations},
year={2024},
url={https://openreview.net/forum?id=1tZbq88f27}
}

@article{liu2023visual,
  title={Visual instruction tuning},
  author={Liu, Haotian and Li, Chunyuan and Wu, Qingyang and Lee, Yong Jae},
  journal={Advances in Neural Information Processing Systems},
  volume={36},
  pages={34892--34916},
  year={2023}
}

@article{hu2023llama,
  author       = {Hugo Touvron and
                  Thibaut Lavril and
                  Gautier Izacard and
                  Xavier Martinet and
                  Marie{-}Anne Lachaux and
                  Timoth{\'{e}}e Lacroix and
                  Baptiste Rozi{\`{e}}re and
                  Naman Goyal and
                  Eric Hambro and
                  Faisal Azhar and
                  Aur{\'{e}}lien Rodriguez and
                  Armand Joulin and
                  Edouard Grave and
                  Guillaume Lample},
  title        = {LLaMA: Open and Efficient Foundation Language Models},
  journal      = {CoRR},
  volume       = {abs/2302.13971},
  year         = {2023}
}

@inproceedings{biderman2023pythia,
  title={Pythia: A suite for analyzing large language models across training and scaling},
  author={Biderman, Stella and Schoelkopf, Hailey and Anthony, Quentin Gregory and Bradley, Herbie and O’Brien, Kyle and Hallahan, Eric and Khan, Mohammad Aflah and Purohit, Shivanshu and Prashanth, USVSN Sai and Raff, Edward and others},
  booktitle={International Conference on Machine Learning},
  pages={2397--2430},
  year={2023},
  organization={PMLR}
}

@inproceedings{black-etal-2022-gpt,
    title = "{GPT}-{N}eo{X}-20{B}: An Open-Source Autoregressive Language Model",
    author = "Black, Sidney  and
      Biderman, Stella  and
      Hallahan, Eric  and
      Anthony, Quentin  and
      Gao, Leo  and
      Golding, Laurence  and
      He, Horace  and
      Leahy, Connor  and
      McDonell, Kyle  and
      Phang, Jason  and
      Pieler, Michael  and
      Prashanth, Usvsn Sai  and
      Purohit, Shivanshu  and
      Reynolds, Laria  and
      Tow, Jonathan  and
      Wang, Ben  and
      Weinbach, Samuel",
    editor = "Fan, Angela  and
      Ilic, Suzana  and
      Wolf, Thomas  and
      Gall{\'e}, Matthias",
    booktitle = "Proceedings of BigScience Episode {\#}5 -- Workshop on Challenges {\&} Perspectives in Creating Large Language Models",
    month = may,
    year = "2022",
    address = "virtual+Dublin",
    publisher = "Association for Computational Linguistics",
    url = "https://aclanthology.org/2022.bigscience-1.9/",
    doi = "10.18653/v1/2022.bigscience-1.9",
    pages = "95--136",
    abstract = "We introduce GPT-NeoX-20B, a 20 billion parameter autoregressive language model trained on the Pile, whose weights will be made freely and openly available to the public through a permissive license. It is, to the best of our knowledge, the largest dense autoregressive model that has publicly available weights at the time of submission. In this work, we describe GPT-NeoX-20B{'}s architecture and training, and evaluate its performance. We open-source the training and evaluation code, as well as the model weights, at \url{https://github.com/EleutherAI/gpt-neox}."
}

@inproceedings{
duarte2024decop,
title={{DE}-{COP}: Detecting Copyrighted Content in Language Models Training Data},
author={Andr{\'e} Vicente Duarte and Xuandong Zhao and Arlindo L. Oliveira and Lei Li},
booktitle={International Conference on Machine Learning},
year={2024},
url={https://openreview.net/forum?id=LO4xhXmFal}
}

@article{Radford2019LanguageMA,
  title={Language models are unsupervised multitask learners},
  author={Radford, Alec and Wu, Jeffrey and Child, Rewon and Luan, David and Amodei, Dario and Sutskever, Ilya and others},
  journal={OpenAI blog},
  volume={1},
  number={8},
  pages={9},
  year={2019}
}

@article{gao2020pile,
  title={The Pile: An 800GB Dataset of Diverse Text for Language Modeling},
  author={Gao, Leo and Biderman, Stella and Black, Sid and Golding, Laurence and Hoppe, Travis and Foster, Charles and Phang, Jason and He, Horace and Thite, Anish and Nabeshima, Noa and others},
  journal={arXiv preprint arXiv:2101.00027},
  year={2020}
}

@inproceedings{narayan-etal-2018-dont,
    title = "Don{'}t Give Me the Details, Just the Summary! Topic-Aware Convolutional Neural Networks for Extreme Summarization",
    author = "Narayan, Shashi  and
      Cohen, Shay B.  and
      Lapata, Mirella",
    editor = "Riloff, Ellen  and
      Chiang, David  and
      Hockenmaier, Julia  and
      Tsujii, Jun{'}ichi",
    booktitle = "Conference on Empirical Methods in Natural Language Processing",
    doi = "10.18653/v1/D18-1206",
 year      = {2018},
}

@inproceedings{li2024latesteval,
  title={Latesteval: Addressing data contamination in language model evaluation through dynamic and time-sensitive test construction},
  author={Li, Yucheng and Guerin, Frank and Lin, Chenghua},
  booktitle={Proceedings of the AAAI Conference on Artificial Intelligence},
  volume={38},
  pages={18600--18607},
  year={2024}
}

@inproceedings{song2021systematic,
  title={Systematic evaluation of privacy risks of machine learning models},
  author={Song, Liwei and Mittal, Prateek},
  booktitle={USENIX Security Symposium},
  pages={2615--2632},
  year={2021}
}

@inproceedings{dong-etal-2024-generalization,
    title = "Generalization or Memorization: Data Contamination and Trustworthy Evaluation for Large Language Models",
    author = "Dong, Yihong  and
      Jiang, Xue  and
      Liu, Huanyu  and
      Jin, Zhi  and
      Gu, Bin  and
      Yang, Mengfei  and
      Li, Ge",
    editor = "Ku, Lun-Wei  and
      Martins, Andre  and
      Srikumar, Vivek",
    booktitle = "Findings of the Association for Computational Linguistics: ACL 2024",
    month = aug,
    year = "2024",
    address = "Bangkok, Thailand",
    publisher = "Association for Computational Linguistics",
    url = "https://aclanthology.org/2024.findings-acl.716/",
    doi = "10.18653/v1/2024.findings-acl.716",
    pages = "12039--12050",
}

@inproceedings{zhu-etal-2024-clean,
    title = "{CLEAN}{--}{EVAL}: Clean Evaluation on Contaminated Large Language Models",
    author = "Zhu, Wenhong  and
      Hao, Hongkun  and
      He, Zhiwei  and
      Song, Yun-Ze  and
      Yueyang, Jiao  and
      Zhang, Yumeng  and
      Hu, Hanxu  and
      Wei, Yiran  and
      Wang, Rui  and
      Lu, Hongyuan",
    editor = "Duh, Kevin  and
      Gomez, Helena  and
      Bethard, Steven",
    booktitle = "Findings of the Association for Computational Linguistics: NAACL 2024",
    month = jun,
    year = "2024",
    address = "Mexico City, Mexico",
    publisher = "Association for Computational Linguistics",
    url = "https://aclanthology.org/2024.findings-naacl.53/",
    doi = "10.18653/v1/2024.findings-naacl.53",
    pages = "835--847",
}

@article{zhao2024mmlu,
  title={Mmlu-cf: A contamination-free multi-task language understanding benchmark},
  author={Zhao, Qihao and Huang, Yangyu and Lv, Tengchao and Cui, Lei and Sun, Qinzheng and Mao, Shaoguang and Zhang, Xin and Xin, Ying and Yin, Qiufeng and Li, Scarlett and others},
  journal={arXiv preprint arXiv:2412.15194},
  year={2024}
}

@misc{eval-harness,
  author       = {Gao, Leo and Tow, Jonathan and Abbasi, Baber and Biderman, Stella and Black, Sid and DiPofi, Anthony and Foster, Charles and Golding, Laurence and Hsu, Jeffrey and Le Noac'h, Alain and Li, Haonan and McDonell, Kyle and Muennighoff, Niklas and Ociepa, Chris and Phang, Jason and Reynolds, Laria and Schoelkopf, Hailey and Skowron, Aviya and Sutawika, Lintang and Tang, Eric and Thite, Anish and Wang, Ben and Wang, Kevin and Zou, Andy},
  title        = {The Language Model Evaluation Harness},
  month        = 07,
  year         = 2024,
  publisher    = {Zenodo},
  version      = {v0.4.3},
  doi          = {10.5281/zenodo.12608602},
  url          = {https://zenodo.org/records/12608602}
}

@inproceedings{sainz-etal-2023-nlp,
    title = "{NLP} Evaluation in trouble: On the Need to Measure {LLM} Data Contamination for each Benchmark",
    author = "Sainz, Oscar  and
      Campos, Jon  and
      Garc{\'i}a-Ferrero, Iker  and
      Etxaniz, Julen  and
      de Lacalle, Oier Lopez  and
      Agirre, Eneko",
    editor = "Bouamor, Houda  and
      Pino, Juan  and
      Bali, Kalika",
    booktitle = "Findings of the Association for Computational Linguistics: EMNLP 2023",
    month = dec,
    year = "2023",
    address = "Singapore",
    publisher = "Association for Computational Linguistics",
    url = "https://aclanthology.org/2023.findings-emnlp.722/",
    doi = "10.18653/v1/2023.findings-emnlp.722",
    pages = "10776--10787"
}

@inproceedings{balloccu-etal-2024-leak,
    title = "Leak, Cheat, Repeat: Data Contamination and Evaluation Malpractices in Closed-Source {LLM}s",
    author = "Balloccu, Simone  and
      Schmidtov{\'a}, Patr{\'i}cia  and
      Lango, Mateusz  and
      Dusek, Ondrej",
    editor = "Graham, Yvette  and
      Purver, Matthew",
    booktitle = "Proceedings of the 18th Conference of the European Chapter of the Association for Computational Linguistics",
    month = mar,
    year = "2024",
    address = "St. Julian{'}s, Malta",
    publisher = "Association for Computational Linguistics",
    url = "https://aclanthology.org/2024.eacl-long.5/",
    doi = "10.18653/v1/2024.eacl-long.5",
    pages = "67--93",
}

@inproceedings{nasr2019comprehensive,
  title={Comprehensive privacy analysis of deep learning: Passive and active white-box inference attacks against centralized and federated learning},
  author={Nasr, Milad and Shokri, Reza and Houmansadr, Amir},
  booktitle={Proceedings of the IEEE Symposium on Security and Privacy (SP)},
  pages={739--753},
  year={2019},
  organization={IEEE}
}

@inproceedings{sablayrolles2019white,
  title={White-box vs black-box: Bayes optimal strategies for membership inference},
  author={Sablayrolles, Alexandre and Douze, Matthijs and Schmid, Cordelia and Ollivier, Yann and J{\'e}gou, Herv{\'e}},
  booktitle={International Conference on Machine Learning},
  pages={5558--5567},
  year={2019},
  organization={PMLR}
}

@article{liu2019socinf,
  title={Socinf: Membership inference attacks on social media health data with machine learning},
  author={Liu, Gaoyang and Wang, Chen and Peng, Kai and Huang, Haojun and Li, Yutong and Cheng, Wenqing},
  journal={IEEE Transactions on Computational Social Systems},
  volume={6},
  number={5},
  pages={907--921},
  year={2019},
  publisher={IEEE}}

@article{zhou2023don,
  title={Don't make your llm an evaluation benchmark cheater},
  author={Zhou, Kun and Zhu, Yutao and Chen, Zhipeng and Chen, Wentong and Zhao, Wayne Xin and Chen, Xu and Lin, Yankai and Wen, Ji-Rong and Han, Jiawei},
  journal={arXiv preprint arXiv:2311.01964},
  year={2023}
}

@inproceedings{DBLP:conf/acl/Magar022,
  author       = {Inbal Magar and
                  Roy Schwartz},
  title        = {Data Contamination: From Memorization to Exploitation},
  booktitle    = {Proceedings of the 60th Annual Meeting of the Association
for Computational Linguistics},
  pages        = {157--165},
  publisher    = {Association for Computational Linguistics},
  year         = {2022}
}

@inproceedings{
zhang2025mink,
title={Min-K\%++: Improved Baseline for Pre-Training Data Detection from Large Language Models},
author={Jingyang Zhang and Jingwei Sun and Eric Yeats and Yang Ouyang and Martin Kuo and Jianyi Zhang and Hao Frank Yang and Hai Li},
booktitle={International Conference on Learning Representations},
year={2025},
url={https://openreview.net/forum?id=ZGkfoufDaU}
}

@inproceedings{zhang-etal-2024-pretraining,
    title = "Pretraining Data Detection for Large Language Models: A Divergence-based Calibration Method",
    author = "Zhang, Weichao  and
      Zhang, Ruqing  and
      Guo, Jiafeng  and
      de Rijke, Maarten  and
      Fan, Yixing  and
      Cheng, Xueqi",
    editor = "Al-Onaizan, Yaser  and
      Bansal, Mohit  and
      Chen, Yun-Nung",
    booktitle = "Proceedings of the 2024 Conference on Empirical Methods in Natural Language Processing",
    month = nov,
    year = "2024",
    address = "Miami, Florida, USA",
    publisher = "Association for Computational Linguistics",
    url = "https://aclanthology.org/2024.emnlp-main.300/",
    doi = "10.18653/v1/2024.emnlp-main.300",
    pages = "5263--5274",
}

@inproceedings{
raoof2025infilling,
title={Infilling Score: A Pretraining Data Detection Algorithm for Large Language Models},
author={Negin Raoof and Litu Rout and Giannis Daras and Sujay Sanghavi and Constantine Caramanis and Sanjay Shakkottai and Alex Dimakis},
booktitle={International Conference on Learning Representations},
year={2025},
url={https://openreview.net/forum?id=9QPH1YQCMn}
}

@inproceedings{mattern-etal-2023-membership,
    title = "Membership Inference Attacks against Language Models via Neighbourhood Comparison",
    author = "Mattern, Justus  and
      Mireshghallah, Fatemehsadat  and
      Jin, Zhijing  and
      Schoelkopf, Bernhard  and
      Sachan, Mrinmaya  and
      Berg-Kirkpatrick, Taylor",
    editor = "Rogers, Anna  and
      Boyd-Graber, Jordan  and
      Okazaki, Naoaki",
    booktitle = "Findings of the Association for Computational Linguistics: ACL 2023",
    month = jul,
    year = "2023",
    address = "Toronto, Canada",
    publisher = "Association for Computational Linguistics",
    url = "https://aclanthology.org/2023.findings-acl.719/",
    doi = "10.18653/v1/2023.findings-acl.719",
    pages = "11330--11343",
}

@inproceedings{xie-etal-2024-recall,
    title = "{R}e{C}a{LL}: Membership Inference via Relative Conditional Log-Likelihoods",
    author = "Xie, Roy  and
      Wang, Junlin  and
      Huang, Ruomin  and
      Zhang, Minxing  and
      Ge, Rong  and
      Pei, Jian  and
      Gong, Neil Zhenqiang  and
      Dhingra, Bhuwan",
    booktitle = "Proceedings of the 2024 Conference on Empirical Methods in Natural Language Processing",
    month = nov,
    year = "2024",
    address = "Miami, Florida, USA",
    publisher = "Association for Computational Linguistics",
    url = "https://aclanthology.org/2024.emnlp-main.493",
    pages = "8671--8689",
}

@inproceedings{
dekoninck2024constat,
title={ConStat: Performance-Based Contamination Detection in Large Language Models},
author={Jasper Dekoninck and Mark Niklas Mueller and Martin Vechev},
booktitle={Advances in Neural Information Processing Systems},
year={2024},
url={https://openreview.net/forum?id=ALISPmDPCq}
}

@article{storey2002direct,
  title={A direct approach to false discovery rates},
  author={Storey, John D},
  journal={Journal of the Royal Statistical Society Series B: Statistical Methodology},
  volume={64},
  number={3},
  pages={479--498},
  year={2002},
  publisher={Oxford University Press}
}

@inproceedings{VovkNG03,
  author       = {Vladimir Vovk and
                  Ilia Nouretdinov and
                  Alex Gammerman},
  title        = {Testing Exchangeability On-Line},
  booktitle    = {{International Conference on Machine Learning}},
  pages        = {768--775},
  publisher    = {{AAAI} Press},
  year         = {2003}
}

@book{vovk2005algorithmic,
  title={Algorithmic learning in a random world},
  author={Vovk, Vladimir and Gammerman, Alexander and Shafer, Glenn},
  year={2005},
  publisher={Springer}
}

@article{bates2023testing,
  title={Testing for outliers with conformal p-values},
  author={Bates, Stephen and Cand{\`e}s, Emmanuel and Lei, Lihua and Romano, Yaniv and Sesia, Matteo},
  journal={The Annals of Statistics},
  volume={51},
  number={1},
  pages={149--178},
  year={2023},
  publisher={Institute of Mathematical Statistics}
}

@article{chen1999beta,
  title={Beta kernel estimators for density functions},
  author={Chen, Song Xi},
  journal={Computational Statistics \& Data Analysis},
  volume={31},
  number={2},
  pages={131--145},
  year={1999},
  publisher={Elsevier}
}

@article{schucany1977improvement,
  title={Improvement of kernel type density estimators},
  author={Schucany, WR and Sommers, John P},
  journal={Journal of the American Statistical Association},
  volume={72},
  number={358},
  pages={420--423},
  year={1977},
  publisher={Taylor \& Francis}
}

@article{DBLP:journals/jmlr/Neuvial2013asymptotic,
  author       = {Pierre Neuvial},
  title        = {Asymptotic results on adaptive false discovery rate controlling procedures
                  based on kernel estimators},
  journal      = {J. Mach. Learn. Res.},
  volume       = {14},
  number       = {1},
  pages        = {1423--1459},
  year         = {2013}
}

@article{benjamini2006adaptive,
  title={Adaptive linear step-up procedures that control the false discovery rate},
  author={Benjamini, Yoav and Krieger, Abba M and Yekutieli, Daniel},
  journal={biometrika},
  pages={491--507},
  year={2006},
  publisher={JSTOR}
}

@article{benjamini2000adaptive,
  title={On the adaptive control of the false discovery rate in multiple testing with independent statistics},
  author={Benjamini, Yoav and Hochberg, Yosef},
  journal={Journal of Educational and Behavioral Statistics},
  volume={25},
  number={1},
  pages={60--83},
  year={2000},
  publisher={Sage Publications Sage CA: Los Angeles, CA}
}

@article{huang2025torchcp,
  title={Torchcp: A python library for conformal prediction},
  author={Huang, Jianguo and Song, Jianqing and Zhou, Xuanning and Jing, Bingyi and Wei, Hongxin},
  journal={Journal of Machine Learning Research},
  volume={26},
  number={266},
  pages={1--25},
  year={2025}
}

@inproceedings{huangconformal,
title={Conformal Prediction for Deep Classifier via Label Ranking},
author={Jianguo Huang and Huajun Xi and Linjun Zhang and Huaxiu Yao and Yue Qiu and Hongxin Wei},
booktitle = {International Conference on Machine Learning (ICML)},
year = {2024}
}

@article{xi2024does,
  title={Does Confidence Calibration Help Conformal Prediction?},
  author={Xi, Huajun and Huang, Jianguo and Feng, Lei and Wei, Hongxin},
  journal={TMLR},
  year={2024}
}

@inproceedings{
  liu2025cadapter,
  title={C-Adapter: Adapting Deep Classifiers for Efficient Conformal Prediction Sets},
  author={Kangdao Liu and Hao Zeng and Jianguo Huang and Huiping Zhuang and Chi Man VONG and Hongxin Wei},
  booktitle={The 28th European Conference on Artificial Intelligence},
  year={2025},
}

@article{Angelopoulos2021AGI,
  title={A Gentle Introduction to Conformal Prediction and Distribution-Free Uncertainty Quantification},
  author={Anastasios Nikolas Angelopoulos and Stephen Bates},
  journal={ArXiv},
  year={2021},
  volume={abs/2107.07511},
  url={https://api.semanticscholar.org/CorpusID:235899036}
}

@article{lei2014distribution,
  title={Distribution-free prediction bands for non-parametric regression},
  author={Lei, Jing and Wasserman, Larry},
  journal={Journal of the Royal Statistical Society Series B: Statistical Methodology},
  volume={76},
  number={1},
  pages={71--96},
  year={2014},
  publisher={Oxford University Press}
}

@article{jin2026model,
  title={Model-free selective inference under covariate shift via weighted conformal p-values},
  author={Jin, Ying and Cand{\`e}s, Emmanuel J},
  journal={Biometrika},
  volume={113},
  number={1},
  pages={asaf066},
  year={2026},
  publisher={Oxford University Press}
}

@inproceedings{
bai2025multivariate,
title={Multivariate Conformal Selection},
author={Tian Bai and Yue Zhao and Xiang Yu and Archer Y. Yang},
booktitle={Forty-second International Conference on Machine Learning},
year={2025},
url={https://openreview.net/forum?id=g2tr7nA4pS}
}

@inproceedings{
hao2026multicondition,
title={Multi-Condition Conformal Selection},
author={Qingyang Hao and Wenbo Liao and Bingyi Jing and Hongxin Wei},
booktitle={The Fourteenth International Conference on Learning Representations},
year={2026},
url={https://openreview.net/forum?id=giL8Q1V26J}
}

@misc{li2024awesome,
  author = {Yanyang Li},
  title = {Awesome Data Contamination},
  year = {2024},
  publisher = {GitHub},
  journal = {GitHub repository},
  howpublished = {\url{https://github.com/lyy1994/awesome-data-contamination}},
}

@misc{yi2026membership,
      title={Membership Inference on LLMs in the Wild}, 
      author={Jiatong Yi and Yanyang Li},
      year={2026},
      eprint={2601.11314},
      archivePrefix={arXiv},
      primaryClass={cs.CL},
      url={https://arxiv.org/abs/2601.11314}, 
}

@misc{li2024c2leva,
      title={C$^2$LEVA: Toward Comprehensive and Contamination-Free Language Model Evaluation}, 
      author={Yanyang Li and Tin Long Wong and Cheung To Hung and Jianqiao Zhao and Duo Zheng and Ka Wai Liu and Michael R. Lyu and Liwei Wang},
      year={2024},
      eprint={2412.04947},
      archivePrefix={arXiv},
      primaryClass={cs.CL},
      url={https://arxiv.org/abs/2412.04947}, 
}
\bibliographystyle{icml2026}

\newpage
\appendix

\crefname{section}{Appendix}{Appendices}
\Crefname{section}{Appendix}{Appendices}

\onecolumn
\section{Additional Related Work}
\label{appendix:related}

\paragraph{Membership Inference Attacks.} From a privacy perspective, Membership Inference Attacks (MIAs) aim to determine if a specific data point was used to train a target model, which could expose sensitive information \citep{shokri2017membership, yeom2018privacy, salem2018ml}. A significant body of work treats this as a binary classification problem, relying on scores computed without reference models,  such as loss  \citep{yeom2018privacy}, entropy \citep{yeom2018privacy}, confidence \citep{liu2019socinf} and gradient norm \cite{nasr2019comprehensive, sablayrolles2019white}. While accessible, this approach focuses on average classification accuracy. 
Shifting to a more statistically-minded viewpoint, many prominent works correctly frame MIA as a hypothesis test, including Attack-P \citep{ye2022enhanced} and other prominent works \citep{carlini2022membership, zarifzadeh2024low}. These approaches prioritize metrics like the true positive rate (TPR) at a low false positive rate (FPR), but this still relies on an average-case error metric and fails to provide a formal statistical guarantee for any individual inference.
 Furthermore, some attacks like Attack-R \citep{ye2022enhanced} and BMIA \citep{liu2026how} estimate the conditional distribution of individual data points to control type-I error for specific inferences, but they require at least one reference model, making them unsuitable for detecting pre-training data in LLMs. Moreover, the average type-I error (FPR) is ill-suited for multiple-hypothesis testing, where controlling the \metricabbr{} is more appropriate for ensuring credible evidence across the selected membership set, as discussed in \cref{sec:mia_dilemma}. In this work, we propose a versatile method that ensures strict \metricabbr{} control and can be integrated with most MIA methods.

\paragraph{False Identification Rate Control.}

In this paper, we define the false identification rate as the key metric for provable training data identification.
This notion is inspired by the false discovery rate (FDR) \cite{benjamini1995controlling}, which was originally introduced to control false discoveries when rejecting null hypotheses in favor of new findings, such as an effective treatment.
Analogously, we introduce the False Identification Rate (FIR) to better align with the objective of training data identification, where the goal is to identify a set of training samples with a controlled risk of false identification.
A standard approach for controlling such set-level error is the Benjamini--Hochberg (BH) procedure \citep{benjamini1995controlling, benjamini2001control}, but it is conservative since it assumes that all null hypotheses might be true (number of true nulls $m_0 = m$). To address this, a variety of adaptive methods 
including BH-Storey \cite{storey2002direct}, BKY \cite{benjamini2006adaptive}, and Quantile-BH \cite{benjamini2000adaptive} are proposed to estimate the true proportion of null hypotheses $\pi_0$. However, the most widely used BH-Storey estimator is intrinsically biased as it estimates the p-value density at the boundary ($p=1$) by averaging over a fixed interval. In settings where the p-value density exhibits a downward trend near the boundary, this method yields an inflated estimate of $\pi_0$, leading to a conservative procedure that compromises statistical power \cite{DBLP:journals/jmlr/Neuvial2013asymptotic}. In this paper, we propose a novel JKBB estimator with provable properties and a task-specific optimal bandwidth, which achieves higher power than existing adaptive BH procedures.


Regarding the specific task of training data detection, \citet{hu2025a} proposed a method based on knockoff statistics to control \metricabbr{}. However, its effectiveness hinges on generating high-quality knockoffs (distributed symmetrically), a challenging task that can lead to unstable \metricabbr{} control in practice. Moreover, their approach does not explicitly attribute the failure of membership inference to provide reliable evidence in large-scale models (e.g., LLMs and VLMs) to the limitations of instance-wise hypothesis testing. In this paper, 
our work builds on the distribution-free conformal inference framework \cite{VovkNG03,vovk2005algorithmic,bates2023testing,jin2023selection}, allowing us to achieve rigorous \metricabbr{} control without relying on strong distributional assumptions.



\paragraph{Estimating Data Usage Proportion.} A key component of our method's ability to improve power is the estimation of the data usage proportion. This problem was recently formalized as Dataset Usage Cardinality Inference (DUCI) by \citet{tong2025how}.  However, this approach requires training reference models to estimate the necessary statistics, rendering it unsuitable for pre-training data detection in large-scale models where the training process is opaque and prohibitively expensive. In contrast, our proposed JKBB estimator is significantly more practical, as it only requires access to the target model, a set of confirmed non-member data, and the test set, making it a more versatile tool for real-world auditing scenarios.


\begin{table}[!h]
\centering
\caption{Summary of notations used in the paper.}
\label{tab:notations}
\renewcommand\arraystretch{1.2}
\begin{tabular}{ll}
\toprule
\textbf{Notation} & \textbf{Description} \\
\midrule
$\theta$ & The target model. \\
$\mathcal{D}_{\text{train}}$ & The set of data used to train the model $\theta$. \\
$X, X_i$ & A data point and the $i$-th data point, respectively. \\
$M, M_i$ & true membership label for a data point $X$, $X_i$ (1 if it is in traning set $\mathcal{D}_{\text{train}}$, otherwise 0). \\
$\mathcal{D}_{\text{cal}}, \mathcal{D}_{\text{test}}$ & The calibration set and the test set, respectively. \\
$n, m$ & The number of samples in the calibration and test sets, respectively. \\
$T(X; \theta)$ & The detection score for data point $X$ (e.g., perplexity). Lower is more member-like. \\
$H_{0,j}$ & The null hypothesis that test point $X_{n+j}$ is a non-member ($M_{n+j}=0$). \\
$\mathcal{S}$ & The selected subset of indices from $\mathcal{D}_{\text{test}}$ rejected as null hypotheses. \\
$\mathrm{\fdpabbr{}}$ & The proportion of false identification in $\mathcal{S}$. \\
$\mathrm{\metricabbr{}}$ & The expected proportion of false identification in $\mathcal{S}$. \\
$\mathrm{Power}$ & The expected proportion of all true members that are correctly identified in $\mathcal{S}$. \\
$\alpha$ & The target False Discovery Rate (\metricabbr{}) level. \\
$\pi_{\text{test}}$ & The true proportion of training members in the test set. \\
$\hat{\pi}_{\text{test}}$ & An estimate of the data usage proportion $\pi_{\text{test}}$. \\
$p_j$ & The initial conformal p-value for the $j$-th test point. \\
$\tilde{p}_j$ & The scaled p-value, calculated as $(1-\hat{\pi}_{\text{test}}) p_j$. \\
$\mathcal{E}$ & A generic estimator function for the data usage proportion. \\

$\hat{\pi}_{\text{bdy}}$ & The Jackknife-corrected Beta Boundary (JKBB) estimator. \\
$f_{\text{test}}(p)$ & The marginal density function of the test p-values on $[0,1]$. \\
$b$ & Bandwidth parameter for the Beta kernel. \\
$\gamma$ & Scaling factor for the Jackknife dual-bandwidth construction. \\
$w_0, w_1$ & Weights for the Jackknife combination ($w_0 = \frac{\gamma}{\gamma-1}, w_1 = -\frac{1}{\gamma-1}$). \\
$K_{1,b}(t)$ & The boundary Beta kernel function evaluated at $p=1$. \\
$c_1, c_2$ & First and second-order bias coefficients of the density estimator. \\
$\Omega(\gamma)$ & The variance constant depending on the step size $\gamma$. \\
$\hat{\pi}_{\text{mom}}$ & The adjusted moment-based estimator for $\pi_{\text{test}}$. \\
$\eta$ & A quantile hyperparameter used to define the region $\mathcal{R}$. \\
$\mu_0, \mu_1$ & The true mean of detection scores for non-members and members. \\
$\hat{\mu}_0, \hat{\mu}_1$ & The sample mean of detection scores for non-members and members. \\
$m_0, m_1$ & The number of non-member and member samples in the test set, respectively. \\
\bottomrule
\end{tabular}
\end{table}

\section{Proofs}

\subsection{Auxiliary Lemmas and Propositions}

\begin{lemma}[Classical \metricabbr{} Control under PRDS \citep{benjamini2001control}]
\label{lemma:classic_fdr}
Given a set of p-values $\{p_j\}_{j=1}^m$ from $m$ hypothesis tests, of which $m_0$ are true null hypotheses. If the p-values corresponding to the true nulls are valid (i.e., they are super-uniformly distributed under the null), and if the entire p-value vector satisfies the Positive Regression Dependency on a Subset (PRDS) property, then the standard Benjamini-Hochberg (BH) procedure at a target level $\alpha$ controls the False Discovery Rate (\metricabbr{}) such that:
\[ \mathrm{\metricabbr{}} \leqslant \frac{m_0}{m} \alpha. \]
\end{lemma}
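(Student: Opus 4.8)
This is the Benjamini--Yekutieli theorem, and the plan is to reproduce their argument. Let $I_0 \subseteq \{1,\dots,m\}$ index the true nulls, so $|I_0|=m_0$; write $q_k = k\alpha/m$ for the BH critical values and $R$ for the (random) number of rejections, so that BH rejects $H_j$ exactly when $p_j\le q_R$ (with $R=0$ meaning no rejection). First I would rewrite the FDR as a double sum: on $\{R=k\}$ with $k\ge 1$ the rejection threshold is precisely $q_k$, so a null $j$ is rejected iff $p_j\le q_k$, and splitting on the value of $R$ gives
\[
\mathrm{FDR}
= \E\!\left[\frac{\sum_{j\in I_0}\mathds{1}\{H_j\text{ rejected}\}}{R\vee 1}\right]
= \sum_{j\in I_0}\sum_{k=1}^{m}\frac1k\,\Pr\!\left(p_j\le q_k,\ R=k\right).
\]
It then suffices to prove the per-null bound $\sum_{k=1}^m\frac1k\Pr(p_j\le q_k,R=k)\le \alpha/m$ for each $j\in I_0$; summing over the $m_0$ nulls yields $\mathrm{FDR}\le m_0\alpha/m$.

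Next I would record the two structural facts that feed the per-null bound. Because BH is a step-up procedure, raising any single p-value can only weakly decrease $R$, so for each $k$ the event $\{R\ge k\}$ is a decreasing (downward-closed) subset of $[0,1]^m$; by PRDS at the null coordinate $j$, the conditional survival function $G_k(u):=\Pr(R\ge k\mid p_j=u)$ is therefore non-increasing in $u$. And since a null p-value is super-uniform ($\Pr(p_j\le t)\le t$), one gets the ``super-uniformity lemma'': for any non-increasing $g\ge 0$ and any $s$, $\E[\mathds{1}\{p_j\le s\}\,g(p_j)]\le\int_0^s g(u)\,\mathrm du$. A useful bookkeeping device here --- and all one needs when $p_j$ is independent of the remaining p-values --- is the leave-one-out representation: on $\{H_j\text{ rejected}\}$ the count $R$ does not change if $p_j$ is set to $0$, so it equals there a quantity $R^{(j)}$ measurable with respect to $\{p_l\}_{l\ne j}$, and $\mathds{1}\{H_j\text{ rejected}\}/(R\vee 1)\le \mathds{1}\{p_j\le q_{R^{(j)}}\}/R^{(j)}$ pointwise; conditioning on the other p-values then settles the independent case in one line.

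The third step is the technical heart. Writing $\{R=k\}=\{R\ge k\}\setminus\{R\ge k+1\}$ and $\Pr(p_j\le q_k,R\ge k)=\int_0^{q_k}G_k(u)\,\mathrm dF_{p_j}(u)$, I would rearrange $\sum_k\frac1k\Pr(p_j\le q_k,R=k)$ by Abel summation so that the ``$R\ge k+1$'' contribution at index $k$ is matched against the corresponding contribution at index $k+1$, then invoke the monotonicity of the $G_k$ and the super-uniformity lemma --- applied in the direction that throws away \emph{non-negative} remainders --- and finally use $q_k-q_{k-1}=\alpha/m$ to see the telescoping coefficients close up, bounding the total by $\alpha/m$. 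The main obstacle is exactly this bookkeeping: arranging the Abel rearrangement so that PRDS-monotonicity and super-uniformity are both applied in the sense that preserves an \emph{upper} bound rather than a lower one. (This is also the step with no counterpart under arbitrary dependence, where the same scheme only yields the weaker $\tfrac{m_0}{m}\alpha\sum_{k=1}^{m}1/k$.) With the per-null bound in hand, summing over $j\in I_0$ finishes the proof.
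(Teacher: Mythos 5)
The paper never proves this lemma: it is imported verbatim from Benjamini and Yekutieli (2001) and used as a black box in the proofs that follow, so there is no in-paper argument to compare yours against. Your reconstruction follows the classical Benjamini--Yekutieli route, and the skeleton is sound: the decomposition $\mathrm{FDR}=\sum_{j\in I_0}\sum_{k=1}^m \frac1k\Pr(p_j\le q_k,\,R=k)$, the reduction to a per-null bound of $\alpha/m$, the observation that $R$ is coordinatewise nonincreasing so $\{R\ge k\}$ is a decreasing set, and the combination of PRDS monotonicity with super-uniformity are exactly the ingredients of the original proof (the leave-one-out device you mention is only needed for the independent case and plays no role under PRDS).

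The one place your sketch would break if implemented literally is the ``technical heart.'' The super-uniformity lemma you state controls $\E[\mathds{1}\{p_j\le s\}\,g(p_j)]$ only for integrals over $[0,s]$; it gives no control over increments $\int_{(q_{k-1},q_k]}G_k\,dF_{p_j}$, because $F_{p_j}$ may concentrate its mass just above $q_{k-1}$ while still satisfying $F_{p_j}(u)\le u$. So an Abel rearrangement that tries to pair super-uniformity with the interval pieces of $\int_0^{q_k}G_k\,dF_{p_j}$ does not close term by term. The standard completion conditions on the events $\{p_j\le q_k\}$ instead: write each summand as $\frac1k\Pr(p_j\le q_k)\bigl[\Pr(R\ge k\mid p_j\le q_k)-\Pr(R\ge k+1\mid p_j\le q_k)\bigr]$, bound $\Pr(p_j\le q_k)\le k\alpha/m$ by super-uniformity, and then use that $t\mapsto\Pr(R\ge k+1\mid p_j\le t)$ is nonincreasing --- a fact that does follow from your point-conditioned monotonicity of $G_{k+1}$ by an averaging argument you should state explicitly --- to replace $\Pr(R\ge k+1\mid p_j\le q_k)$ by $\Pr(R\ge k+1\mid p_j\le q_{k+1})$ and telescope to $\frac{\alpha}{m}\Pr(R\ge1\mid p_j\le q_1)\le\frac{\alpha}{m}$. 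With that substitution (and the trivial handling of indices with $\Pr(p_j\le q_k)=0$), your outline becomes a complete and correct proof of the lemma.
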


\begin{proposition}[\metricabbr{} Control for the BH Procedure on $p_j$]
\label{prop:unscaled_fdr_control}
Let the p-values $\{p_j\}_{j=1}^m$ be constructed as defined in \Cref{eq:pvalue}. Assume that each sample's membership status in the test set $\mathcal{D}_{\text{test}}$ is an independent Bernoulli trial, where the probability of being a member ($M=1$) is $\pi_{\text{test}}$. Further assume that the p-value vector $(p_1, \dots, p_m)$ satisfies the PRDS property.

If the standard Benjamini-Hochberg (BH) procedure is applied directly to these p-values $\{p_j^0\}$ at a target level $\alpha$, then its \metricabbr{} is controlled such that:
\[ \mathrm{\metricabbr{}} \leqslant \alpha \cdot (1-\pi_{\text{test}}). \]
\end{proposition}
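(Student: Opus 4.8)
\emph{Proof sketch.} The plan is to reduce the claim to the classical PRDS guarantee of \Cref{lemma:classic_fdr} by conditioning on the (random) set of true null hypotheses and then averaging. Write $\mathbf{M} = (M_{n+1},\dots,M_{n+m})$ for the vector of test memberships and $m_0 = m_0(\mathbf{M}) = \sum_{j=1}^m \mathds{1}\{M_{n+j}=0\}$ for the number of true nulls. It suffices to establish two things: (i) conditionally on $\mathbf{M}$, the conformal p-values for the null coordinates are valid (super-uniform) and the vector $(p_1,\dots,p_m)$ is still PRDS, so that \Cref{lemma:classic_fdr} applies with the realized count $m_0(\mathbf{M})$; and (ii) $\mathbb{E}[m_0(\mathbf{M})] = m(1-\pi_{\text{test}})$, which is immediate from the independent-Bernoulli assumption since $\Pr(M_{n+j}=0) = 1-\pi_{\text{test}}$ for every $j$.

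First I would verify validity of the null p-values. Fix an index $j$ with $M_{n+j}=0$. The detection score $T(\cdot;\theta)$ is a fixed measurable map, all calibration points are non-members, and a non-member test covariate has the same law as a calibration covariate; moreover test points are drawn independently. Hence, conditionally on $\mathbf{M}$, the scores $T_1,\dots,T_n,T_{n+j}$ are i.i.d., therefore exchangeable. A standard rank argument then shows that $p_j$ as defined in \Cref{eq:pvalue} satisfies $\Pr(p_j \le t \mid \mathbf{M}) \le t$ for all $t\in[0,1]$, i.e. the null p-values are conditionally super-uniform. Conditioning on $\mathbf{M}$ only records which test points are members and does not alter the joint law of $(T_1,\dots,T_n)$ together with the null test scores, so no additional assumptions are needed here.

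Next I would combine this with the PRDS hypothesis. Conditionally on $\mathbf{M}$, the true-null index set $\mathcal{H}_0(\mathbf{M}) = \{j : M_{n+j}=0\}$ is deterministic and of size $m_0(\mathbf{M})$, the corresponding p-values are valid by the previous step, and $(p_1,\dots,p_m)$ is PRDS by assumption; \Cref{lemma:classic_fdr} therefore gives the conditional bound $\mathbb{E}[\mathrm{FDP}\mid\mathbf{M}] \le \frac{m_0(\mathbf{M})}{m}\alpha$, where $\mathrm{FDP} = \sum_{j=1}^m \mathds{1}\{M_{n+j}=0,\, j\in\mathcal{S}\}/\max(|\mathcal{S}|,1)$. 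Taking expectations and using the tower property,
\[
\mathrm{FDR} = \mathbb{E}\big[\mathbb{E}[\mathrm{FDP}\mid\mathbf{M}]\big] \;\le\; \frac{\alpha}{m}\,\mathbb{E}\big[m_0(\mathbf{M})\big] \;=\; \frac{\alpha}{m}\sum_{j=1}^m \Pr(M_{n+j}=0) \;=\; \alpha\,(1-\pi_{\text{test}}),
\]
which is the desired inequality.

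The step I expect to require the most care is (i): making precise that \Cref{lemma:classic_fdr}, whose classical statement fixes the null set, still applies after conditioning on $\mathbf{M}$ — in particular that PRDS is inherited under this conditioning. If one prefers to sidestep conditioning altogether, the alternative is to rerun the Benjamini--Yekutieli self-consistency computation directly with a random null set: write $\mathrm{FDR} = \sum_{j=1}^m \mathbb{E}\big[\mathds{1}\{M_{n+j}=0\}\,\mathds{1}\{j\in\mathcal{S}\}/\max(|\mathcal{S}|,1)\big]$, and for each $j$ bound the summand by $\Pr(M_{n+j}=0)\cdot\alpha/m$ using super-uniformity of $p_j$ on the event $\{M_{n+j}=0\}$ together with the PRDS-based monotonicity of the BH cutoff; summing over $j$ reproduces $\alpha(1-\pi_{\text{test}})$. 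Either route leaves only the routine bookkeeping of the BH proof, which \Cref{lemma:classic_fdr} already packages.
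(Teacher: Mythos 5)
Your proof is correct and takes essentially the same route as the paper: condition on the test-set membership structure so that the classical PRDS/BH bound of \Cref{lemma:classic_fdr} gives $\mathbb{E}[\mathrm{FDP}\mid \cdot]\le \frac{m_0}{m}\alpha$ for the realized number of nulls, then take expectations using $\mathbb{E}[m_0]=m(1-\pi_{\text{test}})$. The only (minor, and in fact slightly cleaner) difference is that you condition on the full membership vector $\mathbf{M}$ rather than on the count $m_0$ as the paper does, which makes the step of invoking the fixed-null-set lemma a bit more precise.
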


\begin{proof}

First, we verify that the p-values $\{p_j\}$ generated by \Cref{eq:pvalue} satisfy the preconditions of Lemma \ref{lemma:classic_fdr}.

For any true null hypothesis $H_{0,j}$ (i.e., $M_{n+j}=0$), the p-value $p_j$ is constructed using a calibration set composed entirely of non-members. According to the principles of conformal prediction, when the test sample $X_{n+j}$ is also a non-member, its   score $T_{n+j}$ is exchangeable with the scores from the calibration set. This construction guarantees that $p_j$ is super-uniformly distributed under the null hypothesis, meaning $P(p_j \leqslant t \mid j \in \mathcal{H}_0) \leqslant t$ for all $t \in [0,1]$.

The PRDS property is validated by Lemma B.1 of \citet{bates2023testing, jin2023selection}. Since both conditions are met, we can apply the conclusion of Lemma \ref{lemma:classic_fdr} for a fixed $\mathcal{D}_{\text{test}}$:
$$ \mathrm{\metricabbr{}} = \mathbb{E}[\mathrm{\fdpabbr{}} \mid \mathcal{D}_{\mathrm{test}}] \leqslant \frac{m_0}{m} \alpha$$

Recall that $\pi_{\text{test}}= \Pr(M_j=1)$. For a fixed test set $\mathcal{D}_{\text{test}}$, this probability corresponds to the actual proportion of members, i.e., $\pi_{\text{test}}=1-m_0/m$. By substituting, we obtain:
\[ \mathrm{\metricabbr{}} \leqslant (1 - \pi_{\text{test}}) \cdot \alpha. \]
\end{proof}

\begin{proposition}[Asymptotic \metricabbr{} control of the plug-in scaled BH procedure]
\label{prop:plugin_fdr} 
Let $\{p_j\}_{j=1}^m$ denote the original p-values, and consider the \abbr\ procedure
that applies the Benjamini--Hochberg (BH) procedure at nominal level $\alpha$
to the scaled p-values
\[
\tilde p_j \;=\; \hat\pi_{0,m}\, p_j, 
\qquad j = 1,\dots,m,
\]
where $\hat\pi_{0,m} \coloneqq 1 - \hat\pi$ is an estimator of the null proportion
$\pi_{0,\mathrm{test}} = 1 - \pi_{\mathrm{test}}$.
Let $\mathrm{\metricabbr{}}_m$ denote the \metricfull{} of the resulting rejection set. Then the plug-in procedure satisfies the asymptotic \metricabbr{} bound
\[
\limsup_{m\to\infty}\mathrm{\metricabbr{}}_m
\;\leqslant\;
\frac{1-\pi_{\mathrm{test}}}{\pi_{0,\infty}}\alpha.
\]
\end{proposition}
\begin{proof}
Let $p_{(1)} \leqslant \dots \leqslant p_{(m)}$ be the ordered p-values. As shown, the plug-in procedure is equivalent to the standard BH procedure applied at a random level $\alpha'_m = \alpha / \hat{\pi}_{0,m}$. 

We denote $\text{\fdpabbr{}}(\alpha)$ as the \fdpabbr{} of the BH procedure at level $\alpha$ for a fixed set of p-values. A key property of the BH procedure is that the rejection set is nested and the \fdpabbr{} is asymptotically non-decreasing with respect to the significance level $\alpha$.

By the assumption $\hat{\pi}_{0,m} \xrightarrow{P} \pi_{0,\infty}$, for any small $\epsilon > 0$, the event $E_m = \{ \alpha'_m \leqslant \alpha/\pi_{0,\infty} + \epsilon \}$ occurs with probability tending to 1 as $m \to \infty$. On the event $E_m$, by the monotonicity of the BH procedure:
\[
\mathrm{FIP}(\alpha'_m) \leqslant \mathrm{FIP}\left( \frac{\alpha}{\pi_{0,\infty}} + \epsilon \right).
\]

Taking expectations and applying the result from \Cref{prop:unscaled_fdr_control} (which holds for any fixed level), we have:
\[
\mathbb{E}[\mathrm{FIP}(\alpha'_m) \mathbb{I}_{E_m}] \le \mathbb{E}\left[\mathrm{FIP}\left( \frac{\alpha}{\pi_{0,\infty}} + \epsilon \right)\right] \le (1-\pi_{\text{test}}) \left( \frac{\alpha}{\pi_{0,\infty}} + \epsilon \right).
\]

As $m \to \infty$ and $\epsilon \to 0$, since the \fdpabbr{} is bounded in $[0,1]$, the contribution of the complement event $E_m^c$ vanishes. Therefore:
\[
\limsup_{m\to\infty} \mathrm{\metricabbr{}}_m \leqslant \frac{1-\pi_{\text{test}}}{\pi_{0, \infty}} \alpha.
\]
This completes the proof.
\end{proof}

\subsection{The proof of \Cref{theorem:control_fdr}}
\label{appendix:proof:control_fdr}

\begin{proof}
The proof relies on the consistency of the boundary estimator and the asymptotic \metricabbr{} bound established in \Cref{prop:plugin_fdr}.

As we will formally prove in \Cref{prop:jkbb_consistency}, $\hat{\pi}_{0,m}$ converges in probability to the marginal probability density of the test p-values evaluated at $1$:
\begin{equation} \label{eq:pi_limit}
    \hat{\pi}_{0,m} = 1 - \hat{\pi}_{\text{bdy}} \xrightarrow{P} f_{\text{test}}(1) \quad \text{as } m \to \infty.
\end{equation}
Let $\pi_{0,\infty} \coloneqq f_{\text{test}}(1)$. According to \Cref{prop:plugin_fdr}, the asymptotic \metricabbr{} of the plug-in procedure satisfies:
\begin{equation} \label{eq:asymp_bound}
    \limsup_{m\to\infty} \mathrm{\metricabbr{}}_m \leqslant \frac{1-\pi_{\text{test}}}{\pi_{0,\infty}} \alpha.
\end{equation}

Since the density of member p-values is non-negative, i.e., $f(1 \mid M=1) \geqslant 0$, the limit $\pi_{0,\infty} \coloneqq f_{\text{test}}(1)$ serves as a conservative upper bound for the true null proportion $1-\pi_{\text{test}}$: 
\begin{equation} \label{eq:conservative_limit} 
    \pi_{0,\infty} = (1-\pi_{\text{test}}) + \pi_{\text{test}}f(1 \mid M=1) \geqslant 1-\pi_{\text{test}}. 
\end{equation} 

Hence, we conclude that
\[
\limsup_{m\to\infty} \mathrm{\metricabbr{}}_m \leqslant \frac{1-\pi_{\text{test}}}{1-\pi_{\text{test}}} \alpha = \alpha.
\]
This completes the proof.
\end{proof}

\subsection{The Proof of  \Cref{prop:consistency}}
\label{appendix:consistency}
\begin{proof}
We prove the consistency of $\hat{\pi}_{\text{mom}}$ by relying on the Weak Law of Large Numbers (WLLN) and the Continuous Mapping Theorem (CMT). Let $\mu_0, \mu_1$ and $\sigma_0^2, \sigma_1^2$ be the true means and variances of the detection scores for the non-member and member populations, respectively. The true mean of the test set is $\mu_{\text{test}} = (1-\pi_{\text{test}})\mu_0 + \pi_{\text{test}}\mu_1$, and we define the true non-member proportion as $\pi_0 = 1 - \pi_{\text{test}}$.

Under the assumption of finite moments, the WLLN ensures that the sample means $\hat{\mu}_0, \hat{\mu}_1, \hat{\mu}_{\text{test}}$ and sample variances $\hat{\sigma}_0^2, \hat{\sigma}_1^2, \hat{\sigma}_{\text{test}}^2$ converge in probability to their true population counterparts. Since the raw estimator $\hat{\pi}_{0, \text{raw}}$ is a continuous function of these sample means, the CMT implies its convergence in probability. Specifically, assuming $\mu_1 \neq \mu_0$ for the score to be informative, we have
$$ \hat{\pi}_{0, \text{raw}} = \frac{\hat{\mu}_1 - \hat{\mu}_{\text{test}}}{\hat{\mu}_1 - \hat{\mu}_0} \xrightarrow{p} \frac{\mu_1 - \mu_{\text{test}}}{\mu_1 - \mu_0} = \frac{\mu_1 - ((1-\pi_{\text{test}})\mu_0 + \pi_{\text{test}}\mu_1)}{\mu_1 - \mu_0} = \pi_0. $$
Next, we consider the bias-correction term $\frac{\widehat{\text{Var}}(\hat{\pi}_{0, \text{raw}})}{\hat{\pi}_{0, \text{raw}}^3}$. Recall that 
\begin{equation*}
    \widehat{\text{Var}}(\hat{\pi}_{0, \text{raw}}) = \frac{1}{(\hat{\mu}_1 - \hat{\mu}_0)^2} \left[ 
    \hat{\pi}_{0, \text{raw}}^2
    \frac{\hat{\sigma}_0^2}{n_0} + 
    (1-\hat{\pi}_{0, \text{raw}})^2 \frac{\hat{\sigma}_1^2}{n_1} + \frac{\hat{\sigma}_{\text{test}}^2}{m} \right],
\end{equation*}
where $n_0=|\mathcal{D}_{\text{cal}}^0|$, $n_1=|\mathcal{D}_{\text{cal}}^1|$, and $m=|\mathcal{D}_{\text{test}}|$. As $n_0, n_1, m \to \infty$, the estimated variance $\widehat{\text{Var}}(\hat{\pi}_{0, \text{raw}})$ converges in probability to zero because its constituent terms are scaled by $1/n_0, 1/n_1,$ or $1/m$, and the $\hat{\pi}_{0, \text{raw}}^3$ converges in probability to the non-zero constant $\pi_0^3$. Thus, the entire correction term converges to zero by Slutsky's theorem. It follows that $\hat{\theta}_{1/\pi_0} = 1/\hat{\pi}_{0, \text{raw}} - \frac{\widehat{\text{Var}}(\hat{\pi}_{0, \text{raw}})}{\hat{\pi}_{0, \text{raw}}^3} \xrightarrow{p} 1/ \pi_0$. Finally, since the final estimator is a continuous transformation of $\hat{\theta}_{1/\pi_0}$, another application of the CMT yields the desired result:
$$ \hat{\pi}_{\text{mom}} = 1 - \frac{1}{\hat{\theta}_{1/\pi_0}} \xrightarrow{p} 1 - \frac{1}{1/\pi_0} = 1 - \pi_0 = \pi_{\text{test}}.$$
This completes the proof.
\end{proof}

\subsection{Scaling procedure improves Power}

\begin{theorem}
    Let $\mathcal{S}_0$ and $\mathcal{S}_1$ be the selection sets from the standard Benjamini-Hochberg procedure and the scaling procedure, respectively. If $0 < \hat{\pi}_{\text{test}} < 1$, then we have:
\[ \text{Power}(\mathcal{S}_1) \geqslant \text{Power}(\mathcal{S}_0) \]
\end{theorem}

\begin{proof}

Let $p_{(1)} \leqslant \dots \leqslant p_{(m)}$ be the sorted p-values. Suppose that selection sets are determined by the stopping indices $k_0$ and $k_1$:
\begin{equation*}
    \mathcal{S}_0 = \{ j \mid p_j \leqslant p_{(k_0)} \}, \  \mathcal{S}_1 = \{ j \mid p_j \leqslant p_{(k_1)} \},
\end{equation*}
where $k_0 = \max \left\{ k \mid p_{(k)} \leqslant \frac{k}{m}\alpha \right\}$ and $k_1 = \max \left\{ k \mid \tilde{p}_{(k)} \leqslant \frac{k}{m}\alpha \right\}$.

Substituting $\tilde{p}_{(k)} = (1 - \hat{\pi}_{\text{test}})p_{(k)}$ into the condition for $k_1$ yields the equivalent inequality:
\begin{align*}
    \tilde{p}_{(k)} \leqslant \frac{k}{m}\alpha 
    \iff (1 - \hat{\pi}_{\text{test}}) p_{(k)} \leqslant \frac{k}{m}\alpha 
    \iff p_{(k)} \leqslant \frac{k}{m} \left( \frac{\alpha}{1 - \hat{\pi}_{\text{test}}} \right)
\end{align*}
Since $0<\hat{\pi}_{\text{test}}<1$, strictly weaker rejection criterion applies to $\mathcal{S}_1$:
\[
    p_{(k)} \leqslant \frac{k}{m}\alpha \implies p_{(k)} \leqslant \frac{k}{m}\alpha \leqslant \frac{k}{m}\left( \frac{\alpha}{1 - \hat{\pi}_{\text{test}}} \right)
\]
This implication ensures that any index $k$ satisfying the condition for $\mathcal{S}_0$ also satisfies it for $\mathcal{S}_1$. Therefore:
\[
    k_0 \leqslant k_1 \implies p_{(k_0)} \leqslant p_{(k_1)} \implies \mathcal{S}_0 \subseteq \mathcal{S}_1
\]
By the definition of power, we have:
\begin{equation}
    \text{Power}(\mathcal{S}_1) \geqslant \text{Power}(\mathcal{S}_0) 
\end{equation}
This completes the proof. 

\end{proof} 

\section{Detail Derivation for JKBB Estimator}
\label{app:detail_jkbb}

In this section, we derive the optimal bandwidth choice for the Jackknife-corrected Beta Boundary (JKBB) estimator at the boundary point $p=1$, and establish its consistency under standard regularity conditions. We start by deriving the bias. 

\subsection{Bias Expansion of the Boundary Beta Kernel Estimator}

Let $f$ be a probability density supported on $[0,1]$, twice continuously differentiable in a neighborhood of $p=1$.
The Beta boundary kernel estimator at $p=1$ with bandwidth $b>0$ is defined as
\begin{equation}
\label{eq:beta_boundary_kernel}
K_{1,b}(t) = \left(\frac{1}{b}+1\right)t^{\frac{1}{b}}
, \qquad t\in[0,1].
\end{equation}
Given i.i.d.\ samples $\{p_j\}_{j=1}^m$ drawn from $f$, the estimator is
\[
\widehat f_b(1) = \frac{1}{m}\sum_{j=1}^m K_{1,b}(p_j).
\]

Its expectation admits the integral representation
\begin{equation}
\label{eq:boundary_expectation}
\mathbb{E}[\widehat f_b(1)]
= \int_0^1 \left(\frac{1}{b}+1\right)t^{\frac{1}{b}} f(t)\,dt.
\end{equation}

Since the kernel mass concentrates near $t=1$ as $b\to0$, we expand $f(t)$ around $t=1$:
\[
f(t) = f(1) - f'(1)(1-t) + \frac12 f''(1)(1-t)^2 + O((1-t)^3).
\]

Substituting into \Cref{eq:boundary_expectation} yields
\[
\mathbb{E}[\widehat f_b(1)]
= f(1) I_0 - f'(1) I_1 + \frac12 f''(1) I_2 + O(b^3),
\]
where the kernel moments are
\[
I_k = \int_0^1 (1-t)^k \frac{1}{b} t^{\frac{1}{b}-1}\,dt
= \frac{1}{b} B\!\left(\frac{1}{b}, k+1\right).
\]

Using the identity
\[
\int_0^1 t^{a-1}(1-t)^k dt = B(a,k+1)
= \frac{\Gamma(a)\Gamma(k+1)}{\Gamma(a+k+1)},
\]
and the Gamma recursion formula, the kernel moments admit the closed form
\[
I_k
= \frac{1}{b}
\frac{k!}{(\tfrac{1}{b})(\tfrac{1}{b}+1)\cdots(\tfrac{1}{b}+k)}.
\]
A direct expansion in $b$ then yields
\[
I_0=1,\qquad
I_1=b-2b^2+O(b^3),\qquad
I_2=2b^2+O(b^3).
\]
Combining terms, the bias expansion becomes
\begin{equation}
\label{eq:bias_expansion_appendix}
\mathbb{E}[\widehat f_b(1)]
= f(1) - f'(1)b + \bigl(2f'(1)+f''(1)\bigr)b^2 + O(b^3).
\end{equation}

Thus, the first and second-order bias coefficients are
\[
c_1 = -f'(1), \qquad
c_2 = 2f'(1) + f''(1).
\]

\subsection{Jackknife Bias Correction}

To eliminate the leading $O(b)$ bias term in \Cref{eq:bias_expansion_appendix},
we adopt a dual-kernel jackknife construction.
For a fixed $\gamma>1$, define
\begin{equation}
\label{eq:jk_estimator_appendix}
\widehat f_{\mathrm{jk}}(1)
=
w_0 \widehat f_b(1)
+
w_1 \widehat f_{\gamma b}(1),
\end{equation}
where the weights satisfy
\[
w_0 + w_1 = 1,
\qquad
w_0 + \gamma w_1 = 0,
\]
which yields
\[
w_0 = \frac{\gamma}{\gamma-1}, \quad w_1 = -\frac{1}{\gamma-1}.
\]

By linearity of expectation and the expansion
\[
\mathbb{E}[\widehat f_{h}(1)]=f(1) + c_1 h + c_2 h^2 + O(h^3),
\]
we have
\begin{align*}
\mathbb{E}[\widehat f_{\mathrm{jk}}(1)]
&=f(1)+c_1 (w_0 + \gamma w_1) b+c_2 (w_0 + \gamma^2 w_1) b^2+O(b^3)
\\
&=f(1)-\gamma c_2 b^2+O(b^3).
\end{align*}

Hence, the jackknife-corrected estimator has bias of order $O(b^2)$:
\begin{equation}
\label{eq:bias}
    \text{Bias}[\widehat{f}_{\mathrm{jk}}(1)] = -\gamma c_2 b^2 + O(b^3).
\end{equation}

\subsection{Variance and Mean Squared Error Analysis}
\label{app:var_mse}

Recall the definition of the boundary Beta kernel
\begin{equation}
    K_{1,b}(t) = \left(\frac{1}{b}+1\right)t^{1/b}, \qquad t \in [0,1],
\end{equation}
and the associated boundary density estimator
\begin{equation}
    \widehat{f}_b(1) = \frac{1}{m}\sum_{j=1}^m K_{1,b}(P_j).
\end{equation}

We first verify the normalization property. Direct integration yields
$$
\int_0^1 K_{1,b}(t)\,dt
= \left(\frac{1}{b}+1\right)\int_0^1 t^{1/b}\,dt
= \left(\frac{1}{b}+1\right) \frac{1}{1/b+1}
= 1.
$$

Next, we analyze the $L^2$ norm of the kernel. A direct calculation yields
\begin{align}
    \int_0^1 K_{1,b}(t)^2\,dt
    &= \left(\frac{1}{b}+1\right)^2 \int_0^1 t^{2/b}\,dt \nonumber \\
    &= \frac{(1+b)^2}{b^2} \cdot \frac{1}{2/b+1} \nonumber \\
    &= \frac{(1+b)^2}{b(2+b)} \nonumber \\
    &= 1 + \frac{1}{2b+b^2} \nonumber \\
    &= 1 + \frac{1}{2b(1+b/2)}.
    \label{eq:l2_exact}
\end{align}
Using the expansion $(1+b/2)^{-1} = 1 - b/2 + O(b^2)$ as $b \to 0$, we verify the asymptotic behavior:
\begin{align}
    \int_0^1 K_{1,b}(t)^2\,dt
    &= 1 + \frac{1}{2b}\big(1 - b/2 + O(b^2)\big) \nonumber \\
    &= \frac{1}{2b} + O(1).
    \label{eq:l2_kernel}
\end{align}

We now derive the variance of the Jackknife estimator 
$\widehat{f}_{\mathrm{jk}}(1) = w_0 \widehat{f}_b(1) + w_1 \widehat{f}_{\gamma b}(1)$.
To this end, we first compute the cross-term inner product between the two boundary kernels.
A direct calculation yields
\begin{align}
\int_0^1 K_{1,b}(t) K_{1,\gamma b}(t)\,dt
&=\left(\frac{1}{b}+1\right)\left(\frac{1}{\gamma b}+1\right)
\int_0^1 t^{\frac{1}{b}+\frac{1}{\gamma b}}\,dt \nonumber\\
&=\left(\frac{1}{b}+1\right)\left(\frac{1}{\gamma b}+1\right)
\frac{1}{\frac{1}{b}+\frac{1}{\gamma b}+1} \nonumber\\
&=\frac{(1+b)(1+\gamma b)}{b(\gamma+1+\gamma b)} \nonumber\\
&=\frac{1}{b(\gamma+1)}\cdot
\frac{(1+b)(1+\gamma b)}{1+\frac{\gamma}{\gamma+1}b} \nonumber\\
&=\frac{1}{b(\gamma+1)}
\Big(1+(\gamma+1)b+\gamma b^2\Big)
\Big(1-\frac{\gamma}{\gamma+1}b+O(b^2)\Big) \nonumber\\
&=\frac{1}{b(\gamma+1)}
\Big(1+\frac{\gamma^2+\gamma+1}{\gamma+1}b+O(b^2)\Big) \nonumber\\
&=\frac{1}{b(\gamma+1)}+\frac{\gamma^2+\gamma+1}{(\gamma+1)^2}+O(b)
=\frac{1}{b(\gamma+1)}+O(1).
\label{eq:cross_term}
\end{align}


Defining the composite kernel $Z(p) = w_0 K_{1,b}(p) + w_1 K_{1,\gamma b}(p)$, the variance is given by $\frac{1}{m}(\mathbb{E}[Z^2] - \mathbb{E}[Z]^2)$. Using the approximations \Cref{eq:l2_kernel} and \Cref{eq:cross_term}, and approximating $f(t) \approx f(1)$ near the boundary, we obtain:


\begin{align}
    \mathrm{Var}\big(\widehat{f}_{\mathrm{jk}}(1)\big)
    &= \frac{1}{m} \Big( \mathbb{E}[Z^2] - \mathbb{E}[Z]^2 \Big) \nonumber \\
    &\approx \frac{1}{m} \mathbb{E}[Z^2]  \\
    & \text{\small(since $\mathbb{E}[Z]^2 = O(1) \ll \mathbb{E}[Z^2]$)} \nonumber \\
    &= \frac{1}{m} \int_0^1 \Big( w_0 K_{1,b}(t) + w_1 K_{1,\gamma b}(t) \Big)^2 f(t) \, dt \nonumber \\
    &\approx \frac{f(1)}{m} \int_0^1 \Big( w_0 K_{1,b}(t) + w_1 K_{1,\gamma b}(t) \Big)^2 dt \\
    &= \frac{f(1)}{m} \Bigg[ 
        w_0^2 \underbrace{\|K_{1,b}\|_2^2}_{\approx \frac{1}{2b}} 
        + w_1^2 \underbrace{\|K_{1,\gamma b}\|_2^2}_{\approx \frac{1}{2\gamma b}} 
        + 2w_0 w_1 \underbrace{\langle K_{1,b}, K_{1,\gamma b} \rangle}_{\approx \frac{1}{b(\gamma+1)}} 
    \Bigg] \nonumber \\
    &= \frac{f(1)}{mb} \left( \frac{w_0^2}{2} + \frac{w_1^2}{2\gamma} + \frac{2w_0 w_1}{\gamma+1} \right) \big(1+o(1)\big).
    \label{eq:var_jk_final}
\end{align}

For notational convenience, we define the \textit{variance constant} $\Omega(\gamma)$ which depends solely on the step size $\gamma$:
\begin{equation}
    \Omega(\gamma) \equiv \frac{w_0^2}{2} + \frac{w_1^2}{2\gamma} + \frac{2w_0 w_1}{\gamma+1}.
    \label{eq:omega_def}
\end{equation}
The MSE is the sum of the squared bias (see \Cref{eq:bias}) and the variance:
\begin{align}
\label{}
    \mathrm{MSE}
    &= \Big( \text{Bias}[\widehat{f}_{\mathrm{jk}}(1)] \Big)^2 + \mathrm{Var}\big(\widehat{f}_{\mathrm{jk}}(1)\big) \nonumber \\
    &= \Big( -\gamma c_2 b^2 \Big)^2 + \frac{f(1)}{mb} \Omega(\gamma) \big(1+o(1)\big) \nonumber \\
    &= \gamma^2 c_2^2 b^4 + \frac{\Omega(\gamma)f(1)}{mb} + o(b^4) + o((mb)^{-1}).
    \label{eq:mse_final}
\end{align}

\subsection{Optimal Bandwidth Selection}

To determine the optimal bandwidth $b_{\mathrm{opt}}$, we minimize the asymptotic Mean Squared Error (AMSE) derived in \Cref{eq:mse_final}. Let the AMSE be denoted by $J(b)$:
\begin{equation}
    J(b) = \gamma^2 c_2^2 b^4 + \frac{\Omega(\gamma)f(1)}{m} b^{-1}.
\end{equation}

Differentiating $J(b)$ with respect to $b$ and setting the derivative to zero yields the first-order condition:
\[
\frac{dJ}{db} = 4 c_2^2 \gamma^2 b^3 - \frac{\Omega(\gamma) f(1)}{m b^2} = 0.
\]

Rearranging the terms to solve for $b$:
\[
4 c_2^2 \gamma^2 b^5 = \frac{\Omega(\gamma) f(1)}{m} 
\implies 
b^5 = \frac{\Omega(\gamma) f(1)}{4 m \gamma^2 c_2^2}.
\]

Thus, the optimal bandwidth choice that minimizes the MSE is:
\begin{equation}
\label{eq:b_opt}
b_{\mathrm{opt}} = \left( \frac{\Omega(\gamma) f(1)}{4 m \gamma^2 c_2^2} \right)^{1/5}.
\end{equation}

The optimal bandwidth $b_{\mathrm{opt}}$ derived in \Cref{eq:b_opt} depends on the unknown second-order bias constant $c_2 = f'(1) + f''(1)$ and the boundary density $f(1)$. In practice, we adopt the Beta$(c,1)$ distribution as a flexible reference family to approximate the local behavior of the density near $p=1$. The reference density is given by:
\begin{equation}
g(t; c) = c t^{c-1}, \quad t \in [0,1], \quad c > 0.
\end{equation}
The parameter $c$ characterizes the slope and curvature at the boundary. Given the sample $\{p_j\}_{j=1}^m$, the Maximum Likelihood Estimator (MLE) for $c$ is:
\begin{equation}
\label{eq:c_mle}
\hat{c} = -\frac{m}{\sum_{j=1}^m \ln(p_j)}.
\end{equation}

Under this reference model, the boundary values of the density and its derivatives are:
\[
g(1) = c, \qquad 
g'(1) = c(c-1), \qquad 
g''(1) = c(c-1)(c-2).
\]
Substituting these expressions into the definition of the bias constant $c_2$, we obtain:
\begin{align}
c_2 &\approx g'(1) + g''(1) \nonumber \\
&= c(c-1) + c(c-1)(c-2) \nonumber \\
&= c(c-1) \big[ 1 + (c-2) \big] \nonumber \\
&= c(c-1)^2.
\end{align}

Consequently, we estimate $c_2$ by plugging in the MLE $\hat{c}$:
\begin{equation}
\label{eq:c2_plug_in}
\hat{c}_2 = \hat{c}^{\,2}(\hat{c}-1).
\end{equation}
Substituting $\hat{c}_2$ and $\hat{f}(1) \approx \hat{c}$ into \Cref{eq:b_opt} provides the fully empirical optimal bandwidth used in our algorithm.

\subsection{Hyperparameter $\gamma$ Selection}
\label{app:implementation}

While the theoretical derivation establishes the optimal order $O(m^{-1/5})$ for the bandwidth $b$, the practical performance of the JKBB estimator relies on the specific choice of the scaling factor $\gamma$ and the estimation of the bias constants. To address this, we detail a stability-based selection method followed by the complete algorithmic procedure.

To select the optimal $\gamma$ from a candidate grid $\Gamma$, we employ a stability scoring mechanism that balances statistical power with robustness. This approach involves performing subsampling (or bootstrapping) for each candidate $\gamma \in \Gamma$ to generate a distribution of estimates $\{\widehat{\pi}_0^{(k)}(\gamma)\}$. From these subsamples, we compute the mean estimate $\mu_{\widehat{\pi}_0}(\gamma)$ and a normalized standard deviation $\sigma_{\widehat{\pi}_0}(\gamma)$, representing the estimator's instability. The optimal $\gamma^*$ is then determined by minimizing a penalized objective function:
\begin{equation}
\label{eq:stability_score}
\gamma^* = \operatorname*{arg\,min}_{\gamma \in \Gamma} \left( \mu_{\widehat{\pi}_0}(\gamma) + \lambda \cdot \sigma_{\widehat{\pi}_0}(\gamma) \right),
\end{equation}
where $\lambda$ is a penalty parameter. This criterion simultaneously penalizes inflated estimates of $\pi_0$ (preserving power) and high variance (ensuring stability against data perturbations). In this paper, we set the $\lambda=1$ by default.

\subsection{Consistency of the JKBB Estimator}

We summarize the above analysis in the following theorem.

\begin{proposition}
[Consistency of the JKBB Estimator]
\label{prop:jkbb_consistency}
Suppose the density $f$ is twice continuously differentiable with bounded derivatives in a neighborhood of $p=1$. 
Let the bandwidth sequence $b_m$ satisfy $b_m \to 0$ and $m b_m \to \infty$ as $m \to \infty$.

Then, the Jackknife-corrected Beta boundary estimator is consistent:
\[
\widehat f_{\mathrm{jk}}(1) \xrightarrow{P} f(1).
\]
\end{proposition}

\begin{proof}
Based on the bias and variance derivations in \Cref{app:var_mse}, the mean squared error of the estimator admits the expansion
\[
\mathbb{E}\big[ (\widehat f_{\mathrm{jk}}(1) - f(1))^2 \big]
= O(b_m^4) + O\!\left(\frac{1}{m b_m}\right).
\]
Under the assumed conditions $b_m \to 0$ and $m b_m \to \infty$, both terms vanish as $m \to \infty$. 
The convergence of the mean squared error to zero implies convergence in probability by Chebyshev's inequality, establishing consistency.
\end{proof}

For reference, we next derive the convergence rate of the JKBB estimator.
Balancing the squared bias term $O(b_m^4)$ and the variance term $O((m b_m)^{-1})$ in the mean squared error yields the optimal bandwidth scaling $b_m \asymp m^{-1/5}$.
Substituting this choice into the MSE expansion gives
\[
\mathbb{E}\big[ (\widehat f_{\mathrm{jk}}(1) - f(1))^2 \big]
= O\!\left(m^{-4/5}\right).
\]
By Markov's inequality, this $L_2$ bound implies the following probabilistic convergence rate:
\[
\left| \widehat f_{\mathrm{jk}}(1) - f(1) \right|
= O_p\!\left(m^{-2/5}\right).
\]

\section{Detail Derivation for Adjusted Moment Estimator}
\label{appendix:moment_estimator}
We first define the raw moment estimator, $\hat{\pi}_{0, \text{raw}}$, as follows:
\begin{equation}
    \hat{\pi}_{0, \text{raw}} = \frac{\hat{\mu}_1 - \hat{\mu}_{\text{test}}}{\hat{\mu}_1 - \hat{\mu}_0},
\end{equation}
where $\hat{\mu}_0, \hat{\mu}_1, \text{ and } \hat{\mu}_{\text{test}}$ are the sample means from their respective datasets. By the Law of Large Numbers and the Continuous Mapping Theorem, $\hat{\pi}_{0, \text{raw}}$ is a consistent estimator of the true proportion $\pi_0$, i.e., $\hat{\pi}_{0, \text{raw}} \xrightarrow{p} \pi_0$.


Although $\hat\pi_{0,\mathrm{raw}}$ is consistent for $\pi_0$,
the procedure uses the reciprocal $1/\hat\pi_{0,\mathrm{raw}}$ through the effective BH level
$\alpha/\hat\pi_{0,\mathrm{raw}}$.
Since $x\mapsto 1/x$ is convex on $(0,\infty)$, Jensen's inequality yields
\[
\mathbb{E}\!\left[\frac{1}{\hat\pi_{0,\mathrm{raw}}}\right]
\;\ge\;
\frac{1}{\mathbb{E}[\hat\pi_{0,\mathrm{raw}}]}
\approx
\frac{1}{\pi_0},
\]
so the effective level is inflated on average:
$\mathbb{E}[\alpha/\hat\pi_{0,\mathrm{raw}}]\gtrsim \alpha/\pi_0$.
This inflation is in the liberal direction and can lead to invalid \metricabbr{} control in finite samples.


To quantify this bias, we perform a second-order Taylor expansion of the function $f(\hat{\pi}_{0, \text{raw}}) = 1/\hat{\pi}_{0, \text{raw}}$ around the true value $\pi_0$:
\begin{equation*}
    \frac{1}{\hat{\pi}_{0, \text{raw}}} \approx \frac{1}{\pi_0} - \frac{1}{\pi_0^2}(\hat{\pi}_{0, \text{raw}} - \pi_0) + \frac{1}{\pi_0^3}(\hat{\pi}_{0, \text{raw}} - \pi_0)^2.
\end{equation*}
Taking the expectation of both sides, we get:
\begin{equation*}
    E\left[\frac{1}{\hat{\pi}_{0, \text{raw}}}\right] \approx E\left[\frac{1}{\pi_0}\right] - \frac{1}{\pi_0^2}E[\hat{\pi}_{0, \text{raw}} - \pi_0] + \frac{1}{\pi_0^3}E[(\hat{\pi}_{0, \text{raw}} - \pi_0)^2].
\end{equation*}
Since $\hat{\pi}_{0, \text{raw}}$ is asymptotically unbiased, $E[\hat{\pi}_{0, \text{raw}} - \pi_0] \approx 0$. By the definition of variance, $E[(\hat{\pi}_{0, \text{raw}} - \pi_0)^2] \approx \text{Var}(\hat{\pi}_{0, \text{raw}})$. Thus, the equation simplifies to:
\begin{equation*}
    E\left[\frac{1}{\hat{\pi}_{0, \text{raw}}}\right] \approx \frac{1}{\pi_0} + \frac{\text{Var}(\hat{\pi}_{0, \text{raw}})}{\pi_0^3}.
\end{equation*}
The term $\text{Var}(\hat{\pi}_{0, \text{raw}})/\pi_0^3$ is the leading source of bias. It is a positive value, indicating that the naive estimator usually overestimates $1/\pi_0$. We define $g(\mu_0, \mu_1, \mu_{\text{test}}) = \frac{ \mu_1- \mu_{\text{test}}}{\mu_1 - \mu_0}$. According to the first-order Delta method, the variance of $\hat{\pi}_{0, \text{raw}}$ can be approximated as:
\begin{align*}
\text{Var}(\hat{\pi}_{raw}) &\approx \left(\frac{\partial g}{\partial \mu_0}\right)^2 \text{Var}(\hat{\mu}_0) + \left(\frac{\partial g}{\partial \mu_1}\right)^2 \text{Var}(\hat{\mu}_1) + \left(\frac{\partial g}{\partial \mu_{\text{test}}}\right)^2 \text{Var} \\
&= \frac{1}{(\mu_1 - \mu_0)^2} \left[ \pi_0^2 \frac{\sigma_0^2}{n_0} + (1-\pi_0)^2 \frac{\sigma_1^2}{n_1} + \frac{\sigma_{\text{test}}^2}{m} \right],
\end{align*}
 where $\sigma_i^2$ is the population variance and $n_i$ is the sample size.

Finally, to obtain the estimator for the variance, $\widehat{\text{Var}}(\hat{\pi}_{0, \text{raw}})$, we replace all unknown population parameters with their corresponding sample estimates:
\begin{equation*}
    \widehat{\text{Var}}(\hat{\pi}_{0, \text{raw}}) = \frac{1}{(\hat{\mu}_1 - \hat{\mu}_0)^2} \left[ \hat{\pi}_{0, \text{raw}}^2 \frac{\hat{\sigma}_0^2}{n_0} + (1-\hat{\pi}_{0, \text{raw}})^2 \frac{\hat{\sigma}_1^2}{n_1} + \frac{\hat{\sigma}_{\text{test}}^2}{m} \right]
\end{equation*}

Thus, we construct a corrected estimator, $\hat{\theta}_{1/\pi_0}$, which is designed to subtract the estimated leading bias term:
\begin{equation}
    \hat{\theta}_{1/\pi_0} = \frac{1}{\hat{\pi}_{0, \text{raw}}} - \frac{\widehat{\text{Var}}(\hat{\pi}_{0, \text{raw}})}{\hat{\pi}_{0, \text{raw}}^3}
\end{equation}


As established in \Cref{prop:consistency}, our estimator is consistent, meaning $\hat{\pi}_{\text{mom}} \xrightarrow{p} \pi_{\text{test}}$ as the sample sizes grow. This implies that $\hat{\pi}_0 \xrightarrow{p} \pi_0$, and by the continuous mapping theorem, the ratio $\frac{\pi_0}{\hat{\pi}_0}$ converges in probability to 1. This ensures that the \metricabbr{} is controlled asymptotically.


\subsection{The Proof of \Cref{theorem:mom_asymptotic_fdr}}
\label{appendix:proof_mom_fdr}
\begin{proof}
The \abbr{} procedure applies the BH procedure at level $\alpha$ to the scaled p-values
$\tilde p_j = \hat\pi_{0,m} p_j$, where
$\hat\pi_{0,m} = 1 - \hat\pi_{\mathrm{mom}}$.
By Proposition~\ref{prop:consistency}, we have
\[
\hat\pi_{0,m} \xrightarrow{P} \pi_{0,\mathrm{test}} = 1 - \pi_{\mathrm{test}} .
\]

Therefore, the conditions of
Proposition~\ref{prop:plugin_fdr} are satisfied with
$\pi_{0,\infty} = \pi_{0,\mathrm{test}}$.
Applying Proposition~\ref{prop:plugin_fdr} yields
\[
\limsup_{n,m\to\infty} \mathrm{\metricabbr{}}_{n,m}
\;\leqslant\;
\frac{\pi_{0,\mathrm{test}}}{\pi_{0,\infty}} \alpha
\;=\; \alpha .
\]
This completes the proof.
\end{proof}

\begin{table}[t]
\centering
\caption{The test set and calibration set used in the main experiments.}
\label{tab:dataset-stats}
\begin{tabular}{llccc}
\toprule
\textbf{Dataset} & \textbf{Type} & \textbf{Member} & \textbf{Non-member} & \textbf{Total} \\
\midrule

\multirow{3}{*}{WikiMIA} & Test set & 460 & 398 & 858 \\
 & Calibration set & / & 398 & 398 \\
 & Original Dataset & 861 & 789 & 1,650 \\
 
\midrule
\multirow{3}{*}{ArXivTection} & Test set & 381 & 393 & 774 \\
 & Calibration set & / & 393 & 393 \\
 & Original dataset & 762 & 786 & 1,548 \\
\midrule
\multirow{3}{*}{VL-MIA/Flickr} & Test set & 150 & 150 & 300 \\
 & Calibration set & / & 150 & 150 \\
 & Original dataset & 300 & 300 & 600 \\
\midrule
\multirow{3}{*}{VL-MIA/DALL-E} & Test set & 148 & 148 & 296 \\
 & Calibration set & / & 148 & 148 \\
 & Original dataset & 296 & 296 & 592 \\ 
\midrule
\multirow{3}{*}{XSum} & Test set & 2,790 & 2,875 & 5,665 \\
 & Calibration set & / & 2,875 & 2,875 \\
 & Original dataset & 5,581 & 5,751 & 11,332 \\
\midrule
\multirow{3}{*}{BBC Real Time} & Test set & 1,638 & 1,674 & 3,312 \\
 & Calibration set & / & 1,674 & 1,674 \\
 & Original dataset & 3,277 & 3,349 & 6,626 \\
\bottomrule
\end{tabular}
\end{table}

\section{Experimental Details}
\label{appendix:exp_detail}
Recall from \Cref{eq:def_fdr} that the \metricfull{} (\metricabbr{}) is defined as the expectation of the false identification proportion,
\[
\mathrm{\fdpabbr{}} := \frac{\sum_{j = 1}^{m} \mathds{1}\{M_{n+j} = 0,\, j \in \mathcal{S}\}}{\max(|\mathcal{S}|, 1)}.
\]
In our experiments, we report the empirical \metricabbr{}  by averaging \fdpabbr{} over 1000 times of \Cref{alg:ctds_formal}. 

As for the experiment on VLMs, the generated token length is set to be 32.

\subsection{Data Split Setup}
Unless otherwise specified (e.g., when varying $\pi_{\text{test}}$), in each trial we randomly split the dataset into two equal halves. All non-members from one half are used to construct the calibration set $\mathcal{D}_{\text{cal}}$, while the other half serves as the test set $\mathcal{D}_{\text{test}}$. The detailed information of the constructed dataset is presented in \Cref{tab:dataset-stats}.

For the experiment in \Cref{fig:fdr_power_comparison_minigpt4_flickr}, in each trial, we further subsample the test set according to $\pi_{\text{test}}$. Specifically, we select $\pi_{\text{test}} \cdot |\mathcal{D}_{\text{test}}^{0}|$ points from $\mathcal{D}_{\text{test}}^{1}$ and $(1-\pi_{\text{test}}) \cdot |\mathcal{D}_{\text{test}}^{0}|$ points from $\mathcal{D}_{\text{test}}^{0}$, where $\mathcal{D}_{\text{test}}^{0}$ and $\mathcal{D}_{\text{test}}^{1}$ denote the non-member and member subsets of $\mathcal{D}_{\text{test}}$, respectively. 

For the adjusted moments estimator experiment in \Cref{fig:moment}, we again split the dataset into two equal halves, assigning one to $\mathcal{D}_{\text{test}}$ and the other to $\mathcal{D}_{\text{cal}}$.

\subsection{Adaptive \metricabbr{} Controlling Procedures Baselines}

\label{appendix:baselines}

The standard Benjamini-Hochberg (BH) method typically assumes a conservative null proportion of $\pi_0 = 1$. To improve statistical power while maintaining \metricabbr{} control, adaptive procedures estimate the true proportion of null hypotheses, $\pi_0 = m_0/m$, directly from the data. We compare our approach against three established adaptive baselines:

\paragraph{BH-Storey.} \citet{storey2002direct} proposed estimating $\pi_0$ using a fixed tuning parameter $\lambda \in [0, 1)$. This method relies on the intuition that p-values corresponding to true null hypotheses are uniformly distributed, so those exceeding $\lambda$ are mostly nulls. The estimator is defined as:
\begin{equation}
    \hat{\pi}_0^{\mathrm{Storey}}(\lambda) = \frac{1 + \sum_{i=1}^m \mathds{1} \{p_i \ge \lambda\}}{m(1-\lambda)}, \quad \lambda \in (0, 1)
\end{equation}
where $\mathds{1}\{\cdot\}$ is the indicator function. A common choice is $\lambda = 0.5$. The estimated $\hat{\pi}_0$ is then plugged into the BH procedure to adjust the rejection threshold.

\paragraph{Quantile-BH.} \citet{benjamini2000adaptive} introduced a graphical approach to estimate $\pi_0$ based on the quantile plot of sorted p-values. This method, often referred to as the "Lowest Slope Estimator", identifies a change point in the slope of the sorted p-values $p_{(1)} \le \dots \le p_{(m)}$. The estimator is given by:
\begin{equation}
    \hat{\pi}_0^{\mathrm{Quant}}(k_0) = \frac{m - k_0 + 1}{m(1 - p_{(k_0)})}, \quad k_0 \in \{1, \dots, m\}
\end{equation}
Here, $S_k \;=\; \frac{1 - p_{(k)}}{m - k + 1}$ and the index $k_0$ is selected via a step-down stopping rule that searches for the first $k$ such that $S_k < S_{k-1}$, indicating a deviation from the linear behavior expected under the null and thus separating the signal tail from the null p-value distribution. 

\paragraph{BKY (Two-Stage BH).} Benjamini, Krieger, and Yekutieli \citeyearpar{benjamini2006adaptive} proposed a two-stage linear step-up procedure (BKY) that avoids the need for manual tuning parameters like $\lambda$. In the first stage, the standard BH procedure is run at a stricter level $\alpha' = \alpha / (1+\alpha)$. The number of rejections $r_1$ from this stage is used to estimate the null proportion:
\begin{equation}
    \hat{\pi}^{\mathrm{BKY}}_0 = (m - r_1)/m
\end{equation}
 This estimate is then used in a second stage to scale the p-values.

\section{Additional Results}

\subsection{Evaluation of JKBB Estimator}
\label{appendix:jkbb_estimate_result}

We evaluate the accuracy of the JKBB estimator for the test-set usage proportion $\pi_{\text{test}}$ on GPT-NeoX-20B using the ArXivTection dataset.
For each target proportion $\pi \in \{0.1, 0.3, 0.5, 0.7, 0.9\}$, we construct test sets with the corresponding mixture of members and non-members, compute conformal p-values as in \Cref{eq:pvalue}, and apply the JKBB estimator defined in \Cref{eq:pi_hat_jk}.
The reported results are averaged over multiple random trials.

\Cref{tab:gpt_neox_20b_absbias_mse_comparison} reports the absolute bias $|\mathbb{E}[\hat{\pi}_{\mathrm{bdy}}]-\pi|$ and the mean squared error (MSE) $\mathbb{E}[(\hat{\pi}_{\mathrm{bdy}}-\pi)^2]$ across different detection scores.
Overall, the JKBB estimator exhibits small bias and low MSE over a wide range of $\pi$ values, demonstrating its robustness to varying test-set composition.
n particular, detection scores that more effectively separate members from non-members (e.g., MIN-K\%) result in more accurate estimation.



\subsection{Result on Vision Language Models}
\label{sec:appendix:llava}
We further evaluate our method on training data identification for vision–language models (VLMs). Following prior work \citep{li2024membership}, we conduct experiments on the VL-MIA/Flickr and VL-MIA/DALL-E using LLaVA-1.5 \citep{liu2023visual}. The detection score $T(X)$ is computed via the MaxR\'{e}nyi-K\% statistic \citep{li2024membership}, with hyperparameters $K=100$ and $\gamma=0.5$. As shown in \Cref{fig:fdr_control_llava}, our method consistently controls the \metricfull{} (\metricabbr{}), with the realized \metricabbr{} remaining below the nominal level $\alpha$ across all settings.

\begin{table*}[t!]
\centering
\caption{Bias and MSE of the JKBB estimator, evaluated with GPT-NeoX-20B on the ArxivTection dataset.
}
\label{tab:gpt_neox_20b_absbias_mse_comparison}
\vspace{5pt}
    \renewcommand\arraystretch{1.2}
    \resizebox{0.90\textwidth}{!}{
\begin{tabular}{lcc cc cc cc cc}
\toprule
\multirow{2}{*}{\textbf{Method}} & \multicolumn{2}{c}{\textbf{$\pi=0.1$}} & \multicolumn{2}{c}{\textbf{$\pi=0.3$}} & \multicolumn{2}{c}{\textbf{$\pi=0.5$}} & \multicolumn{2}{c}{\textbf{$\pi=0.7$}} & \multicolumn{2}{c}{\textbf{$\pi=0.9$}} \\
\cmidrule(lr){2-3} \cmidrule(lr){4-5} \cmidrule(lr){6-7} \cmidrule(lr){8-9} \cmidrule(lr){10-11}
 & $|\text{Bias}|$ & MSE & $|\text{Bias}|$ & MSE & $|\text{Bias}|$ & MSE & $|\text{Bias}|$ & MSE & $|\text{Bias}|$ & MSE \\
\midrule
Perplexity & 0.048 & 0.018 & 0.006 & 0.011 & 0.031 & 0.007 & 0.054 & 0.007 & 0.068 & 0.007 \\
Zlib & 0.030 & 0.018 & 0.046 & 0.022 & 0.113 & 0.032 & 0.173 & 0.047 & 0.214 & 0.058 \\
MIN-K\% & 0.055 & 0.019 & 0.011 & 0.010 & 0.006 & 0.006 & 0.009 & 0.003 & 0.015 & 0.001 \\
M-Entropy & 0.053 & 0.019 & 0.001 & 0.011 & 0.032 & 0.007 & 0.050 & 0.006 & 0.067 & 0.006 \\
\bottomrule
\end{tabular}
}
\end{table*}

\begin{figure}[t]
    \centering
    \begin{subfigure}[b]{0.48\textwidth}
        \centering
        \includegraphics[width=\textwidth]{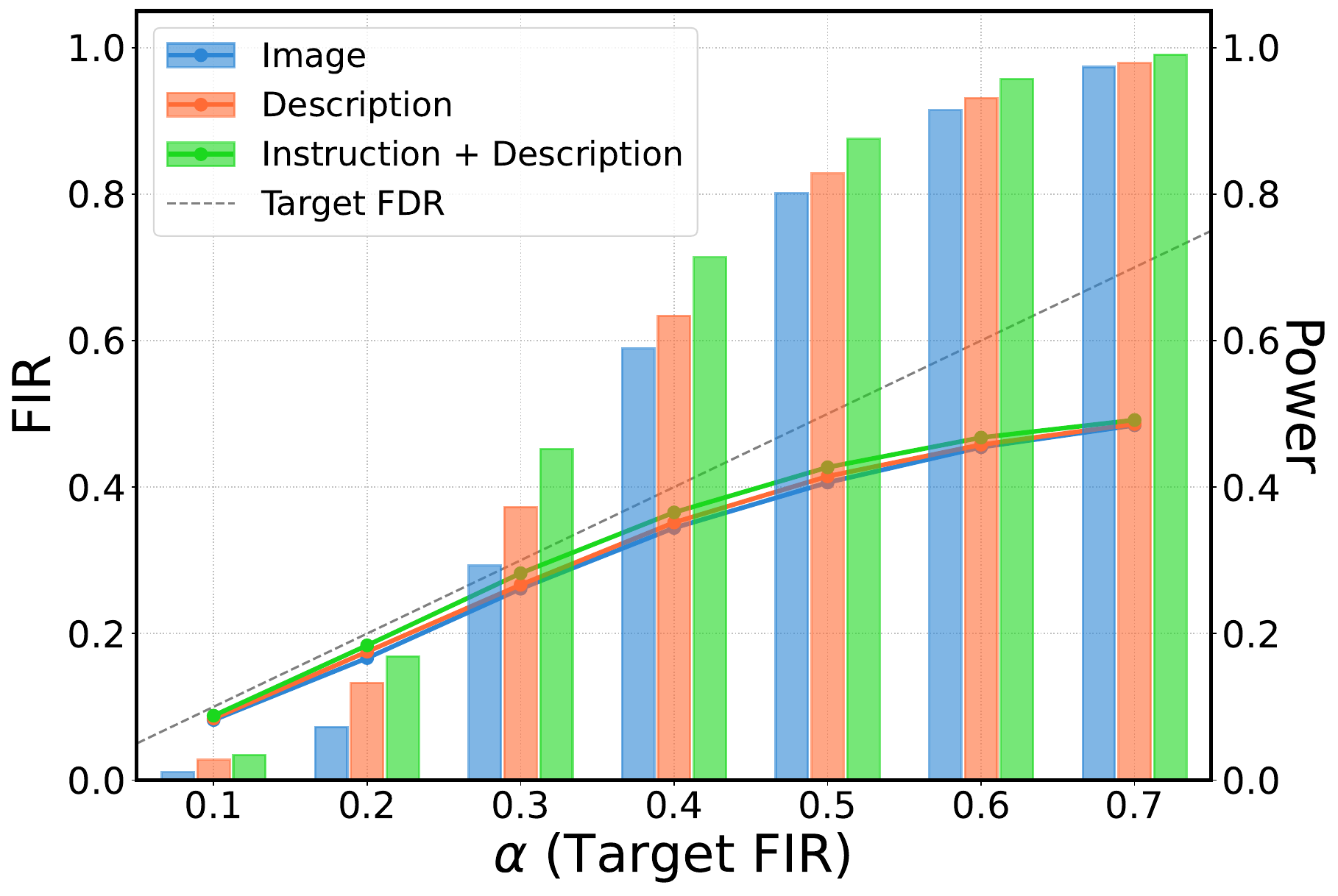}
        \caption{VL-MIA/Flickr}
    \end{subfigure}
    \hfill
    \begin{subfigure}[b]{0.48\textwidth}
        \centering
        \includegraphics[width=\textwidth]{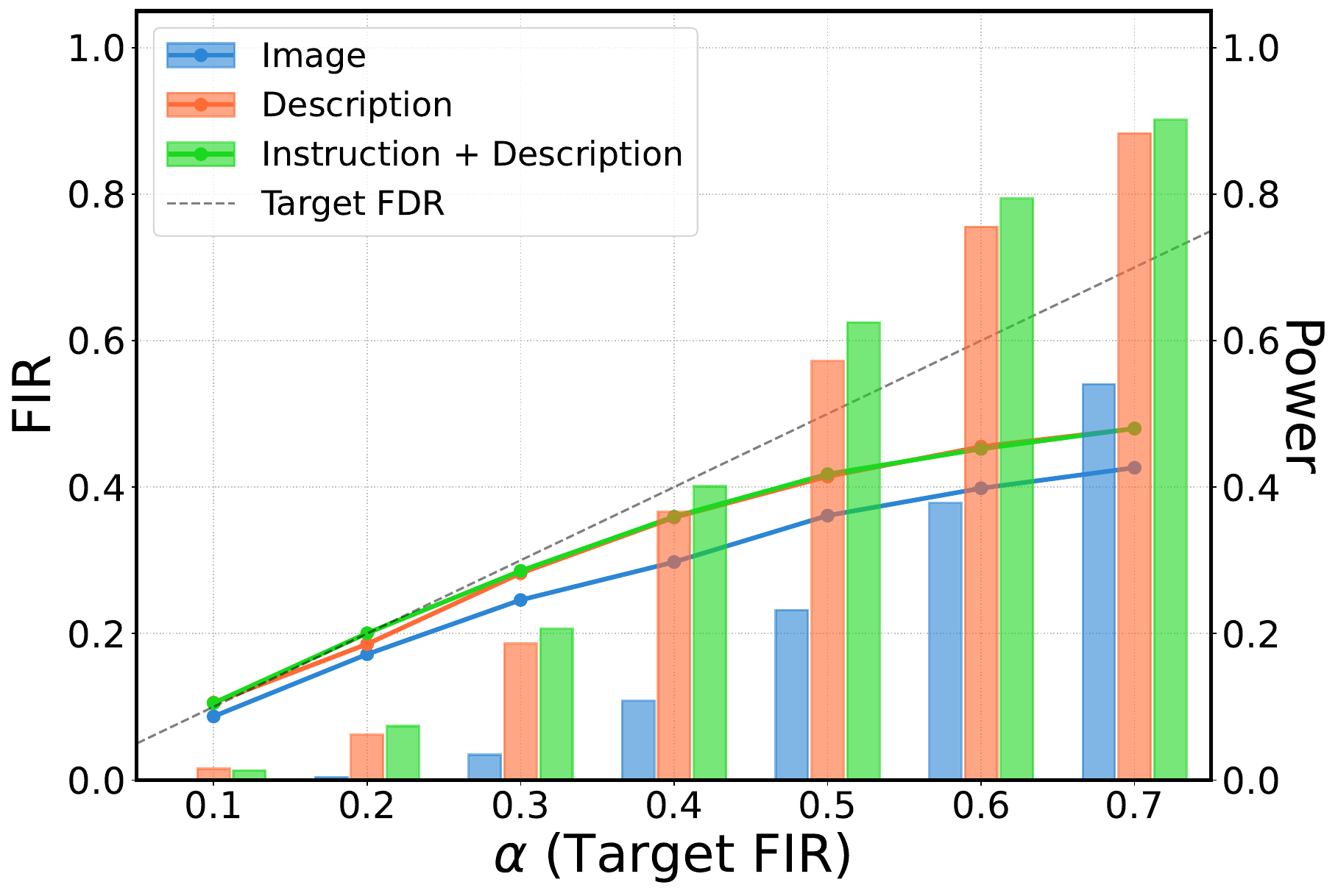}
        \caption{VL-MIA/DALL-E}
    \end{subfigure}
    \caption{\metricabbr{} (solid lines) and power (bars) achieved by our method on LLaVA-1.5. Results are based on the MaxR\'{e}nyi-K\% score computed from three types of inputs: image embeddings, generated descriptions, and instructions concatenated with descriptions.
    }
    \label{fig:fdr_control_llava}
\end{figure}

\section{From Instance-wise Inference to Multiple Testing: The MIA Dilemma}
\label{sec:mia_dilemma}
\subsection{The Ideal: A Rigorous Single-Point Test} 
Given a data point $X$ and a trained target model $\theta_1$, membership inference attacks (MIAs) \citep{shokri2017membership, yeom2018privacy, salem2018ml} aim to identify whether $X$ is one of the members in the training set $\mathcal{D}_{\text{train}}$. This type of privacy attack is often modeled as a statistical hypothesis testing problem \citep{ye2022enhanced, carlini2022membership, bertran2024scalable, zarifzadeh2024low}:
\begin{equation} \label{eq:original_hypothesis}
    H_{0} : X \notin \mathcal{D}_{\text{train}}  \quad \text{v.s.} \quad H_{1}: X \in \mathcal{D}_{\text{train}} .
\end{equation} 
Here, the null hypothesis ($H_{0}$) posits that $X$ is a non-member, meaning it was drawn from the same underlying data distribution as $\mathcal{D}_{\text{train}}$ but was not included in it. Conversely, the alternative hypothesis ($H_{1}$) posits that $X$ is a member of the training set.

To reject the null hypothesis, we compute membership inference attack (MIA) scores, such as the model's loss or confidence on data point $X$. For instance, let $T(X, \theta)$ represent the loss of $X$ produced by model $\theta$, then we can reject $H_0$ when $T(X, \theta) \leqslant\tau$ \footnote{Lower loss suggests that $X$ was likely to be a part of the training set.}. To control the type I error, which is identical to the false positive rate (FPR) at the sample level, we choose $\tau$ such that
\begin{equation}
   \Pr_{\theta\sim \Theta_0}[ T(X, \theta) \leqslant \tau ] \leqslant \alpha
\end{equation}
where $\Theta_0$ is the distribution over model parameters when trained on datasets that do not contain $X$ (under $H_0$), $\tau$ is the threshold used to define the rejection region.
In practice, sampling \(\theta\) typically requires training multiple reference models \citep{ye2022enhanced} or constructing Bayesian neural networks from a single reference model \citep{liu2026how}.


However, estimating this reject region is challenging for large-scale models, such as ChatGPT or DALL-E, due to limited access to the training data distribution and the training algorithm, compounded by the prohibitively high computational cost of training \citep{zhang2025position}.

\subsection{The Reality: A Heuristic with a Conceptual Flaw}

Several studies on MIAs applied to large language models and vision-language models \citep{fu2024membership, carlini2021extracting, shi2024detecting, zhang2025finetuning} report true positive rates (TPR) at low FPRs using only MIA scores from the target model. This is typically achieved via a heuristic method: a single score threshold $\tau$ is determined on a calibration set of non-members to control the average FPR\citep{ye2022enhanced}. Formally, this metric can be written as the expected Type I error:
\begin{equation} \label{eq:avg_fpr}
\text{FPR} = \mathbb{E}_{X} \left[ \Pr[ T(X, \theta_1) \leqslant \tau \mid H_0] \right].
\end{equation}

Herein lies a subtle but critical conceptual flaw. The average FPR conflates the overall error rate with the per-hypothesis Type I error rate, $\Pr[ T(X, \theta_1) \leqslant \tau \mid H_0 ]$. As an average metric, it does not provide a probabilistic guarantee for any single inference. A low average FPR can mask a much higher error rate for specific subgroups of data, offering no reliable evidence for any individual decision.

Furthermore, in practical scenarios like copyright infringement litigation, where the goal is to identify a reliable set of members from many candidates, the average FPR remains inappropriate. This scenario is a classic multiple testing problem. In this setting, the objective is not to manage an average error rate over all non-members, but to control the fraction of incorrect claims among the discoveries made. This is precisely the quantity measured by the \metricfull{} (\metricabbr{}), the statistically sound tool for this task.

The distinction between these metrics is critical in practice. Consider an audit of one million candidates containing only 1,000 true members. A method with seemingly excellent performance, such as a 0.1\% average FPR and 80\% TPR, would nevertheless yield approximately 999 false positives alongside 800 true positives. The resulting set's \metricabbr{} would be an untenable 55.5\%, meaning over half the presented evidence is incorrect. This starkly demonstrates that average FPR is a fundamentally inadequate and misleading metric for ensuring the credibility of membership inference claims in a legal or auditing context.

\end{document}